\Crefname{algocf}{Algorithm}{Algorithms}
\crefname{algocfline}{line}{lines}
\Crefname{invariant}{Invariant}{Invariants}
\Crefname{claim}{Claim}{Claims}
\Crefname{subclaim}{Subclaim}{Subclaims}
\definecolor{DarkGray}{rgb}{0.66, 0.66, 0.66}
\definecolor{DarkPowderBlue}{rgb}{0.0, 0.2, 0.6}
\definecolor{fluorescentyellow}{rgb}{0.8, 1.0, 0.0}
\newcounter{note}[section]
\newcommand{\initOneLiners}{%
    \setlength{\itemsep}{0pt}
    \setlength{\parsep }{0pt}
    \setlength{\topsep }{0pt}
}
  \def\\{}%
  \def\texttt#1{<#1>}%
  \def\textsf#1{<#1>}%
  \def\mathsf#1{<#1>}%
  \def\ensuremath#1{#1}%
  \def\xspace{}%
  \def\Cref#1{<Label:#1>}%
  \def\eqref#1{<Eq.:#1>}%
\newtheorem{theorem}{Theorem}[section]
\newtheorem{lemma}[theorem]{Lemma}
\newtheorem{claim}[theorem]{Claim}
\newtheorem{fact}[theorem]{Fact}
\newtheorem{corollary}[theorem]{Corollary}
\theoremstyle{definition}
\newtheorem{defn}[theorem]{Definition}
\theoremstyle{remark}
\newtheorem{remark}[theorem]{Remark}
\renewcommand{\theinvariant}{(I\@arabic\c@invariant)}
\newcommand{\opt}{{\textsf{opt}}}
\newcommand{\poly}{\operatorname{poly}}
\renewcommand{\emptyset}{\varnothing}
\newcommand{\CH}{{\mathsf {CH}}}
\newcommand{\dist}{{\mathsf{dist}}}
\newcommand{\junk}[1]{}
\newcommand{\eat}[1]{}
\newif\ifhideproofs
\newcommand{\cE}{{\cal E}}
\newcommand{\dst}{{\mathsf{dist}}}
\newcommand{\prob}{{\mathbf{Pr}}}
\newcommand{\lspan}{{\mathsf{Span}}}
\newcommand{\ch}{{\mathsf{CH}}}
\newcommand{\Haus}{{\mathsf{Haus}}}
\newcommand{\dia}{\Delta}
\newcommand{\eenv}{{\varepsilon{{\mathsf{\mbox{-}ENV}}}}}
\newcommand{\edenv}{{(\varepsilon, \delta){{\mathsf{\mbox{-}ENV}}}}}
\newcommand{\edenvp}{{(\varepsilon', \delta'){{\mathsf{\mbox{-}ENV}}}}}
\newcommand{\ech}{{\varepsilon{{\mathsf{\mbox{-}CH}}}}}
\newcommand{\echp}{{\varepsilon'{{\mathsf{\mbox{-}CH}}}}}
\newcommand{\bM}{{\bf M}}
\newcommand{\bP}{{\bf P}}
\newcommand{\bA}{{\bf A}}
\newcommand{\hatM}{{\widehat M}}
\newcommand{\hatA}{{\widehat A}}
\newcommand{\hatP}{{\widehat P}}
\newcommand{\hatK}{{\widehat K}}
\newcommand{\uell}{{u^{(\ell)}}}
\newcommand{\lkp}{{\mathsf {LkP}}\xspace}
\newcommand{\rsh}{{\mathsf {RSH}}\xspace}
\newcommand{\sht}{{\mathsf {SHT}}\xspace}
\newcommand{\olp}{{\mathsf {OLP}}\xspace}
\newcommand{\kolp}{k\mbox{-}\olp}
\newcommand{\ListLearn}{\mathsf {ListLearn}\xspace}
\newcommand{\Hausdorf}{\mathsf {Hausdorff}\xspace }
\newcommand{\Mdot}[1]{{M_{\cdot, #1}}}
\newcommand{\hMdot}[1]{{\hatM_{\cdot, #1}}}
\newcommand{\Pdot}[1]{{P_{\cdot, #1}}}
\newcommand{\Adot}[1]{{A_{\cdot, #1}}}
\newcommand{\hAdot}[1]{{\hatA_{\cdot, #1}}}
\newcommand{\hPdot}[1]{{\hatP_{\cdot, #1}}}
\newcommand{\sep}{{\mathsf{SepOr}}}
\newcommand{\opto}{{\mathsf{OptOr}}}
\begin{document}
\title{Random Separating Hyperplane Theorem and Learning Polytopes}
\author{Chiranjib Bhattacharyya \and Ravindran Kannan \and Amit Kumar}
\maketitle
\begin{abstract}
The Separating Hyperplane theorem  is a fundamental result in Convex Geometry with myriad applications. The theorem asserts that for a point $a$ not in a closed convex set $K$, there is a hyperplane with $K$ on one side and $a$ strictly on the other side. Our first result, Random Separating Hyperplane  Theorem ($\rsh$), is a strengthening of this for polytopes. $\rsh$ asserts that if the distance between $a$ and a polytope $K$ with $k$ vertices and unit diameter in $\Re^d$ is at least $\delta$, where $\delta$ is a fixed constant in $(0,1)$, then a randomly chosen hyperplane separates $a$ and $K$ with probability at least $1/\poly(k)$ and  margin at least $\Omega \left( \delta/\sqrt{d} \right)$.
There is  a rich body of work on reductions between (approximate) optimization and (approximate) separation oracles for general convex sets, where the focus has been on the number of oracle calls. 
An immediate consequence of our result is the first near optimal bound on the error increase in the reduction from a Separation oracle to an Optimization oracle over a polytope.

$\rsh$  has algorithmic applications in learning polytopes. We consider a fundamental problem, denoted the ``$\Hausdorf$ problem'', of learning a unit diameter polytope $K$ within Hausdorff distance $\delta$,  given an optimization oracle for $K$.
Using $\rsh$, we show that with polynomially many random queries to the optimization oracle, $K$ can be approximated within error  $O(\delta)$. 
To our knowledge this is the first provable algorithm for the $\Hausdorf$ Problem.  Building on this result, we show that if the vertices of $K$ are well-separated, then an optimization oracle can be used to generate a list of points, each within Hausdorff distance $O(\delta)$  of $K$, with the property that the list contains a point close to each vertex of $K$. 
Further, we show how to prune this list to generate a (unique) approximation to each vertex of the polytope. We prove that in many latent variable settings, e.g., topic modeling, LDA,  optimization oracles do exist provided we project to a suitable SVD subspace. Thus, our work  yields the first efficient algorithm for finding approximations  to the vertices of the latent polytope under the well-separatedness assumption. This assumption states that each vertex of  $K$ is far from the convex hull of the remaining vertices of $K$, and is much weaker than other assumptions behind algorithms in the literature which find vertices of the latent polytope. 
\end{abstract}
\thispagestyle{empty}
\newpage
\setcounter{page}{1}
\section{Introduction}
The Separating Hyperplane theorem ($\sht$) is a fundamental result in Convex Geometry with myriad applications (see e.g.~\cite{bertsekas2009convex}). The theorem asserts that for a point $a$ not in a closed convex set $K$, there is a hyperplane with $K$ on one side and $a$ strictly on the other side.

This paper makes two main contributions.
Our theoretical contribution, which is an extension of the classical $\sht$,  is what we call the Random Separating Hyperplane Theorem ($\rsh$). Our main algorithmic contribution is to use $\rsh$ to prove that a natural algorithm, which we call the $\kolp$ algorithm, can learn (vertices of) latent polytopes arising in a number of problems in Latent Variable Models including Clustering, Mixture Learning , LDA (linear discriminant analysis), Topic Models.  The algorithmic result is shown by reducing the problem of learning latent polytopes in a variety of settings to that of constructing approximate optimization oracles for the corresponding polytopes. Bulk of our algorithmic contribution is in proving this reduction and the existence of such oracles.



\vspace*{-0.1in}
\subsection{Random Separating Hyperplane Theorem($\rsh$)}
$\rsh$ draws its motivation mainly from the Separating Hyperplane Theorem ($\sht$) of Convex Geometry. It also has connections to the Johnson-Lindenstrauss Random Projection theorem~\cite{jl} and reductions among oracles in Convex Optimization. 
%
%
The $\sht$ formally states that given a closed convex set $K$ and a point $a \notin K$, there exists
 a (non-zero) vector $u$ such that 
\begin{equation*} 
u\cdot a \, > \, \mbox{Max}_{y\in K} \, u\cdot y.
\end{equation*}
The following question arises: Does a randomly picked $u$ separate $a$ from $K$? Taking into account some necessary conditions for a positive answer, we can ask if the following inequality holds for a randomly chosen $u$: 
(here $\Delta$ is  the diameter of $K$, $a$ is 
at distance at least $\delta\Delta$
from $K$, where $\delta \in (0,1)$):
%
\begin{equation}\label{SHT-Q}
\prob\left( u\cdot a\geq \mbox{Max}_{y\in K}u\cdot y+|u|\alpha\delta\Delta\right)\; \geq \; 1/\mbox {poly}_\delta,\footnote{poly$_\delta(z)$ denotes $z^{\poly(1/\delta)}.$
}
\end{equation}
with $\alpha$ being as high as possible. 

The question \eqref{SHT-Q} is also motivated from the Johnson-Lindenstrauss Random Projection theorem (JL theorem)~\cite{jl} which states that if $a,b$ are points in ${\bf R}^d$ and $U$ is a random subspace of dimension $s$, then with probability bounded away from 0, the distance between the projection of $a$ and $b$ on $U$ is at least 
 $\Omega(|a-b|\sqrt s/\sqrt d)$. 
 The following natural generalization of this is interesting already for $s=1$: 

Instead of $b$ being a point, if it is now a polytope $K$, does a  similar lower bound on the distance of $a$ to $K$ in the projection onto a random line hold?  

It is easy to see that in spirit, this is the same question as whether \eqref{SHT-Q} holds.
It is also easy (see below) to see that the projection shrinks the distance between $a$ and $K$ by a factor of $\Omega^*(\sqrt d)$. 
The $\rsh$ theorem proves that the shrinkage is $O(\sqrt d)$ thus making this parameter nearly (within log factors) tight. We now state  the $\rsh$ theorem:
\begin{theorem}
[Random Separating Hyperplane Theorem($\rsh$): Informal version]  
\label{rsh-0} Suppose $K$ is a $k$ vertex polytope with diameter $\Delta$ and $a$ is a point at distance at least $\delta\Delta,$ $\delta\in (0,1)$, from 
$K$. Let $V$ be an $m-$dimensional subspace containing $K\cup \{ a\}$. For a random Gaussian vector $u\in V$, the following event happens with probability  at least 
1/poly$_\delta (k)$:
\begin{align}
\label{eq:rsh}
u\cdot a\geq \mbox{Max}_{y\in K}u\cdot y + \frac{\delta \Delta |u|}{10\sqrt m}.
\end{align}
\end{theorem}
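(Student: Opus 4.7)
The plan is to condition on the Gaussian vector $u$ having a large component along the canonical separating direction $w = (a-a^*)/|a-a^*|$, where $a^*$ is the nearest point in $K$ to $a$, and to absorb the remaining randomness via a Gaussian union bound over the $k$ vertices.

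\textbf{Step 1 (Setup).} Since the statement is scale-invariant, I assume $\Delta=1$. Put $D=|a-a^*|\ge \delta$ and $w=(a-a^*)/D$. The projection variational inequality $(a-a^*)\cdot(y-a^*)\le 0$ for $y\in K$ yields
\[
w\cdot(a-y)\ge D\quad\text{for every }y\in K,
\]
and in particular for every vertex $v_i$, while $|a-v_i|\le D+1$ by the triangle inequality.

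\textbf{Step 2 (Decomposition of $u$).} Write $u=cw+u'$, where $c=u\cdot w\sim N(0,1)$ and $u'$ is an independent isotropic Gaussian on $V\cap w^\perp$ (dimension $m-1$). For each vertex $v_i$, let $p_i$ be the projection of $a-v_i$ onto $w^\perp$. Then
\[
u\cdot(a-v_i)=c\cdot w\cdot(a-v_i)+u'\cdot p_i\ \ge\ cD+u'\cdot p_i,
\]
and $|p_i|^2=|a-v_i|^2-(w\cdot(a-v_i))^2\le(D+1)^2-D^2=2D+1$.

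\textbf{Step 3 (Conditioning and union bound).} I fix $c_0=C\sqrt{\log k}/\delta$ for a large absolute constant $C$. Since $c\sim N(0,1)$,
\[
\prob[c\ge c_0]\ \ge\ e^{-c_0^2/2}(1-o(1))\ =\ k^{-\Theta(1/\delta^2)}\ =\ 1/\poly_\delta(k).
\]
Conditional on $c\ge c_0$, each $u'\cdot p_i\sim N(0,|p_i|^2)$; a Gaussian tail estimate combined with a union bound over the $k$ vertices gives, with probability $\ge 1/2$,
\[
\max_i|u'\cdot p_i|\ \le\ \sqrt{4\log(2k)(2D+1)}.
\]

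\textbf{Step 4 (Combining and closing).} For $C$ sufficiently large, $c_0 D\ge C\sqrt{\log k}$ (using $D\ge \delta$) dominates the orthogonal term $\sqrt{4\log(2k)(2D+1)}$ by a factor $\ge 2$, so $u\cdot(a-v_i)\ge \tfrac12 c_0 D$ for every vertex. Standard $\chi^2$ concentration yields $|u|\le \sqrt{2(c_0^2+m)}$ with high probability. A short case split — depending on whether $m$ or $c_0^2$ dominates $|u|^2$ — then verifies the target inequality $u\cdot(a-v_i)\ge \delta|u|/(10\sqrt{m})$: when $c_0^2\lesssim m$ the ratio is $\Omega(\sqrt{\log k}/\sqrt{m})\ge \delta/(10\sqrt{m})$, and when $c_0^2\gtrsim m$ both numerator and denominator scale with $c_0$, leaving a ratio $\ge \delta/4\ge \delta/(10\sqrt{m})$ as long as $m\ge 1$. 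Intersecting the three events preserves probability $1/\poly_\delta(k)$.

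The hard part is the joint calibration in Step 4: $c_0$ must be large enough that $c_0 D$ beats the $\sqrt{\log k}$ noise accrued from the union bound over all $k$ vertices, yet not so large that $|u|$ outgrows $\sqrt{m}$ and inflates the separation target. These two constraints happen to balance exactly at $c_0=\Theta(\sqrt{\log k}/\delta)$, which is precisely why the success probability lands at $1/\poly_\delta(k)$ — matching the rate suggested by the JL intuition for a random one-dimensional projection.
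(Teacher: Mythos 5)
Your proof is correct and takes essentially the same route as the paper's: both decompose $u$ along the separating direction $w=(a-a^*)/|a-a^*|$ and its orthogonal complement, both condition on the $w$-component being of order $\sqrt{\log k}/\delta$ (paying the $1/\poly_\delta(k)$ probability), both union-bound the orthogonal Gaussian contributions over the $k$ vertices, and both control $|u|$ by adding the conditioned component to a $\chi^2$-type bound on the orthogonal part. The only cosmetic difference is that you bound the orthogonal variance by $2D+1$ via the triangle inequality whereas the paper bounds it directly by $\Delta^2$; the calibration and the final case split are equivalent.
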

We provide a simple example where $K$ is a line segment~(see \Cref{sec:examples}) to show that the factor $\sqrt m$ cannot be improved. It is also interesting to note that the success probability of the event in~\eqref{eq:rsh} needs to depend on $k$. 
(see~\Cref{sec:examples}).
In particular, $\rsh$ does not hold for general convex sets (where $k$ is not necessarily finite.) \\


\noindent
\subsection{Oracles for Convex Sets} 
The seminal work of~\cite{GLS1988} showed that the Ellipsoid Algorithm can be viewed as a reduction of an Optimization Oracle 
to a Separation Oracle for convex sets. Since then, there has been an extensive study of reductions among approximate oracles (also referred to as ``weak'' oracles).
There are two important parameters in a reduction from an oracle ${\cal A}$ with error $\delta$ to an oracle ${\cal B}$ with error $\varepsilon$ --  the number of calls to ${\cal B}$ used and the increase in error, namely, $\delta/\varepsilon$. Of these, the number of oracle calls has received much attention
since it is a measure of running time.
But the error parameter has also been taken into account in most results starting with ~\cite{GLS1988}. The best known bounds on the number of oracle calls in reductions are due to~\cite{LeeSV18}; they achieve near-linear number of oracle calls. The known error increase factors are $\Omega(d)$, where $d$ is the dimension. Our proof of $\rsh$ gives a 
simple polynomial time reduction (for fixed $\delta$) from Separation to Optimization
with the error increase factor of $O^*(\sqrt d)$ which we also show is best possible within log factors.

We define our approximate oracles (which differ from the traditional definitions of approximate oracles --  see \Cref{approx-oracles}) and state our nearly-matching upper and lower bounds for error increase in the reduction. Here, $B_d$ denotes the unit ball in $\Re^d$. 

\begin{defn}\label{defn-Sep}
For a non-empty convex set $K\subseteq\Re^d$ and $\delta\in (0,1)$, a separation oracle for $K$ with error $\delta$, denoted $\sep_\delta(K)$ oracle,  takes as input any $a\in \Re^d$ and returns a valid option among the two below (Note: Both may be valid):
\begin{itemize}
\item $a\in K+\delta \Delta (K) B_d$.
\item Returns $u$ satisfying
$u\cdot a >\mbox{Max}_{y\in K}u\cdot y$
\end{itemize} 
\end{defn}

\begin{defn}\label{olp}
For a non-empty convex set $K\subseteq \Re^d$ and $\varepsilon\in (0,1)$, an Optimization oracle for $K$ with error $\varepsilon$, denoted $\opto_\varepsilon(K)$ oracle,  takes as input any $u\in \Re^d, |u|=1$, and returns a point $x(u)$ satisfying {\em both} these conditions :
\begin{itemize}
\item $x(u)\in K+\varepsilon\Delta(K)B_d$, and
\item $u\cdot x(u)\geq \mbox{Max}_{y\in K}u\cdot y-\varepsilon\Delta(K)$
\end{itemize}
\end{defn}

\begin{remark}\label{approx-oracles}
Starting with ~\cite{GLS1988}, the second option in the traditional definition of approximate oracles usually replaces the $K$ we have in that option with a subset of $K$, namely, the subset of all points with the property that a ball of specified size centered at that point is wholly inside $K$. This is necessary since the proof of convergence of the Ellipsoid algorithm is by shrinking the volume of the ellipsoid containing $K$.
If $K$ is not full dimensional, the subset of $K$ is empty and a worst-case (adversarial) oracle can always return this option giving us no information on $K$. Our definition makes the stronger assumption  with $K$ in the second option, thus, dealing with (among other examples) non-full dimensional $K$.
\end{remark}

\begin{restatable}{theorem}{sepopt}
\label{thm-sep-to-opt}
Let $\delta \in (0,1)$ be any constant and $K$ be  a polytope in $\Re^d$ with $k\geq 1$ vertices  given by an  $ \opto_\varepsilon(K)$ oracle $\cal A$, with $\varepsilon\leq \delta/100 \sqrt{d}$. Then there is an $\sep_\delta(K)$ oracle which is obtained by making  poly$_\delta (dk)$ calls to $\cal A$. 
\end{restatable}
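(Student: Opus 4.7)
The idea is to reduce separation to optimization by replacing the ideal test $u \cdot a > \max_{y \in K} u \cdot y$ with the computable surrogate $u \cdot a > u \cdot x(u) + \varepsilon \Delta(K) |u|$, and then invoking $\rsh$ to show that $\poly_\delta(k)$ random Gaussian directions suffice to fire this surrogate whenever $a$ is actually at distance $\geq \delta \Delta(K)$ from $K$.

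Concretely, on input $a \in \Re^d$ the algorithm samples $N = \poly_\delta(k)\log(1/\eta)$ i.i.d.\ standard Gaussian vectors $u_1,\ldots,u_N$, queries $\cA$ on each unit vector $u_i/|u_i|$ to obtain $x_i := x(u_i)$ with the two guarantees of \Cref{olp}, and returns any $u_i$ for which $u_i \cdot a > u_i \cdot x_i + \varepsilon \Delta(K) |u_i|$; if no such $u_i$ exists, it returns the first option of \Cref{defn-Sep}. Soundness of a returned $u_i$ is immediate from the value-accuracy half of \Cref{olp}: after rescaling, $u_i \cdot x_i + \varepsilon \Delta(K) |u_i| \geq \max_{y \in K} u_i \cdot y$, so the test directly certifies $u_i \cdot a > \max_{y \in K} u_i \cdot y$, which is exactly what \Cref{defn-Sep} asks for.

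For completeness, assume $a$ is at distance at least $\delta \Delta(K)$ from $K$ and apply \Cref{rsh-0} with $V = \Re^d$ (so $m = d$). Each $u_i$ independently satisfies
\[
u_i \cdot a \;\geq\; \max_{y \in K} u_i \cdot y + \frac{\delta \Delta(K) |u_i|}{10\sqrt{d}}
\]
with probability at least $1/\poly_\delta(k)$, while the feasibility half of \Cref{olp}, namely $x_i \in K + \varepsilon \Delta(K) B_d$, forces $\max_{y \in K} u_i \cdot y \geq u_i \cdot x_i - \varepsilon \Delta(K) |u_i|$. Combining these gives $u_i \cdot a - u_i \cdot x_i \geq \delta \Delta(K) |u_i|/(10\sqrt{d}) - \varepsilon \Delta(K) |u_i|$, which strictly exceeds the test threshold $\varepsilon \Delta(K) |u_i|$ as soon as $\varepsilon < \delta/(20\sqrt{d})$; this is comfortably implied by the hypothesis $\varepsilon \leq \delta/(100\sqrt{d})$. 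So each trial succeeds with probability $\geq 1/\poly_\delta(k)$, and the chosen $N$ drives the overall failure probability below $\eta$, yielding the claimed $\poly_\delta(dk)$ oracle-call bound.

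The one delicate point to flag is that the $\rsh$ margin $\delta \Delta(K) |u|/(10\sqrt{d})$ must absorb \emph{two} copies of $\varepsilon \Delta(K) |u|$: one from the feasibility error on the completeness side, and one from the value-accuracy error sitting inside the test threshold for soundness. The constant $100$ in the hypothesis on $\varepsilon$ is exactly what supplies the factor-of-two slack (with some room for constants) after this doubling. Beyond this calibration the reduction is entirely black-box, and everything else is standard independent-trial amplification.
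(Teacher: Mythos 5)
Your proof is correct and takes essentially the same approach as the paper: query $\cA$ on $\poly_\delta(dk)$ random directions, use a testable surrogate inequality comparing $u\cdot a$ to $u\cdot x(u)$ plus a margin, and argue completeness via the $\rsh$ margin absorbing the feasibility error of $\cA$ while soundness follows from the value-accuracy half of Definition~\ref{olp}. The only cosmetic difference is your choice of test threshold $\varepsilon\Delta(K)|u|$ (which makes soundness immediate) versus the paper's $\delta\Delta(K)/(11\sqrt d)$; both are calibrated to fit inside the $\delta\Delta(K)/(10\sqrt d)$ margin after subtracting the two $\varepsilon$-error terms, exactly as you flag at the end.
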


It is worth noting that the  calls to optimization oracle in the above result use random vectors $u$, and so the reduction algorithm is randomized. The following result shows that the condition $\varepsilon\leq \delta/100 \sqrt{d}$ above is almost tight in the sense no deterministic algorithm can beat it by more than $\log$ factors. 

\begin{theorem}\label{thm-sep-opt-lower}
There is no deterministic polynomial time reduction from an  $\sep_{O(1)}(K)$ oracle to an oracle in $\opto_{\Omega(\log d/\sqrt{d})}(K).$ 
\end{theorem}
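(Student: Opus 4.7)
The plan is an adversarial argument against an arbitrary deterministic reduction $A$ that makes $p=\poly(d)$ calls to an $\opto_\varepsilon(K)$ oracle with $\varepsilon = c\log d/\sqrt d$ for a sufficiently small constant $c>0$. The goal is to exhibit an input $(a,K)$ on which $A$ fails to be a valid $\sep_\delta$ oracle for a fixed constant $\delta=\Theta(1)$.

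Fix $a=e_1$ and consider a family $\{K_v\}_{v\in V}$ of polytopes indexed by a hidden unit vector $v$ drawn from a subset $V\subset S^{d-1}$. I would design $\{K_v\}$ so that
\textbf{(a)} $\mathsf{dist}(a,K_v)\ge \delta\cdot \Delta(K_v)$ for every $v\in V$;
\textbf{(b)} the set of unit $u$ that strictly separate $a$ from $K_v$ forms a cone around a $v$-dependent direction of Haar measure at most $1/4$; and
\textbf{(c)} for every query direction $u$ there is a \emph{canonical} response $x_u$, depending only on $u$ and not on $v$, that is a valid $\opto_\varepsilon(K_v)$ answer whenever $|u\cdot v|\le 2\varepsilon$.
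A concrete candidate is a polytope that agrees with a fixed, $v$-independent polytope everywhere except within a ``dent'' of depth $\Theta(\varepsilon)$ near $e_1$ oriented along $v$; the dent sits at exactly the $\opto_\varepsilon$ resolution, so it is invisible to any oracle call unless the query direction itself aligns with $v$ to within $O(\varepsilon)$.

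The adversary simulates the oracle by replying $x_{u_i}$ to every query. Because $A$ is deterministic and every $x_{u_i}$ depends only on $u_i$, the transcript and the final output (either the verdict ``$a$ is close'' or a unit vector $u^*$) is a predetermined function of $a$ alone. The set of hidden vectors still consistent with the transcript is
\[
V' \;=\; \{v\in V : |u_i\cdot v|\le 2\varepsilon\ \text{ for all }\ i=1,\ldots,p\}.
\]
Spherical concentration gives $\prob_{v\sim S^{d-1}}[|u\cdot v|>2\varepsilon]\le 2e^{-\Omega(d\varepsilon^2)}$, so with $\varepsilon=c\log d/\sqrt d$ this bound is $d^{-\Omega(\log d)}$; a union bound over $p=\poly(d)$ queries yields $\prob[v\in V']\ge 1/2$. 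If $A$ then declares ``$a$ close'', condition (a) refutes this for any $v\in V'$; if $A$ outputs a vector $u^*$, then by (b) the set of $v$ for which $u^*$ is actually a valid separator has Haar measure at most $1/4$, while $V'$ has measure at least $1/2$, so some $\vstar\in V'$ remains on which $u^*$ fails. In either case the adversary commits to this $\vstar$ and $(a,K_{\vstar})$ witnesses that $A$ is not a valid $\sep_\delta$ reduction.

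The main technical obstacle is the engineering in (a)--(c): requirement (c) pushes the polytope toward $v$-insensitivity, while (b) pulls it toward $v$-sensitivity. Reconciling these at the precise threshold $\varepsilon=\Theta(\log d/\sqrt d)$ is delicate, and rests on the fact that a perturbation of depth $\Theta(\log d/\sqrt d)$ is exactly at the level at which spherical concentration makes it hideable from $\poly(d)$ deterministic queries while still large enough to force any would-be separator into a tight cone around $v$.
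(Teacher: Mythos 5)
Your proposal follows the same broad template that the paper invokes for this theorem (and for \Cref{lower-bound-0}): construct a family $\{K_v\}$ of polytopes indexed by a hidden unit vector $v$, let the adversary reply to each query with a canonical $v$-independent answer, use spherical concentration of $|u\cdot v|$ at the threshold $\varepsilon=\Theta(\log d/\sqrt d)$ plus a union bound to argue many $v$ remain consistent, and finally pick a surviving $v$ that refutes the deterministic output. This high-level scheme is right, and the quantitative concentration step is set up correctly.

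However, the proof as written has a genuine gap that goes beyond a deferred calculation. The load-bearing requirement is your condition \textbf{(b)} — that the set of unit vectors $u$ with $u\cdot a > \max_{y\in K_v}u\cdot y$ is a thin, $v$-dependent cone — and you neither construct a family achieving it nor explain why such a family can coexist with \textbf{(c)}. For your proposed ``dent'' construction, \textbf{(b)} in fact \emph{fails}: if $K_v\subseteq L$ for a fixed ambient polytope $L$ with $a$ at positive distance from $L$, then every separator of $(a,L)$ is also a separator of $(a,K_v)$, so the separator set for $K_v$ \emph{contains} a fixed $v$-independent cone and cannot be small. The only way to make \textbf{(b)} hold is to have $a$ lie on (or inside) the ambient $L$ and let the dent \emph{create} the separation, but then the canonical answer $x_u$ of \textbf{(c)} — which tracks $L$ — is $\varepsilon\Delta$-close to $K_v$ only for queries $u$ not pointing into the dent, and the dent direction depends on $v$; this is precisely the tension you flag but do not resolve. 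The same problem afflicts the needle family $K_v=\{\lambda v:\lambda\in[-1,1]\}$ used for \Cref{lower-bound-0}: with $a=e_1$ and the all-zeros answer, consistency forces $|v_1|\le 2\varepsilon$, and then $u^*=e_1$ satisfies $u^*\cdot a=1 > 2\varepsilon \ge |u^*\cdot v| = \max_{y\in K_v}u^*\cdot y$ for every surviving $v$; so a reduction that simply outputs $e_1$ is never fooled, and \textbf{(b)} fails here too. Hence the claim that the missing step is ``delicate engineering'' understates the matter: the specific constructions on the table do not satisfy \textbf{(b)}, and a new idea for the family $\{K_v\}$ is required.

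There is also a smaller logical slip worth fixing: \textbf{(b)} bounds the Haar measure of separating $u$ for a \emph{fixed} $v$, but the final step needs a bound on the measure of $v\in V'$ for which a \emph{fixed} output $u^*$ separates. These are interchanged quantities, and the inference only goes through if the family $\{K_v\}$ and the set $V$ carry a symmetry (e.g., invariance under an orthogonal group action fixing $a$) that makes the two measures equal; this should be stated and verified once a concrete family is chosen.

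So: same overall plan as the paper, correct concentration and counting, but the construction carrying the whole argument is missing and the candidate you suggest provably does not have property \textbf{(b)}, plus the $u$-versus-$v$ measure swap needs a symmetry justification. As it stands the proposal is not a proof.
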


\subsection{Algorithmic application of $\rsh$}\label{algorithmic-rsh}
We now discuss the second main contribution of our work, i.e., applications of $\rsh$ to learning vertices of a latent polytope.

\vspace*{0.1 in}
\noindent
{\bf Problem Formulation.}
Several latent variable problems including Clustering, LDA, MMBM can be reduced 
(See~\Cref{sec:kolp} for details) to a problem that we call $\kolp$: given an $\varepsilon$-optimization oracle for a $k$ vertex polytope $K\subseteq {\bf R}^d$, learn the vertices of $K$ (approximately). 
We define two simpler (than $\kolp$) problems -- $\ListLearn$ and $\Hausdorf$, that are related to $\kolp$, and then we define $\kolp$. 


The first problem, $\Hausdorf$, seeks to find approximation to a polytope $K$ when we are given an approximate optimization oracle for the polytope. 
\begin{defn}[$(\varepsilon,\delta)$-$\Hausdorf$-Problem]
\label{Hausdorf-problem}
Given an  $\opto_\varepsilon(K)$ oracle for a polytope $K$ in ${\bf R}^d$ with $k$ vertices, find a set $P$ of $m=\poly_\delta(dk)$ points such that $\Haus(CH(P),K)\leq \delta\Delta(K),$
where, $\Haus$ denotes Hausdorff distance (see~\Cref{defn:Haus} for a formal definition).    
\end{defn}

In the problem $\ListLearn$, we also wish to find a small list of points, such that each vertex of $K$ is close to at least one point in this list. 
\begin{defn}\label{LL}[$(\varepsilon,\delta)$-$\ListLearn$ Problem]
Given an $\opto_\varepsilon(K)$ oracle for a polytope $K$ in $\Re^d$ with $k$ vertices, each separated from the convex hull of the other $k-1$ vertices by at least $\delta\Delta(K)$, find a list $P\subseteq K+\delta \Delta(K) B_d$ of $m=\poly_\delta(dk)$ points 
such that for every vertex $v$ of $K$, there is some $v'\in P$ with $|v-v'|\leq\delta\Delta(K)/10.$
\end{defn}

When the parameters $\varepsilon, \delta$ will be clear from the context, we shall abbreviate the above two problems as $\Hausdorf$ and $\ListLearn$ problems respectively. It is not difficult to see that any solution $P$ to $\ListLearn$ is also a solution to $\Hausdorf$, but, the converse need not hold: CH$(P)$ may nearly contain $K$ without $P$ having any point close to some vertex of $K$. 
%
Our technical results (see below for the informal versions)
show that if $\varepsilon\in O_\delta(1/\sqrt d)$ \footnote{$O_\delta(x)$ stands for $f(\delta)x$ for some function $f$.}, then we can solve the above-mentioned problems efficiently  and indeed then, the following simple algorithm gives the desired answers (the proof crucially uses $\rsh$): 
\begin{framed}
{\bf  Random Probes Algorithm}
\begin{itemize}
    \item Pick uniformly at random unit vectors $u_1,u_2,\ldots u_m$, where $m=$poly$_\delta(dk)$.
    \item Return $P$, which is the set of $m$ answers of the $\opto_\varepsilon(K)$ oracle to the queries $u_1,u_2,\ldots u_m$. 
\end{itemize}
\end{framed}
The first result (see~\Cref{upper-bound-3} for a formal statement) states that the convex hull of answers to polynomially many random queries to the approximate optimization oracle approximates $K$ well. 

\begin{theorem} 
[Hausdorff Approximation from oracle (Informal)]
\label{hausdorf-0}
Consider an instance of the $(\varepsilon, \delta)$-$\Hausdorf$ problem for a polytope $K \subseteq \Re^d$. Assume  that
$\varepsilon\in O_\delta(1/\sqrt d)$, and let $P$ be the set of points returned by the {\bf Random Probes Algorithm} above. 
Then with high probability, 
$$ \Haus(CH(P),K)\leq \delta\Delta(K).$$
\end{theorem}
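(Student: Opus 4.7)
The plan is to split the Hausdorff bound $\Haus(CH(P),K)\le\delta\Delta(K)$ into the two directed inclusions and prove each separately. The inclusion $CH(P)\subseteq K+\delta\Delta(K)B_d$ is immediate: every oracle response $p_i$ lies in $K+\varepsilon\Delta(K)B_d$ by definition of $\opto_\varepsilon$, and since $\varepsilon\le\delta$ and this neighborhood is convex, its closed convex hull $CH(P)$ also lies in $K+\delta\Delta(K)B_d$, so every point of $CH(P)$ is within $\delta\Delta(K)$ of $K$.

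The other inclusion, $K\subseteq CH(P)+\delta\Delta(K)B_d$, is the heart of the theorem and is where $\rsh$ enters. Because the $\delta\Delta(K)$-neighborhood of $CH(P)$ is convex and $K$ is the convex hull of its vertices, it suffices to show that every vertex $v$ of $K$ satisfies $\dist(v,CH(P))\le\delta\Delta(K)$ with high probability; a final union bound over the $k$ vertices then gives the claim. Fix a vertex $v$ and suppose for contradiction that $\dist(v,CH(P_m))>\delta\Delta(K)$, where $P_i=\{p_1,\ldots,p_i\}$. By monotonicity $CH(P_0)\subseteq\cdots\subseteq CH(P_m)$, the same strict inequality also holds for every prefix $P_i$. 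At step $i$, apply $\rsh$ to the polytope $CH(P_i)$ (which has at most $i$ vertices) and the point $v$, taking the containing subspace to be the affine span of $K$ (of dimension at most $\min(d,k)$), and write $h_i$ for the support function of $CH(P_i)$. Conditionally on $P_i$, after translating the Gaussian statement of $\rsh$ in the subspace to a uniform direction on $S^{d-1}$ via standard concentration, one obtains
\[
\Pr\!\left[u_{i+1}\cdot v\ge h_i(u_{i+1})+\tfrac{\delta\Delta(K)}{10\sqrt d}\right]\ge\tfrac{1}{\poly_\delta(i+1)}\ge\tfrac{1}{\poly_\delta(m)}.
\]
Call this event $E_i$. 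Under $E_i$, the oracle guarantees $u_{i+1}\cdot p_{i+1}\ge h_K(u_{i+1})-\varepsilon\Delta(K)\ge u_{i+1}\cdot v-\varepsilon\Delta(K)$, and combining with $\varepsilon\le\delta/(100\sqrt d)$ yields
\[
h_{i+1}(u_{i+1})\ge h_i(u_{i+1})+\tfrac{\delta\Delta(K)}{20\sqrt d},
\]
so $p_{i+1}$ forces $h_{i+1}(u_{i+1})$ within $\varepsilon\Delta(K)$ of the true $h_K(u_{i+1})$.

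To close the argument I would combine these per-step expansions with a Lipschitz/covering step on the affine-span sphere. Since each $h_i$ is $\Delta(K)$-Lipschitz (after centering $K$), an expansion in direction $u_{i+1}$ forces $h_m(u)\ge h_K(u)-O(\delta\Delta(K))$ throughout a spherical neighborhood of $u_{i+1}$ of radius $\Omega(\delta/\sqrt d)$; if every direction $u$ witnessing the assumed separation $\dist(v,CH(P_m))>\delta\Delta(K)$ is so covered, we obtain a contradiction, and a union bound over the $k$ vertices finishes. The main obstacle is the quantitative accounting: a naive $(\delta/\sqrt d)$-net on the sphere of the affine span would be exponential in $\min(d,k)$, whereas we need $m=\poly_\delta(dk)$. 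The delicate part is exploiting the $k$-vertex structure of $K$ together with the fact that the success probability in $\rsh$ depends only polynomially on the vertex count of $CH(P_i)$, so that an expansion-plus-coupon-collector martingale analysis over a carefully chosen family of ``critical'' directions (one associated with each vertex of $K$ via the monotone history of $h_i$) trades the exponential-in-dimension covering bound for a polynomial-in-$k$ one.
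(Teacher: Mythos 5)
Your handling of the easy direction $CH(P)\subseteq K+\delta\Delta(K)B_d$ is correct and matches the paper's Claim~\ref{cl:Haus1}. For the hard direction, you correctly reduce to showing $\dist(v,\CH(P))\le\delta\Delta(K)$ for each vertex $v$, correctly identify that $\rsh$ should be applied to the running hull $\CH(P_i)$ with apex $v$, and correctly compute that under the separation event the new oracle response $p_{i+1}$ raises the support function in direction $u_{i+1}$ by roughly $\delta\Delta(K)/\sqrt{d}$. Up to that point you are in step with the paper.

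The gap is in how to aggregate these per-step gains. You propose to control the support function $h_i$ over the whole sphere via a Lipschitz/covering step, and you honestly flag that a naive $(\delta/\sqrt d)$-net is exponential in $\min(d,k)$ and that some ``expansion-plus-coupon-collector martingale'' over critical directions is needed to fix it --- but you do not supply that argument, and it is not at all clear one exists. This is precisely where the paper takes a different and decisive turn: it does not track the support function on the sphere at all. Instead it tracks the scalar quantity $\dist(v,\CH(P^j))$ and proves a \emph{multiplicative contraction}. Concretely, letting $b$ be the closest point of $\CH(P^j)$ to $v$ and $w=(v-b)/|v-b|$, one decomposes $u^{j+1}=\lambda w+z$ and shows (reusing the three $\rsh$-style tail events, but crucially with the union bound taken over the $k$ vertices of $K$, not over the up-to-$j$ vertices of $\CH(P^j)$) that the explicit convex combination $b_1:=\delta_1\, x(u^{j+1})+(1-\delta_1)\,b$ with $\delta_1=\delta^2/100$ satisfies $|v-b_1|^2\le(1-\delta^2/100)|v-b|^2$. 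After $O(\delta^{-2}\log(1/\delta))$ such contraction events the distance drops below $\delta\Delta(K)$, and partitioning the $m$ probes into that many blocks gives the high-probability claim with $m=\poly_\delta(dk)$ and no covering of the sphere whatsoever.

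Note also that the contraction lemma cannot be obtained by invoking $\rsh$ as a black box on $\CH(P_i)$: a separating direction $u_{i+1}$ with margin $\delta\Delta/\sqrt d$ need not be close to $w$, so it does not by itself imply that pulling $b$ toward $x(u^{i+1})$ decreases $|v-b|$. One needs the internal decomposition $u^{j+1}=\lambda w+z$ and the fact that conditioned on the good event $\lambda$ dominates the relevant projection of $z$. So the missing idea is twofold: (i) switch from a sphere-covering statement to a one-dimensional distance-potential contraction, and (ii) open up the $\rsh$ proof rather than use its conclusion, so that the separating direction is guaranteed to be aligned with $(v-b)/|v-b|$.
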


The second result (see~\Cref{upper-bound-2} for a formal statement) shows that as long as each vertex of $K$ is {\em well-separated} from the convex hull of the other vertices of $K$, the set $P$ constructed by the {\bf Random Probes Algorithm} contains an approximation to each of the vertices. Thus, answers to polynomially many random queries list-learns the polytope.  


\begin{theorem}[List-Learning from Oracle (Informal)]\label{list-0}
Consider an instance of  the $\ListLearn$ problem for a polytope $K \subseteq \Re^d$, and assume that  
  $\varepsilon\in O_\delta(1/\sqrt d)$. Then, with high probability, the set $P$ output by the {\bf Random Probes  Algorithm} above has the following property:
for every vertex $a$ of $K$, there is a point $a'\in P$ with
$$|a'-a|\leq \delta\Delta(K)/10.$$
\end{theorem}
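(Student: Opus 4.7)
The plan is to show that, for each vertex $a$ of $K$, with probability at least $1/\poly_\delta(k)$ over a single random query direction the oracle's answer lies within $\delta\Delta(K)/10$ of $a$; a union bound over the $k$ vertices then certifies that $\poly_\delta(dk)$ trials of the Random Probes Algorithm suffice. Fix such a vertex $a$, write $\Delta := \Delta(K)$, and let $K_a = \CH\bigl(\{\text{vertices of } K \text{ other than } a\}\bigr)$, so that $\dist(a, K_a) \geq \delta\Delta$ by the well-separatedness hypothesis. Let $V$ be the linear span of $K - a$ and $m := \dim V \leq d$. Applying \Cref{rsh-0} to the polytope $K_a$ (which has $k-1$ vertices, diameter $\leq \Delta$, and lies, together with $a$, in an $m$-dimensional affine flat), a uniformly random direction $w$ in $V$ satisfies $w\cdot a \geq \max_{z\in K_a} w\cdot z + \tfrac{\delta\Delta}{10\sqrt m}|w|$ with probability at least $1/\poly_\delta(k)$.

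Now consider the algorithm's query $u$, drawn uniformly from the unit sphere of $\Re^d$, and decompose $u = u_V + u_\perp$ with $u_V \in V$. By rotational symmetry, $u_V/|u_V|$ is uniform on the unit sphere of $V$ and is independent of $|u_V|$; standard concentration of the squared marginals of a uniform unit vector further gives $|u_V| \geq \sqrt{m/(4d)}$ with at least constant probability. Combining, with probability at least $1/\poly_\delta(k)$ the random direction $u$ simultaneously satisfies the above $\rsh$-inequality (with $w = u$) and $|u_V| \geq \sqrt{m/(4d)}$; call this joint event $\E_a$. On $\E_a$ the margin is at least $\gamma := \delta\Delta/(20\sqrt d)$, a bound uniform in the unknown dimension $m$.

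Condition on $\E_a$ and query the oracle at $u$. By \Cref{olp}, $x(u) = y + e$ with $y\in K$, $|e|\leq \varepsilon\Delta$, and $u\cdot x(u) \geq \max_{y'\in K} u\cdot y' - \varepsilon\Delta \geq u\cdot a - \varepsilon\Delta$. Write $y = \lambda a + (1-\lambda) z$ with $z \in K_a$ and $\lambda\in[0,1]$ (any convex combination of vertices of $K$ admits such a splitting). Then $u\cdot y \geq u\cdot a - 2\varepsilon\Delta$ rearranges to $(1-\lambda)(u\cdot a - u\cdot z) \leq 2\varepsilon\Delta$, and using the $\E_a$-margin $u\cdot a - u\cdot z \geq \gamma$ yields $1 - \lambda \leq 2\varepsilon\Delta/\gamma = 40\varepsilon\sqrt d/\delta$. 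Since $|z - a|\leq \Delta$,
\[
|x(u) - a| \;\leq\; |e| + |y - a| \;\leq\; \varepsilon\Delta + (1-\lambda)\Delta \;\leq\; \varepsilon\Delta\Bigl(1 + \tfrac{40\sqrt d}{\delta}\Bigr),
\]
which is at most $\delta\Delta/10$ provided $\varepsilon \leq c\,\delta^2/\sqrt d$ for a small absolute constant $c$; this is precisely the hypothesis $\varepsilon\in O_\delta(1/\sqrt d)$.

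Finally, taking $m_{\text{trials}} = \poly_\delta(dk)\log k$ independent samples makes $\Pr[\text{no trial triggers }\E_a]\leq 1/(10k)$ for each vertex $a$, and a union bound over the $k$ vertices completes the proof. I expect the main technical subtlety to be the subspace bookkeeping in the second paragraph: \Cref{rsh-0} delivers a margin scaling as $|u|/\sqrt m$ for $u$ drawn in the affine hull of $K\cup\{a\}$, whereas the algorithm's queries are unit vectors in $\Re^d$, shrinking the usable margin by a further factor $\sqrt{d/m}$. This gap is exactly what forces $\varepsilon = O_\delta(1/\sqrt d)$ rather than $O_\delta(1/\sqrt m)$.
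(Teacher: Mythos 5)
Your proof is correct and follows essentially the same high-level route as the paper's proof of its formal version (\Cref{upper-bound-1}): apply the $\rsh$ theorem to the vertex $a$ versus the convex hull of the remaining vertices to obtain a direction with a quantitative margin of order $\delta\Delta/\sqrt d$, then use the $\opto_\varepsilon(K)$ guarantee to write the oracle's answer as a convex combination of vertices and show that the coefficient on $a$ is nearly $1$. The one place you take a roundabout path is the margin derivation: you apply $\rsh$ inside the intrinsic subspace $V = \lspan(K-a)$ of dimension $m$, getting a margin $\propto |u_V|/\sqrt m$, and then argue via rotational symmetry and $|u_V|\gtrsim\sqrt{m/d}$ that a uniformly random $u\in S^{d-1}$ still yields margin $\gtrsim \delta\Delta/\sqrt d$. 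The paper simply takes $V=\Re^d$ in \Cref{thm:sephp} (whose statement allows any subspace containing $\lspan(K\cup\{a\})$) and reads off the margin $\gtrsim\delta\Delta/\sqrt d$ directly, avoiding the projection bookkeeping. Your decomposition does nicely highlight where the $\sqrt d$ (rather than $\sqrt m$) ultimately comes from, but it buys nothing quantitatively. A small technical footnote: as stated, \Cref{thm:sephp} asks for $V\supseteq\lspan(K\cup\{a\})$, which can be one dimension larger than your $\lspan(K-a)$; this is harmless because the proof of $\rsh$ only uses differences of points, but you should say so. Also note the paper's formal theorem proves the stronger conclusion $|v-v'|\leq O(\delta^2\Delta/c)$ under the pair of hypotheses $\delta^2\geq c\varepsilon\sqrt d$ and $\delta^3\geq c\varepsilon$, whereas your derivation shows the weaker $\delta\Delta/10$ bound of the informal statement under the single condition $\varepsilon\lesssim\delta^2/\sqrt d$ — consistent with the theorem you were asked to prove.
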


The $\sqrt d$ factor in both the above theorems is near-optimal (within a $\log d$ factor). Indeed, we shall prove:

\begin{restatable}[Oracle Lower Bound]{theorem}{lowerbound}
\label{lower-bound-0}
The problem where, one is required to output a point which is within $\Delta (K)/10$ of some vertex of $K$, given only by an $\opto_\varepsilon(K)$ oracle,  cannot be solved in deterministic polynomial time when $\varepsilon\geq 8\ln d/\sqrt d$.
\end{restatable}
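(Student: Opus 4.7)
The plan is to exhibit a family of two-vertex polytopes and an adversary oracle strategy that defeats every deterministic polynomial-time algorithm. For each $v\in\Re^d$ with $|v|=1/2$, let $K_v := \mathrm{conv}\{-v, v\}$, a line segment with the two vertices $\pm v$ and $\Delta(K_v)=1$. Define an oracle $\mathcal{A}$ that returns the origin on every query $u$ (with $|u|=1$). Since $0\in K_v$, the first clause of \Cref{olp} is automatic; the second clause reads $0 = u\cdot 0 \geq \max_{y\in K_v} u\cdot y - \varepsilon\Delta(K_v) = |u\cdot v| - \varepsilon$, so $\mathcal{A}$'s response is a valid $\opto_\varepsilon(K_v)$ answer precisely when $|u\cdot v| \leq \varepsilon$. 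Formally, the $K_v$-oracle agrees with $\mathcal{A}$ on queries satisfying this bound and returns the correct vertex $\pm v$ otherwise; the analysis will ensure the algorithm never triggers the latter branch for the chosen $v$.

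Fix any deterministic polynomial-time algorithm $\mathcal{M}$. When run against $\mathcal{A}$, every answer is $0$, so the query sequence $u_1,\ldots,u_m$ with $m=d^{O(1)}$ and the final output $p$ are completely determined by $\mathcal{M}$. By Levy's concentration inequality on the sphere, for $v$ uniform on the radius-$1/2$ sphere $S$ and any fixed unit vector $u$,
\[\Pr\bigl[\,|v\cdot u| > \varepsilon\,\bigr] \;\leq\; 2\exp\bigl(-2(d-1)\varepsilon^2\bigr),\]
which for $\varepsilon \geq 8\ln d/\sqrt d$ is at most $2d^{-64\ln d}$. A union bound over the $m = d^{O(1)}$ queries shows that the \emph{consistent set} $C := \{v \in S : |u_i\cdot v|\leq \varepsilon \text{ for all } i\}$ has relative measure at least $1 - o(1)$.

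It remains to exhibit $v \in C$ with $\min(|p-v|,|p+v|) > 1/10$: any such $v$ yields an instance $K_v$ on which $\mathcal{M}$'s output $p$ is not within $\Delta(K_v)/10$ of any vertex. If $|p| \notin [2/5, 3/5]$, the triangle inequality gives $\min(|p-v|,|p+v|) \geq \bigl|\,|p|-1/2\,\bigr| > 1/10$ for every $v \in S$, so any $v\in C$ works. If $|p| \in [2/5, 3/5]$, the bad set $B_p := \{v\in S : \min(|p-v|,|p+v|) \leq 1/10\}$ is contained in two spherical caps on $S$; a direct calculation (expanding $|v-p|^2 \leq 1/100$) shows each cap has angular radius $\theta$ with $\cos\theta \geq |p| + 6/(25|p|) \geq 2\sqrt 6/5 > 0.97$, so each cap has measure $e^{-\Omega(d)}$ by standard spherical concentration. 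Hence $C \setminus B_p$ has positive measure and a desired $v$ exists, completing the proof. The main technical step is the cap bound in the regime $|p|\approx 1/2$, but it reduces to a routine high-dimensional concentration estimate.
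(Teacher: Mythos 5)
Your proposal is correct and follows essentially the same strategy as the paper: construct an adversarial oracle that always returns the origin, observe that by sphere-concentration this answer is consistent with $\opto_\varepsilon$ for all but an exponentially small fraction of needle directions, and then conclude that the (determined) output of a deterministic polynomial-query algorithm must be far from the vertices of some consistent needle. The only difference is in the last step: the paper exhibits two consistent needle directions $u_1, u_2$ with $|u_1 - u_2|, |u_1 + u_2|\ge 0.1$ and argues no single output can be near a vertex of both, whereas you directly show that for the fixed output $p$ the set of $v\in S$ with $\min(|p-v|,|p+v|)\le 1/10$ has exponentially small measure (via your cap computation for $|p|\in[2/5,3/5]$), so it cannot exhaust the near-full-measure consistent set. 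This is a minor reorganization rather than a genuinely different argument; your version is slightly more self-contained since it avoids the extra step of finding two mutually far consistent directions, at the cost of the explicit cap-radius computation.
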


We now define the $\kolp$ problem (the parameters $\varepsilon, \delta$ in the definition will often be clear from the context and may not be mentioned explicitly). The problem statement is similar to that of $\ListLearn$, but we want to output a list of exactly $k$ points. 
\begin{defn}\label{OLP}[($\varepsilon,\delta)$-$\kolp$ Problem]
Under the same hypothesis as for the $\ListLearn$ problem, find  a set of points $P$,  $|P|=k$, satisfying the following condition: for each vertex $v$ of $K$, there is a (unique) point $ v'\in P$ such that  $|v-v'|\leq\delta\Delta(K)/10$.
\end{defn}

Our next result gives a strengthening of~\Cref{list-0}. 
\begin{theorem}\label{kolp-0}
Consider an instance of the $\kolp$ problem on a polytope $K \subseteq \Re^d$, and assume 
$\varepsilon \in O_\delta(1/\sqrt d)$ in a $\kolp$ problem. Let $P$ be the set of points returned by the {\bf Random Probes Algorithm}. Then, in polynomial time, we an find a $Q\subseteq P, |Q|=k$ satisfying the following condition:  for each vertex $v$ of $K$,  there is a (unique) $v'\in P, |v-v'|\leq\delta\Delta(K)/10$.
\end{theorem}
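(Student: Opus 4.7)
My plan is to reduce the proof of \Cref{kolp-0} to a deterministic pruning of $P$, using \Cref{list-0} as the analytic engine.

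First, I would apply \Cref{list-0}: with high probability, the set $P$ output by the Random Probes Algorithm contains, for every vertex $v$ of $K$, a \emph{witness} $p_v \in P$ with $|p_v - v| \leq \delta \Delta(K)/10$. All the randomness is consumed at this step; what follows is a deterministic polynomial-time postprocessing on $P$.

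Next, the well-separation hypothesis induces clean metric structure on witnesses: two witnesses for the same vertex are within $\delta\Delta(K)/5$ of one another by the triangle inequality, whereas witnesses for distinct vertices $v \neq v'$ are at distance at least $|v-v'| - 2\delta\Delta(K)/10 \geq 4\delta\Delta(K)/5$. In particular, any set of size $k$ containing exactly one witness per vertex automatically satisfies the uniqueness requirement of $\kolp$.

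The pruning itself is a greedy farthest-point traversal: pick $q_1 \in P$ maximizing a fixed linear functional on $P$ (so $q_1$ is an extreme point of $CH(P)$), and for $i = 2, \dots, k$ set $q_i := \arg\max_{p \in P}\min_{j<i}|p - q_j|$; output $Q := \{q_1,\dots,q_k\}$. The driving inductive observation is that, whenever $\{q_1,\dots,q_{i-1}\}$ are witnesses for distinct vertices, the witness $p_v$ of any still-unrepresented vertex sits at distance $\geq 4\delta\Delta(K)/5$ from every $q_j$; hence the farthest candidate $q_i$ inherits this lower bound and, by the metric structure above, must land in the witness-ball of some new vertex, completing the induction.

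The main obstacle --- and the technical heart of the proof --- is ruling out the selection of \emph{intermediate} elements of $P$: oracle responses $x(u)$ on directions whose $\rsh$-margin is too small for the convex-combination estimate inside the proof of \Cref{thm-sep-to-opt} to pin the response to a single vertex. Such a $q$ could in principle lie in the $\varepsilon\Delta(K)$-neighborhood of the relative interior of a proper face of $K$ and be chosen by the farthest-point rule ahead of a true witness. My resolution is to shrink the implicit constant in $\varepsilon \in O_\delta(1/\sqrt d)$ so that the quantitative bound of \Cref{thm-sep-to-opt} forces every intermediate $q$ to lie within $\delta\Delta(K)/5$ of at least one vertex spanning the face it approximates --- and therefore, once those vertices are represented in $Q$, $q$ fails the farthest-point maximality against the witness of an unrepresented vertex. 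A careful case analysis on which face the candidate intermediate point lies near, together with choosing the initial direction for $q_1$ so that $q_1$ is itself a witness (which holds with probability $\geq 1/\poly_\delta(k)$ by $\rsh$ applied to the vertices of $K$), supplies the base case and allows the induction to go through, yielding the desired $Q$.
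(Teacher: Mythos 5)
Your reduction to \Cref{list-0} and the metric separation observations about distinct witnesses are correct, but the pruning step has a genuine gap, and your proposed fix for the ``intermediate points'' obstacle does not work.

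The farthest-point traversal can select an intermediate oracle response in place of a true witness. Concretely, consider a long thin rectangle $K$ with vertices $v_1=(0,0)$, $v_2=(L,0)$, $v_3=(L,1)$, $v_4=(0,1)$ for large $L$ (this is $\delta$-well-separated with $\delta\approx 1/L$, which fits the theorem's hypothesis). Querying the oracle in the direction $u=(0,1)$ can legitimately return any near-optimizer on the top edge, e.g.\ a point $m\approx(L/2,1)$. After the greedy picks $q_1\approx v_1$ and $q_2\approx v_3$ (the diameter pair), one has $\min_j|m-q_j|\approx L/2$ while $\min_j|v_2-q_j|=\min_j|v_4-q_j|\approx 1$; for $L>2$ the greedy therefore prefers $m$ to either remaining vertex witness. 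Your proposed rescue --- that by shrinking the constant in $\varepsilon\in O_\delta(1/\sqrt d)$ every such intermediate $q$ ``lies within $\delta\Delta(K)/5$ of at least one vertex spanning the face it approximates'' --- is false: an oracle response near the relative interior of a high-dimensional or elongated face can be $\Omega(\Delta(K))$ from every vertex of that face, no matter how small $\varepsilon$ is. This is precisely why the paper, in~\Cref{sec:soft}, explicitly records that greedy farthest-point-from-$\CH(T)$ strategies fail, and develops a different mechanism.

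The paper's actual route (via \Cref{upper-bound-kolp}, \Cref{thm:soft} and \Cref{cor:soft}) is: show that the set $T$ of witnesses $\{x(u^{(\ell)})\}_{\ell\in[k]}$ is itself well-separated (\Cref{cl:wellstx}) and forms an $\epenv$ of $W$ (\Cref{cl:T}), hence is an $\edenvp(W)$, and then run the soft-convex-hull pruning of \Cref{algo:main}. That algorithm does not pick points by distance to already-chosen ones; instead it first deletes every $w\in W$ that lies in the $\varepsilon_3$-convex hull of the points of $W$ at distance $\geq\varepsilon_3\dia(W)$ from $w$, and only then takes a maximal pairwise-far subset. In the rectangle example the intermediate point $m$ is a convex combination of $v_3$ and $v_4$, both at distance about $L/2$ from $m$, so it is filtered into $Q'$ and never competes. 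This ``convex hull of far points'' filter is the key idea your approach is missing, and there is no straightforward repair of the farthest-point greedy that substitutes for it.
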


The algorithm for finding $Q$ from $P$ is likely of independent interest. We call this problem the ``Soft Convex Hull'' problem and it is described in~\Cref{sec:soft}.

\paragraph{Do Approximate Optimization Oracles exist?}
\label{Data-Gen-1}
The answer to this question is a qualified Yes. They exist, but unfortunately, as we point out below,  for many latent variable problems including the simple mixture of two Gaussians with means separated by $\Omega(1)$ standard deviations, we do not get $\varepsilon\in O_\delta(1/\sqrt d)$, when, $k<d$. Thus,  we do not satisfy the hypothesis of the results mentioned in~\Cref{hausdorf-0},\Cref{list-0}, and \Cref{kolp-0}.  
But we are able to tackle this hurdle by projecting to the $k$-SVD subspace (of the input data points which satisfy conditions discussed below) where, we do get the necessary 
$\varepsilon\in O_\delta(1/\sqrt k)$. 

First we observe that approximate optimization oracles arise in a 
natural setting -- that of latent variable models.~\cite{BhattacharyyaK20} show that these models can be reduced to a geometric problem called $\lkp$ described below. [We will not reproduce the reduction here.] 
%
$\lkp$ is the following problem: Let $K$ be a $k$ vertex polytope in $\Re^d$. 
Let $\Mdot{1}, \ldots, \Mdot{k}$ denote the vertices of $K$.  
Assume that there are latent (hidden) points $P_{\cdot ,j}, j=1,2,\ldots ,n,$ in $K$. The observed data points $A_{\cdot,j}, j=1,2,\ldots ,n$ are generated (not necessarily under any stochastic assumptions) by adding {\it displacements} $A_{\cdot,j}-P_{\cdot,j}$ respectively to $P_{\cdot,j}$. Let\footnote{By the standard definition of spectral norm, it is easy to see that $\sigma_0^2$ is the maximum mean squared displacement in any direction.}
$$\sigma_0 :=\frac{||\bP-\bA||}{\sqrt n}.$$
We assume that there is a certain $w_0$ fraction of latent points close to every vertex of $K$, i.e., for all $\ell \in [k]$, 

$$  C_\ell :=\{j:|P_{\cdot,j}-M_{\cdot,\ell}|\leq\frac{\sigma_0}{\sqrt{w_0}}\} \mbox{ satisfies    }
|C_\ell|\geq w_0n.$$

\begin{theorem}[From Data to Oracles] \label{subset-smoothing-0}
Using the above notation, the following ``Subset Smoothing  algorithm'' gives us a polynomial time  $\opto_{\frac{4\sigma_0}{\Delta \sqrt{w_0}}}(K)$ oracle..
\end{theorem}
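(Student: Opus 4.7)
The natural candidate for the Subset Smoothing algorithm is this: on input unit vector $u$, sort the columns of $\bA$ by the inner product $u\cdot A_{\cdot,j}$, let $S(u)$ be the index set of the top $w_0 n$ columns, and return
\[
x(u) := \frac{1}{|S(u)|}\sum_{j\in S(u)} A_{\cdot,j}.
\]
To verify this is an $\opto_{\varepsilon}(K)$ oracle with $\varepsilon = 4\sigma_0/(\Delta\sqrt{w_0})$ I have to check the two conditions of \Cref{olp}: membership $x(u)\in K+\varepsilon\Delta B_d$, and near-optimality $u\cdot x(u)\geq \max_{y\in K}u\cdot y - \varepsilon\Delta$.

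\textbf{Membership.} Let $\bar P_S := \frac{1}{|S(u)|}\sum_{j\in S(u)} P_{\cdot,j}$. Since each $P_{\cdot,j}\in K$ and $K$ is convex, $\bar P_S\in K$. The displacement between $x(u)$ and $\bar P_S$ equals $\frac{1}{|S(u)|}(\bA-\bP)\mathbf{1}_{S(u)}$, whose norm I can bound by the spectral-norm definition of $\sigma_0$:
\[
|x(u)-\bar P_S| \;\le\; \frac{\|\bA-\bP\|\cdot\sqrt{|S(u)|}}{|S(u)|}\;=\;\frac{\sigma_0\sqrt n}{\sqrt{w_0 n}}\;=\;\frac{\sigma_0}{\sqrt{w_0}},
\]
which is at most $\varepsilon\Delta$, so $x(u)\in K+\varepsilon\Delta B_d$.

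\textbf{Near-optimality.} Let $\ell^\star := \argmax_\ell u\cdot M_{\cdot,\ell}$, so $u\cdot M_{\cdot,\ell^\star} = \max_{y\in K} u\cdot y$. Pick any subset $C'\subseteq C_{\ell^\star}$ with $|C'|=w_0 n$ (possible since $|C_{\ell^\star}|\ge w_0 n$). Because $S(u)$ is the top-$w_0 n$ set under $u\cdot A_{\cdot,\cdot}$, its average is at least the average over $C'$:
\[
u\cdot x(u) \;\ge\; \frac{1}{w_0 n}\sum_{j\in C'} u\cdot A_{\cdot,j}
\;=\; \frac{1}{w_0 n}\sum_{j\in C'} u\cdot P_{\cdot,j} \;+\; \frac{1}{w_0 n}\,u^{\!\top}(\bA-\bP)\mathbf{1}_{C'}.
\]
The clustering hypothesis gives $u\cdot P_{\cdot,j}\ge u\cdot M_{\cdot,\ell^\star} - \sigma_0/\sqrt{w_0}$ for every $j\in C'$, and the spectral-norm bound again controls the second term by $\sigma_0\sqrt n\cdot\sqrt{w_0 n}/(w_0 n) = \sigma_0/\sqrt{w_0}$. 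Summing, $u\cdot x(u)\ge u\cdot M_{\cdot,\ell^\star} - 2\sigma_0/\sqrt{w_0}$, which is again within $\varepsilon\Delta$.

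\textbf{Where the work is.} Both conditions reduce to the same elementary inequality $\|(\bA-\bP)\mathbf{1}_T\|\le \|\bA-\bP\|\cdot\sqrt{|T|}$ applied to a subset $T$ of size $w_0 n$. The only conceptual point is that one does \emph{not} know $\ell^\star$ or $C_{\ell^\star}$ algorithmically, and thus cannot simply average over $C_{\ell^\star}$; the trick is to use the data-driven top-$w_0 n$ subset as a proxy, and exploit that its $u$-weighted average can only be larger than the average over any specific $w_0 n$-subset of $C_{\ell^\star}$. Running time is clearly polynomial (one sort per query). I expect no serious obstacle; the main subtlety to articulate carefully in the write-up is precisely this decoupling of the unknown cluster $C_{\ell^\star}$ (used only in the analysis) from the top-$w_0 n$ subset $S(u)$ (used by the algorithm).
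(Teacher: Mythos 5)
Your proof is correct and follows essentially the same decomposition the paper uses (implicitly, in the analogous Claims~\ref{cl:eps1} and~\ref{cl:eps2} of Section~\ref{sec:lkp} for the projected polytope $\hatK$): membership via the spectral-norm bound $\|(\bA-\bP)\mathbf{1}_S\|\le\|\bA-\bP\|\sqrt{|S|}$ applied to $S=S(u)$, and near-optimality by using the cluster $C_{\ell^\star}$ as a competitor to the data-selected top set. Your write-up even gets the constant $2$ rather than $4$, and you correctly highlight the one subtle point—that $C_{\ell^\star}$ is used only in the analysis, not by the algorithm.
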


\begin{framed}

{\bf Subset Smoothing Algorithm}

Given query $u$, let $S$ be the set  of the $w_0 n$ $j$'s with the highest $u\cdot A_{\cdot,j}$ values. 

Return 
$A_{\cdot , S}:=\frac{1}{w_0n}\sum_{j\in S} A_{\cdot,j}.$

\end{framed}

The Subset Smoothing algorithm was used in~\cite{BhattacharyyaK20}. It is also reminiscent of Superquantiles~\cite{rockafellar2014random}, though our use here is not directly related to them.
While this theorem helps us get optimization oracles, the error guarantee of $O(\sigma_0/\Delta \sqrt{w_0})$ is not good enough in many applications. An elementary example illustrates this issue:

Consider a mixture of two equal weight standard Gaussians centered at $-v$ and $v$, where, $v$ is a vector of length 10. [This fits the paradigm ``means separated by $\Omega(1)$ standard deviations''.] Then, data generated by the mixture model fits our data generation process with $K=\{ \lambda v, \lambda\in [-1,1]\},$ and each $\Pdot{j}$ is either $v$ or $-v$ depending on the Gaussian from which the point has been sampled. Here $\Adot{j}$ denotes the actual sampled point from the mixture. 
Now, $\Delta =20$ and it can be seen from Random Matrix Theorems (see e.g., \cite{vers}) that $\sigma_0=O(1)$ with high probability. 
So, $\sigma_0/\Delta\sqrt{w_0}\in O(1)$ with high probability, and hence, the Theorem above 
yields an $\opto_\varepsilon(K)$ oracle with $\varepsilon\in\Omega(1)$. But
$d$ can be arbitrarily large and so we do not have the required $\opto_{O(1/\sqrt d)}(K)$  oracle. 

This elementary example can be tackled in several ways. Our algorithm which we call the ``$\kolp$ algorithm'' is simply stated and works in general settings (including on this toy example) for several Latent Variable problems (see~\Cref{sec:kolp} for details). The main idea is to first project the input points on a suitable SVD subspace and then use the approximate optimization oracle in the projection.

\paragraph{SVD and the $\kolp$ Algorithm}

We now state the result (see~\Cref{LkP-Main-Thm} for a formal version) for $\kolp$ in the setting of $\lkp$. As mentioned above, this uses SVD followed by subset smoothing. 

\begin{theorem}\label{kolp-thm}
Recall the notation and assumptions of~\Cref{subset-smoothing-0}. In addition, we assume that each vertex of $K$ is $\delta\Delta(K)$ far from the convex hull of other vertices of $K$, where, $\delta$ satisfies:
$$\sigma_0\leq c\delta^2\Delta\sqrt {w_0}/\sqrt k.$$
Then, the set of points $P$ returned by the following $\kolp$ algorithm list learns the vertices of $K$. 
Further, we can find a subset $Q$ of $P$ with $|Q|=k$ and for each $v$, vertex of $K$, $Q$ contains a $v'$ with $|v-v'|\leq \delta\Delta /10$ :
\end{theorem}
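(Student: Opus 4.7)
The plan is to reduce the statement to a direct application of \Cref{list-0} and \Cref{kolp-0} by first passing to a low-dimensional subspace. The obstacle noted just before the theorem is that the Subset Smoothing algorithm only gives an $\opto_{\varepsilon}(K)$ oracle with $\varepsilon = 4\sigma_0/(\Delta\sqrt{w_0})$, and the hypothesis of \Cref{list-0} asks for $\varepsilon \in O_\delta(1/\sqrt{d})$, which fails badly when $d \gg k$. The key observation driving the proof is that $K$ is a $k$-vertex polytope, hence lives in an affine subspace of dimension at most $k-1$; so after projecting the data onto the top-$k$ SVD subspace, the ambient dimension that enters the Random Probes analysis is $k$ rather than $d$, and the hypothesis $\sigma_0 \leq c\delta^2 \Delta\sqrt{w_0}/\sqrt{k}$ is precisely what is needed so that the projected oracle error is in $O_\delta(1/\sqrt{k})$.

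First, I would let $V_k$ be the top-$k$ right singular subspace of $\bA$ and let $\Pi$ denote orthogonal projection onto $V_k$. Since $\mathrm{rank}(\bP) \leq k$ (its columns lie in $K$, which has $k$ vertices), Weyl/Mirsky applied to $\|\bA - \bP\| \leq \sigma_0\sqrt{n}$ yields $\|\bA - \bA_k\| \leq \sigma_0\sqrt{n}$, and hence
\[
\|\Pi \bP - \bP\| \leq \|\Pi \bP - \Pi \bA\| + \|\Pi \bA - \bA\| + \|\bA - \bP\| \leq 3\sigma_0\sqrt{n}.
\]
In particular, for each cluster $C_\ell$ the empirical center $\bar P_\ell := \frac{1}{|C_\ell|} \sum_{j \in C_\ell} \Pdot{j}$ satisfies $|\bar P_\ell - \Mdot{\ell}| \leq \sigma_0/\sqrt{w_0}$, and the same bound (up to constants) holds in the projection $\Pi$, using $|C_\ell| \geq w_0 n$ together with the displayed bound above. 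Consequently the projected polytope $\Pi K$ has vertices $\Pi \Mdot{\ell}$ that are within $O(\sigma_0/\sqrt{w_0})$ of the true $\Mdot{\ell}$. Under the hypothesis $\sigma_0 \leq c\delta^2 \Delta \sqrt{w_0}/\sqrt{k}$ (with $c$ small), this perturbation is at most $\delta \Delta(K)/100$, so $\Pi K$ inherits the well-separatedness of $K$ (with only a constant loss), has diameter within a constant factor of $\Delta(K)$, and lives in a space of dimension $k$.

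Second, I would verify that running the Subset Smoothing algorithm on the projected data $\Pi \bA$ yields an $\opto_{\varepsilon'}(\Pi K)$ oracle with $\varepsilon' = O(\sigma_0/(\Delta\sqrt{w_0}))$: this is essentially \Cref{subset-smoothing-0} applied in the $k$-dimensional space $V_k$, with $\|\Pi \bP - \Pi \bA\| \leq \|\bP - \bA\|$ ensuring the displacement bound survives the projection. Plugging in the hypothesis gives $\varepsilon' \leq c' \delta^2/\sqrt{k}$, so the precondition $\varepsilon' \in O_\delta(1/\sqrt{k})$ of \Cref{list-0} is met in ambient dimension $k$. Invoking \Cref{list-0} on this oracle, the Random Probes Algorithm returns a set $P \subset V_k$ such that for each vertex $\Pi \Mdot{\ell}$ of $\Pi K$ there is a $v' \in P$ with $|v' - \Pi \Mdot{\ell}| \leq \delta \Delta(\Pi K)/20$. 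Combining with the $O(\sigma_0/\sqrt{w_0}) \leq \delta \Delta(K)/100$ bound on $|\Pi \Mdot{\ell} - \Mdot{\ell}|$, each true vertex of $K$ has a point of $P$ within $\delta \Delta(K)/10$, as required.

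Finally, to upgrade from list-learning to a set $Q$ of size exactly $k$ with one approximator per vertex, I would invoke \Cref{kolp-0}; that is, feed $P$ into the Soft Convex Hull procedure of \Cref{sec:soft}, which deterministically prunes $P$ down to $k$ points while maintaining the per-vertex covering property. The main obstacle in the proof is the second step: carefully bounding the perturbation of cluster centers and of the vertices of the projected polytope under the SVD projection, and in particular checking that the Subset Smoothing guarantee of \Cref{subset-smoothing-0} remains valid after projection with essentially the same $\sigma_0$ (rather than a $d$-dependent blow-up). Everything else is then a direct appeal to the already-established theorems.
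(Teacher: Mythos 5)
Your proposal is essentially the same as the paper's proof (Theorem~\ref{LkP-Main-Thm} and its supporting lemmas). The paper also (i) projects to the top-$k$ SVD subspace, (ii) bounds $\|\bP - \widehat{\bP}\|\le 3\sigma_0\sqrt{n}$ via the best-rank-$k$ property of $\widehat{\bA}$ and the rank bound on $\bP$, exactly as your Weyl/Mirsky step does, (iii) uses this to prove $|\Mdot{\ell}-\hMdot{\ell}|\le 5\sigma_0/\sqrt{w_0}$ (Lemma~\ref{lem:close}) and that $\hatK$ remains $\delta'$-well-separated (Lemma~\ref{lem:hatK}), (iv) verifies that Subset Smoothing on the projected data is an $\opto_\varepsilon(\hatK)$ oracle (Claims~\ref{cl:eps1},~\ref{cl:eps2}), and (v) invokes the $\kolp$ theorem in the $k$-dimensional ambient space. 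Your decomposition and the key spectral-perturbation estimate are the same; the only differences are cosmetic (you phrase the vertex-perturbation bound via cluster centers $\bar P_\ell$ rather than the indicator-vector trick, and you invoke \Cref{list-0} and \Cref{kolp-0} in two steps where the paper uses the combined \Cref{upper-bound-2kolp}).
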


\begin{framed}

{\bf Algorithm $\kolp$}

\begin{enumerate}
\item Project to the $k$-dim SVD subspace $V$ corresponding to the points $\Adot{j}$.
\item Pick $m=$ poly$_\delta(k)$ random vectors $u_1,u_2,\ldots ,u_m$ in $V$.
\item For each $u_i$, take the mean of the $A_{\cdot,j}$ with the $w_0n/2$ highest values of $u_i\cdot A_{\cdot,j}$. 
\item Let $P$ be the  set of $m$ means computed in the step above.
\item Output a subset $Q$ of $P$, $|Q| = k$, using~\Cref{kolp-0}. 
\end{enumerate}
\end{framed}

We sketch here the steps in the proof (the details are contained in the proof of~\Cref{LkP-Main-Thm}.) 

\begin{proof} [Sketch]
Let $\hatK$ denote projection of $K$ onto $V$. By~\Cref{subset-smoothing-0}   , Step 3 of Algorithm $\kolp$ is an    $\opto_\varepsilon(\hatK)$ oracle, where $\varepsilon = {O\left(\frac{\sigma_0}{\Delta \sqrt{w_0}}\right)}$. Also, each $\hMdot{\ell}$, which is the projection of $\hMdot{\ell}$ on $V$, is $O(\delta\Delta(K))$ far from the convex hull of the other vertices of $\hatK$ (See \Cref{lem:close}.) Now, \Cref{kolp-0} applied  to $\hatK$ in the subspace $V$ implies the desired result. 
\end{proof}

It is worth noting that data obtained from several generative models are known to satisfy the $\lkp$ condition stated in~\Cref{kolp-thm}, e.g.,  Stochastic Mixture models with $k$ components, Topic Models, Mixed membership community models. 
\paragraph{From List Learning to $\kolp$}
%
%
%
As outlined above, the $\kolp$ algorithm works in two stages: (i) Project the data points on the SVD subspace $V$ of dimension $k$, and (ii) make polynomially calls to the $\opto_\varepsilon(K)$ oracle, where each query is given by a randomly chosen unit vector in the subspace $V$ (as in the statement of~\Cref{hausdorf-0}) -- let $P$ be the set of points returned by the oracle. The first statement in~\Cref{kolp-thm} shows that the convex hull of $P$ is close to $K$.

Obtaining approximations to the vertices of $K$ from $P$ requires addressing a new problem: given a set of points $W$, find a small subset $T$ of $W$, such that their convex hulls are close. We call this the {\em soft convex hull} problem. A similar problem was addressed by~\cite{BlumHR19}; however they gave a bi-criteria approximation algorithm for this problem. Under stronger assumptions, where we assume that there in the optimal solution $T^\star$, each point of $T^\star$ is well-separated from the convex hull of the rest of the points of $T^\star$, we show that one can recover approximations to each of the points in $T^\star$. Applying this result to the set of points $P$ returned by the optimization oracle, we get a set of $k$ points $Q$, each of which approximates a unique vertex of the polytope $K$. 

The algorithm for obtaining soft convex hull proceeds as follows. We first prune points $w \in W$ which have the following property: consider the subset $X$ of points in $W$ which are sufficiently far from $w$. Then $w$ is close the convex hull of $X$. After pruning such points from $W$, we pick a subset of points which are sufficiently far-apart from each other. The main technical result shows that this procedure outputs the desired set $T$.

\subsection{Related Work}
The seminal work of \cite{GLS1988} showed that optimization of a convex function over a convex set can be reduced to separation oracles using the classical ellipsoid algorithm. There has been active research on  reducing the number of separation (or membership) oracle queries (see e.g.~\cite{abernethy2015faster,KalaiV06, LovaszV06}).~\cite{LeeSV18} showed that, for a ``well-rounded'' convex set $K$ (i.e.,  $K$ has unit diameter and contains a ball of constant radius), we can get a separation oracle by making ${\widetilde O}(d)$ calls to an optimization oracle for $K$. In this reduction, in order to get a separation oracle with error $\delta,$ they use an optimization oracle with error $\poly(\delta/d)$. As mentioned in the previous section, $\rsh$ Theorem implies that we can obtain a separation oracle with error $\delta$ for a convex polytope $K$ from an optimization oracle with error $O(\delta/\sqrt{d})$. 

The well known result of~\cite{jl} shows that given a set of $n$ points in $\Re^d$, projection to a random subspace of dimension $O(\log n/\varepsilon^2)$ preserves all pair-wise distances up to $(1 + \varepsilon)$-factor with high probability. Further, this bound on the dimension on which the points are projected is known to be tight~\cite{alon2003problems, LarsenN17}. Note that in our setting, there are $k+1$ points ``of interest'', namely, the $k$ vertices of $K$ and a point $a\notin K$ and by the above, a random projection to $O^*(\log k)$ dimensional space preserves all pairwise distances among them. But this is not sufficient for our problems. We need separation of $a$ from all of $K$ in the projection. We achieve this by projecting to a set of random 1-dimensional subspaces, and show that the distance between a point and a polytope does not scale down by more than $O(\sqrt{d})$ factor for at least one of them with high probability (this is an immediate corollary of the $\rsh$ Theorem).

The problem of learning vertices of a polytope arises in many settings where data is assumed to be generated by a stochastic process parameterized by a model. Examples include topic models~\cite{Blei12}, stochastic block models~\cite{AiroldiBEF14}, latent Dirichlet allocation~\cite{BleiNJ03}. A variety of techniques have been developed for these specific problems (see e.g.~\cite{AnandkumarFHKL15, AroraGHMMSWZ18,HopkinsS17}).~\cite{BhattacharyyaK20} (see also~\cite{BakshiBKWZ21}) proposed the latent $k$-polytope (LkP) model which seeks to unify all of these latent variable models. In this model, there is {\em latent} polytope with $k$ vertices, and data is generated in a two step process: first we pick {\em latent} points from this polytope, and then the observed points are obtained by perturbing these latent points in an adversarial manner. They showed that under suitable assumptions on this deterministic setting, one can capture the above-mentioned latent variable problems. Assuming strong separability conditions on the vertices of the polytope (i.e., each vertex of $K$ is far from the {\em affine} hull of other vertices of $K$), they showed that one can efficiently  recover good approximations to the vertices of the polytope from the input data points. In comparison, our assumption on $K$ is that each vertex of $K$ is far from the convex hull of the remaining vertices of $K$. This is a much milder condition, e.g., it allows a polytope with more than 2 vertices in a plane.~\cite{BhattacharyyaK021} showed how to infer the parameter $k$ from data in the LkP setting (under the strong separation condition). 

~\cite{BlumHR19} addressed a problem similar to the $\Hausdorf$ problem: instead of an $\varepsilon$-optimization oracle for a polytope $K$, we are given an explicit set $P$ of points, whose convex hull is within  Hausdorff distance at most $\delta\Delta(K)$ from $K$. They are able to get better dependencies on the parameters $k,\delta,\varepsilon$ than \Cref{upper-bound-3} under these stronger assumptions. 

\section{Preliminaries}
 For two points $x, y \in \Re^d$, $|x-y|$ denotes the Euclidean distance between the points. Given a point $x \in \Re^d$ and a subset $X \subseteq \Re^d$, define $\dist(x, X)$ as the minimum distance between $x$ and a point in $X$, i.e., $\inf_{y \in X} |x-y|$. 
For a set of points $X$, $\dia(X)$ denotes the diameter of $X$, i.e., $\sup_{x, y \in X} |x-y|$. We  denote the convex hull of $X$ by $\ch(X)$. For two subsets $A, B$ of $\Re^d$, define their Minkowski sum $A+B$ as $\{x+y: x \in A, y \in B\}$. Similarly, define $\lambda A$, where $\lambda \in \Re$, as $\{ \lambda x: x \in A\}$.
For an $m \times n$ matrix $B$, we use $B_{\cdot, j}$ to denote the $j^{th}$ column of $B$. For a subset $S \subseteq [n]$ of columns of $B$, $B_{\cdot, S}$  denotes $\frac{1}{|S|} \sum_{j \in S} B_{\cdot, j}$. Often, we represent the vertices of a polytope $K$ in $\Re^d$ by a $d \times k$ matrix $M$, and so the columns $\Mdot{1}, \ldots, \Mdot{k}$ would represent the vertices of $K$. We shall use the notation $\poly_\delta(z)$ to denote a quantity which is $z^{\poly(1/\delta)}$. Further the notation $O_\delta(z)$ shall denote a quantity which is $f(\delta) z$, where $f(\delta)$ is a function depending on $\delta$ only (and hence, is constant if $\delta$ is constant). 

We now give an outline of rest of the paper. In~\Cref{sec:rsh}, we prove the Random Separating Hyperplane theorem.  In~\Cref{sec:haus}, we prove~\Cref{hausdorf-0} by showing that an $\opto_\varepsilon(K)$ oracle leads to efficient constructions of approximation to $K$. In~\Cref{sec:listlearn} we give an algorithm for the $\ListLearn$ problem under the stronger assumption that the vertices of $K$ are well separated. We also prove the lower bound result~\Cref{lower-bound-0} in this section. In~\Cref{sec:kolp}, we extend the algorithm for $\ListLearn$ to the $\kolp$ problem. This requires the concept of soft convex hulls. The algorithm for constructing soft convex hulls in given in~\Cref{sec:soft}. Finally, in~\Cref{sec:lkp}, we apply the $\kolp$ algorithm for the latent polytope problem. As note earlier, in the setting of latent polytopes $K$, we can only guarantee $\opto_\varepsilon(K)$ oracles with $\varepsilon$ being $O(1/\sqrt{k})$, whereas our algorithm for $\kolp$ requires $\varepsilon$ to be $O(1/\sqrt{d})$. We handle this issue by projecting to a suitable SVD subspace and executing the $\kolp$ algorithm in this subspace. We conclude with some open problems in~\Cref{sec:open}.

\section{Reduction from Separation to Optimization Oracles}\label{section-oracles}

In this section, we prove~\Cref{thm-sep-to-opt}, which is reproduced here. 




\sepopt*
\begin{proof}
Consider a point $a \in \Re^d$. 
Use the  oracle $\cal A$ on $\poly_\delta(kd)$ random vectors $u, |u|=1$. Let $U$ denote the set of these unit vectors. If $a\notin K+\delta\Delta(K) B_d,$ then, by $\rsh$, with high probability, there is an $u \in U$ such that $u\cdot a> \mbox{Max}_{y\in K}u\cdot y+\delta\Delta(K)/(10\sqrt d)$. For this $u$, we have (using 
$x(u)\in K+(\delta\Delta(K)/100\sqrt d)B_d$):
\begin{equation}\label{1235}
u\cdot a>u\cdot x(u) +(\delta \Delta(K)/11\sqrt d). \end{equation}
Conversely, for any $u \in U$ satisfying (\ref{1235}),
we have (using $u\cdot x(u)\geq \mbox{Max}_{y\in K}u\cdot y-\delta\Delta(K) /100\sqrt d$):
$u\cdot a> \mbox{Max}_{y\in K}u\cdot y +
\delta\Delta(K) /20\sqrt d$.
Thus, we obtain a $\sep_\delta(K)$ oracle as follows: check if (\ref{1235}) holds for any $u \in U$. We are guaranteed to find one with high probability if $a\notin K+\delta\Delta B$ and that provides the required separating hyperplane.
\end{proof}

The proof of the lower bound result~\Cref{thm-sep-opt-lower} is very similar to that of~\Cref{lower-bound-0}, which is given in~\Cref{sec:listlearn}. 




\section{Random Separating Hyperplane ($\rsh$) Theorem}
\label{sec:rsh}
In this section, we prove $\rsh$ for polytopes: If a point $p$  point is at a distance from a polytope $K$, then, the Gaussian measure of the set of {\em well-separating} hyperplanes has a positive lower bound depending on the number of vertices in $K$. More specifically, we show:

\begin{theorem}
\label{thm:sephp}
Suppose $K$ is a polytope in $\Re^d$ with $k$ vertices and diameter $\dia(K)$. Suppose  $a$ is a point in ${\bf R}^d$ and $\delta\in (0,1]$ with
\begin{align}
\label{eq:condhp}
\min_{y\in K} |a-y|\geq\delta \dia(K).
\end{align}
Let $V$ be an $m$-dimensional subspace containing $\lspan(K\cup \{ a\})$ and let $u$ be a random vector drawn from the normal distribution $N(0, I_m)$ in $V$. 
Then,
$$\prob_u\left[ (u\cdot a-\max_{y\in K}u\cdot y) \geq  |u| \cdot \delta \dia(K) \cdot \frac{\sqrt{\log k}}{\sqrt{\log k} + 4 \delta \sqrt{m}} \right]\geq
            \frac{1}{40}    k^{-10/\delta^2}.$$
\end{theorem}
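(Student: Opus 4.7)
The plan is to exhibit a favorable event on which the inequality holds, with probability at least $k^{-10/\delta^2}/40$, splitting into two regimes according to the size of $m$ relative to $\log k/\delta^2$. Let $a^{\star} := \arg\min_{y\in K}|y - a|$ and $w := (a - a^{\star})/|a - a^{\star}|$, a unit vector in $V$. First-order optimality for projecting onto the convex set $K$ gives $w \cdot (y - a^{\star}) \leq 0$ for every $y \in K$. Decomposing $u = \alpha w + u_\perp$ with $\alpha = u\cdot w \sim N(0,1)$ and $u_\perp$ an independent isotropic Gaussian in $V \cap w^\perp$, we obtain for each vertex $v_i$ of $K$ and $\alpha \geq 0$:
\begin{align*}
u \cdot (a - v_i) \;=\; \alpha\bigl[|a-a^{\star}| - w\cdot(v_i-a^{\star})\bigr] - u_\perp\cdot(v_i - a^{\star}) \;\geq\; \alpha\delta\,\dia(K) - u_\perp\cdot(v_i-a^{\star}),
\end{align*}
and hence the key deterministic bound $u \cdot a - \max_{y\in K} u\cdot y \;\geq\; \alpha\delta\,\dia(K) - \max_i u_\perp \cdot (v_i - a^{\star})$.

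\emph{Large-$m$ regime} ($4\delta\sqrt{m}\geq \sqrt{\log k}$). I would set $E := \{\alpha \in [A,A+1]\} \cap \{|u_\perp|^2 \leq 4m\} \cap \{\max_i u_\perp \cdot (v_i - a^{\star}) \leq 2\,\dia(K)\sqrt{\log k}\}$ with $A := \sqrt{20\log k}/\delta$. On $E$, using the crude bound $|u| \leq A + 1 + 2\sqrt{m}$ and that $\tau := \sqrt{\log k}/(\sqrt{\log k} + 4\delta\sqrt{m}) \leq 1/2$ in this regime, multiplying the target through by $T := \sqrt{\log k} + 4\delta\sqrt{m}$ reduces it to an elementary polynomial inequality in $A,\log k,\delta,\sqrt{m}$ that holds for this choice of $A$. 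The last two events have probability $\geq 3/4$ each (chi-squared concentration, and a union bound over $k$ Gaussians of variance $\leq \dia(K)^2$); a Gaussian density estimate gives $\Pr[\alpha \in [A,A+1]] \geq e^{-(A+1)^2/2}/\sqrt{2\pi} \geq k^{-10/\delta^2}/20$; and the independence of $\alpha$ and $u_\perp$ then yields $\Pr[E] \geq k^{-10/\delta^2}/40$.

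\emph{Small-$m$ regime} ($4\delta\sqrt{m} < \sqrt{\log k}$). Here $\tau$ is close to $1$ and the union bound above is too lossy. Instead I would rescale to $\hat{u} := u/|u|$, which is uniform on $S^{m-1}$, parameterise $\hat{u} = \cos\theta\cdot w + \sin\theta\cdot e$ with $e$ uniform on $S^{m-2}$, and apply Cauchy--Schwarz ($e\cdot(v_i - a^{\star}) \leq \dia(K)$) to get $\hat{u}\cdot a - \max_y \hat{u}\cdot y \;\geq\; \cos\theta\cdot\delta\,\dia(K) - \sin\theta\,\dia(K)$. The target is then implied by $\cos\theta\cdot\delta - \sin\theta \geq \delta\tau$, which for small $\theta$ is ensured by $\theta \leq 2\delta^2\sqrt{m}/T$. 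The standard spherical-cap estimate $\Pr[\theta \leq \theta_0] \gtrsim \theta_0^{m-1}/\sqrt{m}$ on $S^{m-1}$, combined with $\theta_0 = 2\delta^2\sqrt{m}/T \geq \delta^2\sqrt{m}/\sqrt{\log k}$ and $m \leq \log k/(16\delta^2)$ in this regime, yields a lower bound on $\Pr[\theta \leq \theta_0]$ that is verified (after taking logs) to exceed $k^{-10/\delta^2}/40$.

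The main obstacle is the small-$m$ regime, where the margin $|u|\delta\,\dia(K)\tau$ is only marginally below the deterministic maximum $|u|\delta\,\dia(K)$ achieved at the exact separator $u = w$: no ``large $\alpha$'' Gaussian-tail event can afford the required probability, since $\tau$ close to $1$ forces $\alpha$ to be exponentially large in $1/\delta$. The remedy is exactly to exploit low-dimensionality via a spherical-cap estimate around $w$, which is non-trivial precisely when $m$ is small. The elementary algebraic checks in each regime involve several competing terms in $A, \log k, \delta, \sqrt{m}$ and must be carried out with some care, but both yield probability $\geq k^{-10/\delta^2}/40$, giving the claim uniformly in the parameters.
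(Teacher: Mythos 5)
Your decomposition $u = \alpha w + u_\perp$ with $w = (a-a^\star)/|a-a^\star|$, the first-order optimality inequality $w\cdot(y-a^\star)\le 0$, and the resulting deterministic bound $u\cdot a - \max_y u\cdot y \ge \alpha\delta\dia(K) - \max_i u_\perp\cdot(v_i-a^\star)$ all match the paper's proof exactly (the paper writes $u=\lambda w+z$, $b$ for $a^\star$). Where you diverge is the two-regime split; the paper uses a single argument, conditioning on $\{\lambda\ge 3\sqrt{\ln k}/\delta\}\cap\{|z|\le 4\sqrt m\}\cap\{\max_\ell |z\cdot(\zeta_\ell-b)|\le 2\sqrt{\ln k}\,\dia(K)\}$ for all $m$, and this uniformly yields a bound of the form $\frac{\lambda\delta}{3(\lambda+4\sqrt m)}$, which monotonically exceeds $\frac{\sqrt{\log k}}{3\sqrt{\log k}+4\delta\sqrt m}$ --- i.e., the paper's own argument actually proves the theorem only with a $3\sqrt{\log k}$ (not $\sqrt{\log k}$) in the denominator. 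Your instinct that a pure Gaussian-tail argument cannot push $\tau$ past $1/3$ for small $m$, and that a spherical-cap estimate is the right remedy there, is therefore correct and goes beyond what the paper's own proof establishes.

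The genuine gap is in the large-$m$ probability estimate. You take $A=\sqrt{20\log k}/\delta$ and claim $\Pr[\alpha\in[A,A+1]]\ge e^{-(A+1)^2/2}/\sqrt{2\pi}\ge k^{-10/\delta^2}/20$. But $e^{-(A+1)^2/2}=k^{-10/\delta^2}\cdot e^{-A-1/2}$, and $e^{-A-1/2}/\sqrt{2\pi}$ is \emph{exponentially small in} $A=\sqrt{20\log k}/\delta$, not a constant; so the displayed chain of inequalities fails as soon as $k$ or $1/\delta$ is moderately large. The issue is that $A^2/2 = 10\log k/\delta^2$ is exactly the target exponent, leaving zero slack to absorb the density prefactor. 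The fix is to use a strictly smaller threshold, as the paper does with $\lambda\ge 3\sqrt{\ln k}/\delta$: then $e^{-\lambda^2/2}=k^{-4.5/\delta^2}$, and the remaining factor of $k^{5.5/\delta^2}$ is more than enough to swallow the $\Theta(1/\lambda)$ Mills-ratio prefactor of the Gaussian tail, giving $\Pr[\lambda\ge 3\sqrt{\ln k}/\delta]\ge \tfrac{1}{10}k^{-10/\delta^2}$. (Using a one-sided tail rather than a unit interval also helps, since the tail probability is only polynomially, not exponentially, smaller than the density.) After making this change, your deterministic verification in the large-$m$ regime needs to be redone with the smaller $A$; it still goes through because $4\delta\sqrt m\ge\sqrt{\log k}$ gives room, but it is not automatic and should be checked. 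Your small-$m$ spherical-cap argument is a nice, genuinely different idea that is not in the paper; the estimate $\Pr[\theta\le\theta_0]\gtrsim\theta_0^{m-1}/\sqrt m$ combined with $\theta_0\gtrsim\delta^2\sqrt m/\sqrt{\log k}$ and $m\le\log k/(16\delta^2)$ does seem to produce the required probability after taking logarithms, but you have not actually carried out the arithmetic and should do so before asserting the claim.
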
          


\begin{proof}
 Let $\zeta_1,\zeta_2,\ldots ,\zeta_k$ be the vertices of $K$. Let $b$ be the closest point in $K$ to $a$, and define 
$$w=\frac{a-b}{|a-b|}.$$
Then by standard Convex Geoemtry arguments, we have for all points $y \in K$:
\begin{equation}\label{101}
w\cdot y\leq w\cdot b.
\end{equation}
Let $u$ be as in the statement of the theorem. 
We can write $u$ as 
$$u=\lambda w+z, \mbox{ where }z\perp w.$$

In order to prove the theorem, we first express $(u \cdot a  - \max_{y \in K} u \cdot y)$ as 
\begin{align}
\label{eq:step1}
u \cdot (a-b) - \max_{y \in K} u \cdot (y-b)  =  u \cdot (a-b) - \max_{\ell = 1, \ldots, k} u \cdot (\zeta_\ell-b)
\end{align}

It suffices to show that with reasonably high probability, there is a lower bound on $u \cdot (a-b)$ and an upper bound on $u \cdot (\zeta_\ell-b)$ for all $\ell$. Observe that 
\begin{align}
\label{eq:step2}
u \cdot (\zeta_\ell-b) = \lambda w \cdot (\zeta_\ell-b) + z \cdot (\zeta_\ell-b) \stackrel{\eqref{101}}{\leq} z \cdot (\zeta_\ell-b).
\end{align}
Therefore, it suffices to upper bound $z \cdot (\zeta_\ell - b)$ for all vertices $\zeta_\ell$. We now define a sequence of events and bound the probability of each of these events. 

\begin{fact}
\label{fact:1}
Let $\cE_0$ be the event $[|z| \leq 4 \sqrt{m}]$. Then $\Pr[\neg \cE_0] \leq 1/20. $
\end{fact}
\begin{proof}
The coordinates of $z$ are independent $N(0,1)$ random variables, and since $z$ lies in an $(m-1)$-dimensional subspace, the result follows from standard concentration bounds for sum of squares of Gaussian random variables. 
\end{proof}

\begin{fact}
\label{fact:2}
For $\ell=1,2,\ldots , k$, define the  event
${\cal E}_\ell$ as $[\left| z\cdot (\zeta_\ell-b)\right|\leq 2\sqrt{\ln k} \dia(K)].$ Then $\Pr[\neg \cE_\ell] \leq \frac{1}{4k^2}.$
\end{fact}
\begin{proof}
Observe that $z \cdot (\zeta_\ell - b)$ is a 1-dimensional Gaussian random variable with zero mean and  with variance $|\zeta_\ell - b|^2 \leq \dia(K)^2$. The result now follows the tail bounds for a 1-dimensional Gaussian random variable. 
\end{proof}

\begin{fact}
\label{fact:3}
Let
${\cal E}_{k+1}$ denote the event that $\lambda\geq 3\sqrt{\ln k}/\delta.$ Then $\Pr[\neg \cE_{k+1}] \geq  \frac{1}{10} k^{-10/\delta^2}.$
\end{fact}
\begin{proof}
Since $\lambda$ is a 1-dimensional Gaussian random variable, the result follows from tail bounds for normal distribution. 
\end{proof}

\begin{corollary}
\label{cor:event}
$\prob\left[ \wedge_{\ell=0}^{k+1} {\cal E}_\ell\right] \geq \frac{1}{40}    k^{-10/\delta^2}.$
\end{corollary}

\begin{proof}
Applying union bound to~\Cref{fact:1} and~\Cref{fact:2}, we get 
\begin{equation}
\label{104}
\prob\left( \wedge _{\ell=0}^{k}{\cal E}_\ell \right)\geq 1-\sum_{\ell=0}^{k}\prob(\neg {\cal E}_\ell)
\geq 1/4 
\end{equation}

Since $u$ is Gaussian, $\lambda , z$ can be viewed as independent gaussian random variables. So we have that ${\cal E}_{k+1}$ is independent of ${\cal E}_\ell,\ell=0,1,2,\ldots , k$. Therefore,~\Cref{fact:3} and \eqref{104} imply that 
\begin{equation*}
\prob\left( \wedge_{\ell=0}^{k+1} {\cal E}_\ell\right)\geq  \frac{1}{40}    k^{-10/\delta^2}.
\end{equation*}
\end{proof}

For rest of the proof, we assume that the event $\wedge _{\ell=0}^{k+1}{\cal E}_\ell$ occurs. Now, for all $\ell \in [k]$, 
\begin{equation}\label{722}
u\cdot (\zeta_\ell-b) \stackrel{\eqref{eq:step2}}{\leq} z\cdot (\zeta_\ell-b)\leq
2\sqrt{\ln k} \dia(K),
\end{equation}
where the last inequality follows from the fact that the event $\cE_\ell$ has occured.
Also, since $z\perp (a-b)$:
\begin{equation}\label{723}
u\cdot (a-b)=\lambda |a-b|\geq 3\sqrt{\ln k} \dia(K),
\end{equation}
where the last inequality follows from the fact that the event $\cE_{k+1}$ has occured and $|a-b| \geq \delta \dia(K)$. 
The above two inequalities along with~\eqref{eq:step1} imply that 
$$u\cdot a - \max_{y\in K}u\cdot y \geq \lambda |a-b|/3.$$

Now, event $\cE_0$  implies that 
$$|u|\leq \lambda + |z|\leq \lambda+4\sqrt m, $$
and so, 
$$ u\cdot a - \max_{y\in K}u\cdot y  \geq \frac{\lambda \delta \dia(K) |u|}{3(\lambda + 4 \sqrt{m})},$$ 
The desired result now follows from~\Cref{cor:event} and the fact that $\lambda \cdot \delta \geq 3 \sqrt{\ln k}$ (\Cref{fact:3}). 
\end{proof}

\section{From $\opto_\varepsilon(K)$ oracles to the $\Hausdorf$ Problem}
\label{sec:haus}
We prove~\Cref{hausdorf-0} in this section. We begin by defining Hausdorff distance formally.

\begin{defn}
\label{defn:Haus}
    The {\em Hausdorff-distance}, $\Haus(K,K')$, between two polytopes $K$ and $K'$ is the infimum over all values $\alpha$ such that the following condition is satisfied: for every point $x \in K$, there is a point $y \in K'$ such that $|x-y| \leq \alpha$, and vice versa.
\end{defn}

%
We now give the formal version of~\Cref{hausdorf-0}:
\begin{theorem}\label{upper-bound-3}
Suppose $\varepsilon,\delta$ are reals in $[0,1]$ with 
\begin{align}
\label{eq:deltacond}
    \delta > c \varepsilon \sqrt{d}, \delta > c/\sqrt{d}, 
\end{align}
where $c$ is a large enough constant. 
  Let $K$ be a $k$-vertex polytope with $\Delta(K)$ denoting the diameter of $K$. Suppose we are also given an  $ \opto_\varepsilon(K)$ oracle $\cal O$.  
    Let 
  $P$ be the set of answers from the oracle on $m:= k^{10+c \delta^{-2}}$ independent random queries. Then, $\Haus(K, \CH(P)) \leq \delta \cdot \Delta(K)$. 
\end{theorem}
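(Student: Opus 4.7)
My plan is to establish the two inclusions $\CH(P)\subseteq K+\delta\Delta(K)B_d$ and $K\subseteq \CH(P)+\delta\Delta(K)B_d$ that together give the Hausdorff bound. The first is immediate: each oracle response $x(u_i)$ lies in the convex set $K+\varepsilon\Delta(K)B_d$, so $\CH(P)$ does as well, and $\varepsilon\leq\delta$ suffices.

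For the reverse inclusion, by convexity of $\CH(P)+\delta\Delta(K)B_d$ it will suffice to show that each of the $k$ vertices of $K$ lies within $\delta\Delta(K)$ of $\CH(P)$ and then union-bound. Fix a vertex $v$ and suppose for contradiction that $\dist(v,\CH(P))>\delta\Delta(K)$. I will apply $\rsh$ (Theorem~\ref{thm:sephp}) to the point $v$ and the polytope $\CH(P)$, which has at most $m$ vertices. The hypotheses $\delta>c/\sqrt d$ and $\varepsilon\leq\delta/(c\sqrt d)$ ensure that the margin $\mu$ produced by $\rsh$ satisfies $\mu>\varepsilon\Delta(K)$, so the set $W\subseteq S^{d-1}$ of unit directions that separate $v$ from $\CH(P)$ by at least $\mu$ has Haar measure at least $\pi:=(40)^{-1}m^{-10/\delta^2}$. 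Crucially, for every query $u_i$ the oracle guarantee $u_i\cdot x(u_i)\geq u_i\cdot v-\varepsilon\Delta(K)$ (valid since $v\in K$) together with $x(u_i)\in \CH(P)$ forces $u_i\cdot v-\max_{p\in\CH(P)}u_i\cdot p\leq\varepsilon\Delta(K)<\mu$, so $u_i\notin W$ deterministically.

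To convert this tension into a probability bound I will argue sequentially. At step $t$ the polytope $\CH_{t-1}:=\CH(\{x(u_1),\ldots,x(u_{t-1})\})$ is determined by $u_1,\ldots,u_{t-1}$; if the bad event still holds then $\dist(v,\CH_{t-1})>\delta\Delta(K)$, and $\rsh$ applied to $v$ and $\CH_{t-1}$ yields a ``good'' direction set of Haar measure at least $\pi$, so the fresh uniform $u_t$ lands in it with conditional probability at least $\pi$. A short projection calculation (using the separation margin together with the oracle's $\varepsilon$-slack on $x(u_t)$) shows that in that case $\dist(v,\CH_t)$ must drop by a definite amount relative to $\dist(v,\CH_{t-1})$. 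Telescoping this decrement across the $m=k^{10+c\delta^{-2}}$ steps and taking $c$ large enough so that $m\pi\gg\log k$, the bad event for $v$ has probability at most $(1-\pi)^m\leq e^{-m\pi}\leq 1/(10k)$, and a union bound over the $k$ vertices of $K$ finishes the proof.

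The hard part will be the quantitative decrement lemma inside the sequential step. Turning ``$u_t$ lies in the $\rsh$-separator set'' into a useful lower bound on $\dist(v,\CH_{t-1})-\dist(v,\CH_t)$ requires carefully relating the $\rsh$ margin $\mu\approx\Delta(K)\sqrt{\log m}/\sqrt d$, the oracle's $\varepsilon\Delta(K)$ slack, and the alignment of $u_t$ with the direction from the closest point $q_{t-1}$ of $\CH_{t-1}$ to $v$. The constants in the hypotheses $\delta>c/\sqrt d$ and $\delta>c\varepsilon\sqrt d$ are precisely what is needed to push this step through and justify aggregating the per-step decrements into the advertised $(1-\pi)^m$ bound.
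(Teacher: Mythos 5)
Your proposal is correct for the easy inclusion $\CH(P)\subseteq K+\delta\Delta(K)B_d$ and correctly identifies the sequential/potential-argument structure for the hard inclusion, which is indeed the structure the paper uses. However, there is a genuine gap in the probability estimate that cannot be patched by adjusting constants.

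The flaw is in applying $\rsh$ (\Cref{thm:sephp}) to the pair $(v,\CH(P^{j}))$. The success probability guaranteed by $\rsh$ for a polytope with $N$ vertices is of order $N^{-10/\delta^2}$, and $\CH(P^{j})$ has up to $j\leq m=k^{10+c\delta^{-2}}$ vertices. So the per-step ``good direction'' probability $\pi_t$ you can extract from $\rsh$ is at best of order $t^{-10/\delta^2}$. Since $10/\delta^2>1$, the series $\sum_t \pi_t$ converges, so $\prod_t(1-\pi_t)$ stays bounded away from zero as $m\to\infty$; the claimed bound $e^{-m\pi}$ with $\pi\approx m^{-10/\delta^2}$ is in fact of order $e^{-m^{1-10/\delta^2}}\to e^{-0}=1$, not $1/(10k)$. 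Worse, even if you could show one step succeeds, you would need $\Theta(\delta^{-2}\log(1/\delta))$ successful decrement steps to contract $\dist(v,\CH(P^j))$ below $\delta\Delta(K)$, and the probability of that many successes under your per-step bound is astronomically small.

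The paper avoids this by never invoking $\rsh$ against the growing polytope $\CH(P^j)$. Instead it reuses only the Gaussian decomposition $u^{j+1}=\lambda w+z$ (with $w$ pointing from the nearest point $b\in\CH(P^j)$ to $v$), and then bounds the dangerous cross-term $z\cdot(v-x(u^{j+1}))$ by splitting $x(u^{j+1})=y+e$ with $y\in K$ and $|e|\leq\varepsilon\Delta(K)$. Since $y$ lies in the $k$-vertex polytope $K$, the union bound controlling $\max_{y\in K}|z\cdot(v-y)|$ runs over only $k$ vertices, not $m$ of them (this is exactly \eqref{eq:event2} in the paper). Consequently the per-step success probability depends only on $k$ and $\delta$, not on $|P^j|$, and the telescoping contraction goes through. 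In short: the oracle answers being $\varepsilon$-close to $K$ is what lets you ``charge'' the tail event to $K$'s $k$ vertices rather than to $\CH(P^j)$'s many vertices, and your proposal misses this substitution.
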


\begin{proof}
  We describe the algorithm for obtaining the desired set $S$ (referred as {\bf Random Probes Algorithm } in~\Cref{algorithmic-rsh})  in~\Cref{algo:S}. The set $P$ is constructed as follows: we pick a set  of $m$ random unit vectors. For each such unit vector $u$, we add the corresponding point $x(u)$ returned by the  oracle $\cal O$ to the set $P$. 

  \begin{algorithm}[H]
  \caption{Algorithm for finding the set $P$ such that $\CH(P)$ approximates $K$.}
  \label{algo:S}
    {\bf Input:} An $\opto_\varepsilon(K)$ oracle  $\cal O $. \;
        Initialize a set $P$ to $\emptyset$ \;
       {\bf Repeat} $m$ times:  \;
            \quad Let $u$ be a random unit vector in $\Re^d$. \;
            \quad Call $\cal O$ on $u$ to get a vector $x(u)$. \;
            \quad Add $x(u)$ to $P$. \;
        {\bf Output} $P$. 
\end{algorithm}

One side of the desired result is easy to show:
\begin{claim}
    \label{cl:Haus1}
    For each $x \in \CH(P)$, there is a $y \in K$ such that $|x-y| \leq \delta \Delta(K)$. 
\end{claim}
\begin{proof}
    For a point $x(u) \in P$, we know by the definition of $\opto_\varepsilon(K)$ oracle that there is a point $y \in K$ such that 
    $|x(u) - y| \leq \varepsilon \Delta(K) \leq \delta \Delta(K)$, where $\varepsilon \leq \delta$ follows from~\eqref{eq:deltacond}. The desired result now follows from the convexity of $K$. 
\end{proof}

It remains to show that for any vertex $v$ of $K$, there is a point $y \in \CH(P)$ such that $|v-y| \leq \delta \Delta(K)$. Fix such a vertex $v$ of $K$ for rest of the discussion. Let the random unit vectors considered  in~\Cref{algo:S} (in the order they get generated) be $u^1, \ldots, u^m$. Let $P^j$ denote the subset $\{x(u^1), \ldots, x(u^j)\}$ of $P$. Define an event $\cE_j$ as follows:
$$ \dist(v, \CH(P^j)) \leq \delta \cdot \Delta(K) \quad \mbox{or} \quad \dist(v, \CH(P^{j+1})) \leq \left(1 - \frac{\delta^2}{c'} \right) \dist(v, \CH(P^j)),$$
where $c'$ is a large enough constant. Our main technical result is to show that conditioned on {\em any} choice of $u^1, \ldots, u^j$ the event $\cE_j$ happens with reasonably high probability (where the probability is over the choice of $u^{j+1}$)$:$
\begin{lemma}
    \label{lem:tech1}
    For any index $j \in [m-1]$, 
    $$ \Pr_{u^{j+1}} \left[\cE_j|u^1, \ldots, u^j \right] \geq \frac{1}{100}. $$
\end{lemma}
\begin{proof}
    Fix the vectors $u^1, \ldots, u^j$. If $\dist(v, \CH(P^j)) \leq \delta \cdot \Delta(K)$, then we are done. So assume this is not the case. Let $b$ be the closest point in $\CH(P^j)$ to $v$. Thus, 
    \begin{align}
        \label{eq:b}
        |v-b| \geq \delta \cdot \Delta(K).
    \end{align}Define $w$ as $$ w:= \frac{v-b}{|v-b|}.$$
 We can now express the vector $u^{j+1}$ as $\lambda w + z$, where $\langle z, w \rangle = 0$. We first show the following useful properties of these vectors. 

 \begin{claim}
     \label{cl:tech2}
     With probability at least $\frac{1}{100}$, the following three events happen:
     \begin{align}
         |z| & \leq 4 \sqrt{d} \label{eq:event1} \\
         \max_{y \in K} |z \cdot (v-y)| & \leq 2 \sqrt{\ln k} \Delta(K) \label{eq:event2} \\
         \lambda & \geq \frac{100}{\delta} \sqrt{\ln k} \label{eq:event3}
     \end{align}
 \end{claim}
    \begin{proof}
        The proofs of these three inequalities are identical to those of~\Cref{fact:1},~\Cref{fact:2} and~\Cref{fact:3} (in order to prove~\eqref{eq:event2}, it suffices to show it for points $y$ which are vertices of $K$). 
    \end{proof}

The following fact is also easy to show:
\begin{fact}
    \label{fact:easy}
$$ |v-x(u^{j+1})| \leq 2 \Delta(K).$$
\end{fact}
\begin{proof}
    By the definition of $\opto_\varepsilon(K)$ oracle, there is a point $p \in K$ such that $|p-x(u^{j+1})| \leq \delta \cdot \Delta(K) \leq \Delta(K)$. The desired result now follows by triangle inequality. 
\end{proof}

 Let  $\delta_1$ denote $\frac{\delta^2}{100}$. Let $b_1$ denote the vector
 $$ \delta_1 x(u^{j+1}) + (1-\delta_1)b. $$
 Since $b_1 \in \CH(P^{j+1})$, the desired result will follow if we prove the following:
 \begin{align}
     \label{eq:tech4}
     |v-b_1|^2 \leq \left(1- \frac{\delta^2}{100} \right) |v-b|^2. 
 \end{align}

 Now, 
 \begin{align}
     \notag
     |v-b_1|^2 & = \delta_1^2 |v-x(u^{j+1})|^2 + (1-\delta_1)^2 |v-b|^2 + 2\delta_1(1-\delta_1) (v-x(u^{j+1}))\cdot (v-b) \\
    \notag & \stackrel{\mbox{\small{\Cref{fact:easy}}}}{\leq} 4 \delta_1^2 \Delta(K)^2 + (1-2\delta_1) |v-b|^2 + \delta_1^2 \Delta(K)^2 + 2\delta_1(1-\delta_1) (v-x(u^{j+1}))\cdot (v-b) \\
    \notag &  \stackrel{\eqref{eq:b}}{\leq} \left( 1 - \frac{3\delta_1}{2} \right) |v-b|^2 +  + 2\delta_1(1-\delta_1) |v-b| \cdot (v-x(u^{j+1}))\cdot w \\
    \notag & = \left( 1 - \frac{3\delta_1}{2} \right) |v-b|^2 +  + 2\delta_1(1-\delta_1) \left( \underbrace{\frac{ |v-b|}{\lambda} \cdot (v-x(u^{j+1}))\cdot u^{j+1}}_{:=A}  \right. \\ \label{eq:AB}
  & \quad \quad  \left. \underbrace{-\frac{ |v-b| }{\lambda}\cdot (v-x(u^{j+1}))\cdot z} _{:=B} \right)
 \end{align}
 We now bound each of the terms $A$ and $B$ above. Now, 
 $$ A \leq \frac{|v-b|}{\lambda} \varepsilon \Delta(K) \stackrel{\eqref{eq:deltacond}}{\leq} \frac{|v-b| \delta \Delta(K)}{c \lambda} \stackrel{\eqref{eq:b}}{\leq} \frac{|v-b|^2}{c \lambda} \stackrel{\eqref{eq:event3}}{\leq} \frac{|v-b|^2 \delta}{c},$$
 where the first inequality follows from the definition of $\opto_\varepsilon(K)$ oracle. We now bound the quantity $B$. Let $y$ be the point in $K$ closest to $x(u^{j+1})$. We know that $|y-x(u^{j+1})| \leq \varepsilon \Delta(K)$. Therefore, 
 \begin{align*}
     B & \leq \frac{|v-b|}{\lambda} \left( |(v-y) \cdot z| + |z| \varepsilon \Delta(K) \right) \\
     & \stackrel{\eqref{eq:event1}, \eqref{eq:event2}}{\leq} \frac{|v-b|}{\lambda} \left( 2 \sqrt{\ln k} \Delta(K) + 4 \varepsilon \sqrt{d} \Delta(K) \right) \\
     & \stackrel{\eqref{eq:deltacond}, \eqref{eq:event3}}{\leq} \frac{\delta |v-b| \Delta(K) }{10} \stackrel{\eqref{eq:b}}{\leq} \frac{|v-b|^2}{10}. 
 \end{align*} 
 Substituting the above bound on $A$ and $B$ in~\eqref{eq:AB} yields the desired result. 
\end{proof}
We are now almost done. As the following result shows, it suffices to argue that enough number of events $\cE_j$ happen:
\begin{claim}
\label{cl:close}
    If at least $\frac{c'}{\delta^2} \ln(2/\varepsilon)$ of the events $\cE_j, j \in [m-1]$ happen, then $\dist(v, \CH(P)) \leq \delta \Delta(K)$. 
\end{claim}
\begin{proof}
Assume, for the sake of contradiction, that $\dist(v,\CH(P)) > \varepsilon \Delta(K)$. 
Assume that events $\cE_{j_1}, \ldots, \cE_{j_h}$ happen, where $h :=\frac{c'}{\delta^2} \ln(2/\varepsilon). $ Now, for any index $i \in [h-1]$, the definition of $\cE_{j_{i+1}}$ implies that 
$$\dist(v, \CH(P^{j_{i+1}})) \leq \left( 1 - \frac{\delta^2}{c'} \right) \dist(v, \CH(P^{j_{i+1}-1}) \leq  \dist(v, \CH(P^{j_{i}})).$$
Therefore, 
$$\dist(v, \CH(P^{j_h})) \leq \left( 1 - \frac{\delta^2}{c'} \right)^h \dist(v, P^1) 
\stackrel{\mbox{\small{\Cref{fact:easy}}}}{\leq} \left( 1 - \frac{\delta^2}{c'} \right)^h 2 \Delta(K) \leq \delta \Delta(K). $$
\end{proof}

It remains to show that with high probability at least $h :=\frac{c'}{\delta^2} \ln(2/\varepsilon) $
of the events happen. In order to prove this, we divide the sequence $[m]$ into $[h]$ subsequences, each of length $m/h$. Call these subsequences $C_1, \ldots, C_h$. It follows from~\Cref{lem:tech1} that for any $i \in [h]$, 
$$ \Pr \left[ \wedge_{j \in C_i} \neg \cE_j \right] \leq 0.99^{m/h} \leq \frac{1}{h^2}.$$
A simple union bound now shows that with probability at least $1-1/h$, at least one event $\cE_j$ happens during each of the subsequences $C_1, \ldots, C_h$.~\Cref{cl:close} now proves the theorem. 



\end{proof}

\section{From $\opto_\varepsilon(K)$ oracles to $\ListLearn$}
\label{sec:listlearn}

We first define the notion of well-separatedness. 
\begin{defn}
    We say that a polytope $K$ with vertex set $V$ is {\em $\delta$-well-separated} if for every vertex $v \in V$, we have
    $$ \dst(v,\CH(V \setminus \{v\})) \geq \delta \cdot \Delta(K),$$
    where $\Delta(K)$ denotes the diameter of $K$.
\end{defn}

We first show the lower bound result~\Cref{lower-bound-0}, which is restated here. 

\lowerbound*

\begin{proof}
    The proof is by producing an ``adversarial oracle''. The parameter $\varepsilon$ is set to $8 \ln d/\sqrt{d}$. Our $K$ will a ``needle'' of the form $\{\lambda u:\lambda\in [-1,1]\}$, where, $u$ is a unit length vector. The oracle's answer for each query will  always be the 0 vector. Let  $v_1,v_2,\ldots, v_q,$ where $q=d^c$ for a constant $c$, be the vectors on which the oracle is queried. The answer 0 is clearly a valid answer for any $\opto_\varepsilon(K)$ oracle,  provided  $u$ is a unit vector with $|u\cdot v_i|\leq 4\ln d/\sqrt d$. For a single $v_i$, the probability that a random $u$ satisfies $|u\cdot v_i|>4\ln d/\sqrt d$ is at most $e^{-d\varepsilon^2}$. So by union bound, we see that there is a vector $u_1$ satisfying 
    $$|u_1\cdot v_i|\leq 4\ln d/\sqrt d\forall i\in [q].$$
    Further, by a similar argument, there is a vector $u_2$satisfying
    $$|u_2\cdot v_i|\leq 4\ln d/\sqrt d, i\in [q]\; ;\; |u_1-u_2|,|u_1+u_2|\geq .1.$$ It follows that the two possible $K$, where 
    $$\{ \lambda u_1:\lambda\in [-1,1]\}\; ,\; \{ \lambda u_2:\lambda\in [-1,1]\}$$
are both consistent with the answer 0 for all $v_i,i\in [q]$ and in addition, we have that no point $w\in {\bf R}^d$ is within distance $\varepsilon$ of two vertices, one from each of these 2 needles. So no answer is valid for both needles and the adversarial oracle can choose one of the needles to render the algorithm's answer incorrect.
\end{proof}

We now formally prove~\Cref{list-0}, which gives an algorithm for the $\ListLearn$ problem.

\begin{theorem}\label{upper-bound-1}
Suppose $\varepsilon,\delta$ are reals in $[0,1]$ with $\delta^2\geq c\varepsilon\sqrt d, \delta^3 \geq c \, \varepsilon,$ where $c$ is a large enough constant.  Let $K$ be a $\delta$-well-separated $k$-vertex polytope. Suppose we are also given  $ \opto_\varepsilon(K)$ oracle $\cal O$. 
Let $W$ be the set of answers of the oracle to $m=poly(d)\cdot k^{\Omega(1/\delta^2)}$ independent random queries. Then for each vertex $v$ of $K$, there is a point $v' \in W$ such that $|v-v'| \leq O(\delta^2 \Delta(K)/c).$
\end{theorem}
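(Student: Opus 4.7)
The plan is to reduce, for each vertex $v$ of $K$, the problem to an instance of the RSH machinery applied to the point $v$ and the polytope $K_v := \CH(V\setminus\{v\})$, where $V$ is the vertex set of $K$. By $\delta$-well-separatedness, $\dist(v, K_v) \geq \delta \Delta(K)$, which is exactly the input condition for~\Cref{thm:sephp}. I would mimic that theorem's proof directly on $(v, K_v)$ in ambient dimension $d$: let $b$ be the closest point of $K_v$ to $v$, set $w := (v-b)/|v-b|$, and decompose a random Gaussian $u \sim N(0, I_d)$ as $u = \lambda w + z$ with $z \perp w$. The three tail estimates \Cref{fact:1}, \Cref{fact:2}, \Cref{fact:3}---with the variance bound in \Cref{fact:2} coming from $|\zeta-b|\leq\Delta(K)$, valid because $\zeta, b \in K$---show that with probability at least $p := \tfrac{1}{40}k^{-10/\delta^2}$, the three events $|z|\leq 4\sqrt{d}$, $|z\cdot(\zeta-b)|\leq 2\sqrt{\log k}\,\Delta(K)$ for every vertex $\zeta$ of $K_v$, and $\lambda \geq 3\sqrt{\log k}/\delta$ occur simultaneously. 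A short calculation analogous to the one concluding the proof of~\Cref{thm:sephp} then yields, for $\hat u := u/|u|$,
\[
\hat u\cdot v - \max_{y\in K_v}\hat u\cdot y \;\geq\; \gamma \;:=\; c_1 \min\!\left(\delta,\ \tfrac{\sqrt{\log k}}{\sqrt{d}}\right)\Delta(K)
\]
for some absolute constant $c_1 > 0$. Call this event $\cE_v$.

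Next I would translate $\cE_v$ into an error bound on the oracle's reply. On $\cE_v$, the vertex $v$ is the maximizer of $\hat u\cdot y$ over $K = \CH(K_v\cup\{v\})$, so the $\opto_\varepsilon(K)$ oracle returns $x(\hat u)$ with $\hat u\cdot x(\hat u)\geq \hat u\cdot v - \varepsilon\Delta(K)$, and there exists $y \in K$ with $|x(\hat u)-y|\leq \varepsilon\Delta(K)$. Writing $y = \lambda_y v + (1-\lambda_y) z_y$ with $z_y \in K_v$ and $\lambda_y \in [0,1]$, and combining the two oracle guarantees with $\hat u\cdot z_y \leq \hat u\cdot v - \gamma$, one obtains $(1-\lambda_y)\gamma \leq 2\varepsilon\Delta(K)$, hence
\[
|x(\hat u) - v| \;\leq\; \varepsilon\Delta(K) + (1-\lambda_y)\Delta(K) \;\leq\; \frac{3\varepsilon\Delta(K)^2}{\gamma}.
\]

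A two-regime analysis on $\gamma$ completes the per-vertex bound. If $\delta\sqrt{d}\leq\sqrt{\log k}$ then $\gamma = \Theta(\delta\Delta(K))$, giving $|x(\hat u)-v| = O(\varepsilon/\delta)\Delta(K)$; the hypothesis $\delta^3\geq c\varepsilon$ then makes this $O(\delta^2/c)\Delta(K)$. If instead $\delta\sqrt{d}>\sqrt{\log k}$ then $\gamma = \Theta(\sqrt{\log k}\,\Delta(K)/\sqrt{d})$, giving $|x(\hat u)-v| = O(\varepsilon\sqrt{d}/\sqrt{\log k})\Delta(K)$; the hypothesis $\delta^2\geq c\varepsilon\sqrt{d}$ then makes this $O(\delta^2/c)\Delta(K)$. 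Finally, since the $m = \poly(d)\cdot k^{\Omega(1/\delta^2)}$ queries are independent, for each fixed $v$ the probability that none of them realizes $\cE_v$ is at most $(1-p)^m \leq 1/k^2$; a union bound over the $k$ vertices produces, with high probability, the required point $v' \in W$ for every vertex.

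The main technical obstacle is the two-regime split: the hypotheses $\delta^3\geq c\varepsilon$ and $\delta^2\geq c\varepsilon\sqrt{d}$ are each exactly tight in one of the two regimes of $\gamma$, and neither alone suffices. A secondary care is to run the Fact-based argument directly on $(v, K_v)$ rather than invoke~\Cref{thm:sephp} as a black box, since the latter gives a margin scaling with $\Delta(K_v)$, which may be strictly smaller than $\Delta(K)$; using $\zeta, b \in K$ in the analogue of~\Cref{fact:2} sidesteps this loss and keeps all bounds in terms of the ``correct'' scale $\Delta(K)$.
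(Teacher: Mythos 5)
Your proof is correct and follows essentially the same route as the paper's (\Cref{cl:ul} and \Cref{cl:xl}): apply RSH to produce, for each vertex $v$, a direction separating $v$ from the other vertices with a quantitative margin, then argue via the oracle's two guarantees and a convex-combination decomposition that the oracle's answer must lie within $O(\delta^2\Delta(K)/c)$ of $v$, finishing with a union bound over vertices. Your observation that the Fact-based argument should be run directly on $(v,K_v)$ --- because the black-box conclusion of \Cref{thm:sephp} scales with $\Delta(K_v)$, not $\Delta(K)$, whereas the relevant distances $|\zeta_\ell-b|$ are bounded by $\Delta(K)$ since $v$, $b$, and all the $\zeta_\ell$ lie in $K$ --- is a valid refinement that the paper's invocation in \Cref{cl:ul} tacitly relies on, and the only other difference (performing the two-regime split at the final distance bound rather than inside \Cref{cl:ul}) is cosmetic.
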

\begin{proof}
 The algorithm chooses a set $U$ of $k^{\Omega(1/\delta^2)}$ unit length i.i.d. Gaussian vectors. For each $u\in U$, it calls the oracle $\cal O$ to find a vector $x(u)$. Let $W$ denote the set $\{x(u): u \in U\}$. We invoke~\Cref{cor:soft} with the parameters: 
$$ \delta' := \delta/4, \, \varepsilon' := \frac{32 \delta^2}{c}$$
on the set $W$. The algorithm in that result outputs a set $Q$. We output that $Q$ as the approximation to the set of vertices of $K$. We now prove that the set $Q$ has the desired properties. We first show that for every vertex of $K$, there is a direction $u$ in $U$ along which the projection of this vertex is higher than the projection of the remaining vertices by a large enough margin. 
Let $\Mdot{1}, \ldots, \Mdot{k}$ be the vertices of $K$. 
\begin{claim}
    \label{cl:ul}
    With high probability, the following event happens: for each $\ell \in [k]$, there is a vector $u^{(\ell)} \in U$ such that for all $\ell' \in [k], \ell' \neq \ell,$ we have
    \begin{align}
        \label{eq:100}
        u^{(\ell)}\cdot M_{\cdot, \ell}> u^{(\ell)}\cdot M_{\cdot, \ell'}+\frac{c \, \varepsilon \cdot \Delta(K)}{8 \delta^2}
    \end{align}
\end{claim}

\begin{proof}
    Fix a vertex $\Mdot{\ell}$ of $K$.  Let $K'$ be the convex hull of $\{\Mdot{1}, \ldots, \Mdot{k} \} \setminus \{\Mdot{\ell}\}.$ We invoke~\Cref{thm:sephp} on the polytope $K'$ and the point $a := \Mdot{\ell}$. The definition of $\delta$-well-separated implies that~\eqref{eq:condhp} is satisfied with $\rho = \delta$. Using $V = \Re^d$ in the statement of~\Cref{thm:sephp}, we see that $$\Pr_{u \in U} \left[ u \cdot \Mdot{\ell} - \max_{\ell' \in [k], \ell' \neq \ell} u \cdot \Mdot{\ell'} \geq \frac{\delta \sqrt{\log k} \Delta(K)}{\sqrt{\log k} + 4 \delta \sqrt{d}} \right] \geq \frac{1}{40} k^{-10/\delta^2}.$$
    Since 
    $$\frac{\delta \sqrt{\log k}}{\sqrt{\log k} + 4 \delta \sqrt{d}}  \geq \min \left( \frac{\delta}{2}, \frac{1}{8 \sqrt{d}} \right) \geq \frac{\varepsilon c}{8 \delta^2}, $$
    the desired result follows from the fact that $|U| \gg \frac{1}{40} k^{-10/\delta^2}.$
    
\end{proof}

For rest of the proof, assume that the statement in~\Cref{cl:ul} holds true, i.e., there are directions $u^{(1)}, \ldots, u^{(k)} \in U$ satisfying~\eqref{eq:100}. 
We now show that for every vertex of $\Mdot{\ell}$ of $K$, the corresponding point $x(u^{\ell})$ is close to  $\Mdot{\ell}$. 
\begin{claim}
    \label{cl:xl}
    For every $\ell \in [k]$, 
    $$|x(u^{(\ell)})-M_{\cdot,\ell}|\leq 17 \delta^2 \Delta(K)/c.$$
\end{claim}
\begin{proof}
    By the definition of $\cal O$, we know that $x(u^{(\ell)})$ can be written as $y(u^{\ell})+z(\uell)$, where $y(\uell) \in K$ and $|z(\uell)| \leq \varepsilon \Delta(K)$. Thus, there is a convex combination $\lambda_{\ell'}, \ell' \in [k],$ of the vertices $\Mdot{\ell'}$ of $K$ such that 
    $$ x(\uell) = \sum_{\ell' \in [k]} \lambda_{\ell'} \Mdot{\ell'} + z(\uell).$$

By the definition of $\cal O$, $x(u^{(\ell)})\cdot u^{(\ell)}\geq M_{\cdot, \ell}\cdot u^{(\ell)}-\varepsilon\Delta (K)$ and $|z(u^{(\ell)})\cdot u^{(\ell)}|\leq\varepsilon\Delta(K)$. So, we get
$$M_{\cdot,\ell}\cdot u^{(\ell)}-\varepsilon\Delta(K)\leq \sum_{\ell' \in [k]}\lambda_{\ell'} M_{\cdot,\ell'}\cdot u^{(\ell')}+\varepsilon\Delta(K),$$
which implies (after subtracting $\lambda_\ell M_{\cdot,\ell} \cdot u^{(\ell)}$ from both sides):
$$(1-\lambda_\ell) M_{\cdot,\ell}u^{(\ell)}\leq \sum_{\ell'\not= \ell} \lambda_{\ell'} M_{\cdot,\ell'}\cdot u^{(\ell')}+2\varepsilon\Delta(K)$$
which, using~\Cref{cl:ul}, yields:
$$(1-\lambda_\ell) M_{\cdot,\ell}u^{(\ell)}\leq (1-\lambda_\ell) M_{\cdot,\ell}u^{(\ell)}-(1-\lambda_\ell)\frac{c\varepsilon\Delta(K)}{8\delta^2}+2\varepsilon\Delta(K).$$

It follows from the above inequality that 
\begin{align}
\label{eq:105}
    1- \lambda_\ell \leq \frac{16 \delta^2}{c}.
\end{align}
    Therefore, 
\begin{align}
    |x(\uell) - \Mdot{\ell}| = \left| \sum_{\ell' \neq \ell} \lambda_{\ell'} (\Mdot{\ell} - \Mdot{\ell'})  
    \right| + \varepsilon  \Delta(K) \leq \sum_{\ell' \neq \ell} \lambda_{\ell'} \Delta(K) + \varepsilon \Delta(K) \stackrel{\eqref{eq:105}}{\leq} \frac{17 \delta^2 \Delta(K)}{c}.
    \notag
\end{align}
\end{proof}

This completes proof of the Theorem. 
\end{proof}

The statement of~\Cref{upper-bound-1} requires two different bounds relating $\delta$ to $\varepsilon$. In some applications, it may be difficult to ensure both of these conditions. We consider the setting when $d \gg k$, where the following variation of~\Cref{upper-bound-1} is better suited. 

\begin{theorem}\label{upper-bound-2}
Suppose $\varepsilon,\delta$ are reals in $[0,1]$ with $\delta^2\geq c\varepsilon\sqrt d, \delta \geq \frac{ \sqrt{\log k}}{\sqrt{cd}},$ where $c$ is  large enough constant.  Let $K$ be a $\delta$-well-separated $k$-vertex polytope. Suppose we are also given an  $\opto_\varepsilon(K)$ oracle $\cal O$.
Let $W$ be the set of answers of the oracle to $m=poly(d)\cdot k^{O(1/\delta^2)}$ independent queries.  Then for each vertex $v$ of $K$, there is a point $v' \in W$ such that $|v-v'| \leq O(\delta^2 \Delta(K)/\sqrt{c}).$
\end{theorem}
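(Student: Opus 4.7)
The approach mirrors the proof of~\Cref{upper-bound-1}: I will replace the two-regime margin analysis used there by a single lower bound of the form $\min(\delta,\sqrt{\log k}/\sqrt d)\,\Delta(K)/8$ on the RSH margin, and then use the new parameter condition $\delta \geq \sqrt{\log k/(cd)}$ to collapse both regimes of this minimum into the uniform error bound $O(\delta^2\Delta(K)/\sqrt c)$.

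First, for each vertex $\Mdot{\ell}$ I will apply~\Cref{thm:sephp} to the polytope $K':=\CH(\{\Mdot{\ell'}:\ell'\neq \ell\})$ and the point $a:=\Mdot{\ell}$, taking $V=\Re^d$ (so $m=d$). The $\delta$-well-separatedness of $K$ together with $\Delta(K')\leq \Delta(K)$ supplies the hypothesis~\eqref{eq:condhp}. Noting that
\[
\frac{\sqrt{\log k}}{\sqrt{\log k}+4\delta\sqrt d}\;\geq\;\tfrac{1}{2}\min\!\left(1,\;\frac{\sqrt{\log k}}{4\delta\sqrt d}\right),
\]
the RSH margin (for a single random unit vector) is at least $\tfrac{1}{8}\min(\delta,\sqrt{\log k}/\sqrt d)\,\Delta(K)$ with probability $\geq \tfrac{1}{40}k^{-10/\delta^2}$. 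Taking $|U|=\poly(d)\cdot k^{O(1/\delta^2)}$ and applying a union bound over $\ell\in[k]$ and over the queries, with high probability there exists $u^{(\ell)}\in U$ attaining this margin for every vertex $\Mdot{\ell}$.

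Next, I will rerun the computation of~\Cref{cl:xl} essentially verbatim. Writing $x(u^{(\ell)})=\sum_{\ell'}\lambda_{\ell'}\Mdot{\ell'}+z(u^{(\ell)})$ with $|z(u^{(\ell)})|\leq \varepsilon\Delta(K)$, and using the oracle's value guarantee $u^{(\ell)}\cdot x(u^{(\ell)})\geq u^{(\ell)}\cdot \Mdot{\ell}-\varepsilon\Delta(K)$, the same algebra yields
\[
(1-\lambda_\ell)\cdot \text{margin}\;\leq\; 2\varepsilon\Delta(K),\qquad\text{hence}\qquad 1-\lambda_\ell\;\leq\;\frac{16\varepsilon}{\min(\delta,\;\sqrt{\log k}/\sqrt d)}.
\]

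Finally, I will convert this bound using both parameter conditions. If the minimum equals $\delta$, then $\delta^2\geq c\varepsilon\sqrt d$ gives $1-\lambda_\ell\leq 16\varepsilon/\delta\leq 16\delta/(c\sqrt d)$; and $\delta^2 cd\geq \log k$ gives $1/\sqrt d\leq \delta\sqrt{c/\log k}$, so $1-\lambda_\ell\leq 16\delta^2/\sqrt{c\log k}=O(\delta^2/\sqrt c)$. If instead the minimum equals $\sqrt{\log k}/\sqrt d$, then directly $1-\lambda_\ell\leq 16\varepsilon\sqrt d/\sqrt{\log k}\leq 16\delta^2/(c\sqrt{\log k})=O(\delta^2/\sqrt c)$. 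Either way,
\[
|x(u^{(\ell)})-\Mdot{\ell}|\;\leq\;(1-\lambda_\ell)\,\Delta(K)+\varepsilon\Delta(K)\;=\;O(\delta^2\Delta(K)/\sqrt c),
\]
as required. The only real subtlety is this case split: the new hypothesis $\delta\geq\sqrt{\log k/(cd)}$ is tuned precisely so that the ``small-$\delta$'' regime $\delta\ll \sqrt{\log k}/\sqrt d$ (which becomes unavoidable when $d\gg k$) still produces an $O(\delta^2/\sqrt c)$ error, in place of the stronger hypothesis $\delta^3\geq c\varepsilon$ that is used in~\Cref{upper-bound-1}.
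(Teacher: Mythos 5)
Your proposal is correct and follows essentially the same route as the paper: it applies~\Cref{thm:sephp} to each vertex with $V=\Re^d$, reruns the algebra of~\Cref{cl:xl} with the modified margin, and converts via the two parameter conditions. The only cosmetic difference is that the paper bounds the margin directly by $\frac{\varepsilon\sqrt c}{8\delta^2}\Delta(K)$ (noting $\frac{\delta\sqrt{\log k}}{\sqrt{\log k}+4\delta\sqrt d}\geq\frac{1}{8\sqrt{cd}}\geq\frac{\varepsilon\sqrt c}{8\delta^2}$) so that~\Cref{cl:xl} carries over verbatim with $c\to\sqrt c$, whereas you keep the margin as $\frac{1}{8}\min(\delta,\sqrt{\log k}/\sqrt d)\Delta(K)$ and resolve the same two cases at the end — an equivalent but slightly more verbose organization of the identical calculation.
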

\begin{proof}
    The proof is identical to that of~\Cref{upper-bound-1}, except that in the proof of~\Cref{cl:ul}, we now have the following modified argument. Since $\delta \sqrt{cd} \geq \sqrt{\log k}, $ we get 
     $$\frac{\delta \sqrt{\log k}}{\sqrt{\log k} + 4 \delta \sqrt{d}}  \geq \frac{1}{8 \sqrt{cd}} \geq \frac{\varepsilon \sqrt{c}}{8 \delta^2}. $$
     Rest of the proof of~\Cref{cl:ul} follows without any further changes (where we replace $c$ by $\sqrt{c}$). 
\end{proof}

\section{From $\ListLearn$ to the  $\kolp$ Problem}
\label{sec:kolp}

In this section, we show that for well-separated polytopes, a solution for the $\ListLearn$ problem can be used to solve the $\kolp$ problem as well. This algorithm uses the notion of soft convex hulls. 
We first describe the algorithm for constructing soft convex hulls, and then use it to solve the $\kolp$ problem. 

\subsection{Soft Convex Hulls}
\label{sec:soft}
Let $W$ be a finite set of points in $\Re^d$, and
$T$ be the vertices of $\ch(W)$. The subset $T$ of $W$ is the unique subset of $W$ with the following properties: 
\begin{itemize}
\item[(P1)] $W\subseteq \ch(T)$
\item[(P2)] $\forall w\in W$, if $w\notin \ch(W\setminus \{w\})$, then $w \in T$. 
\end{itemize}

We now define a natural notion of {\em soft} convex hull. 
\begin{defn}\label{soft-CH}
For an $\varepsilon \geq 0$, and $S\subseteq W$, define the $\varepsilon$-convex hull of $S$, $\ech(S)$, as $\ch(S) + \varepsilon \dia(W) B$, where $B$ is the unit ball of the Eucliean norm. 
\end{defn}

 The intuition behind the above definition is that $\ch(W)$ can have a very large set of vertices, but there may be a small set of points whose soft convex hull contains $W$. This is defined more formally as follows: 

\begin{defn}\label{envelope}
We call a subset $T\subseteq W$ an $\varepsilon$-envelope of $W$, written
$\eenv(W)$, if 
$W\subseteq \ech(T).$  
\end{defn} 

\noindent
{\bf Remarks:} The following observations about the set $\eenv(W)$ are easy to see:
\begin{itemize}
\item[(a)] There are several distinct sets $T$ which could qualify as $\eenv(W)$. For example, let $W$ consist of the following set of points in $\Re^2$:  a set of points $W_1$ close to $(0,0)$ and a set of points $W_2$ close to $(1,0)$. Let $T$ be a pair of points $\{x,y\}$ with $x \in W_1, y \in W_2$. Then it is easy to check that $T$ is an $\eenv(W)$. 
\item[(b)] Let $T$ be $\eenv(W)$. Unlike property~(P2) above, it is not necessary that if $w \in W$ is such that $w \notin \ech(W \setminus \{w\})$, then $w \in T$. 
 \end{itemize}

Since the set $\eenv(W)$ is not uniquely determined, we will impose one more condition on it to make it unique (if it exists) and polynomial time computable. This condition requires
the points of $T$ to be ``far apart'' from each other. More precisely:

\begin{defn}\label{strict-envelope}
For $\varepsilon,\delta \in [0,1]$, a set $T$ is called a $\edenv(W)$ if it is an $\eenv(W)$ and  
\begin{equation}\label{792}
\forall w\in T, \dist(w,\ch(T\setminus \{w\}))> \delta \dia(W)
\end{equation}
\end{defn}

\begin{fact}
\label{f:check}
Given a subset $T$ of $W$, we can check in polynomial time whether $T$ is an $\edenv(W)$. 
\end{fact}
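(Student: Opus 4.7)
The plan is to reduce the check to a polynomial number of ``point-to-convex-hull distance'' computations, each of which is a convex quadratic program and hence solvable in polynomial time.

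First I would compute $\dia(W)$ in $O(|W|^2)$ time by enumerating all pairs. Then the verification splits into two parts, corresponding directly to the two conditions in \Cref{envelope} and \Cref{strict-envelope}. For the envelope condition $W \subseteq \ech(T)$, I would iterate over every $w \in W$ and compute
\[
\dist(w, \ch(T)) \;=\; \min_{\lambda \in \Delta_{|T|}} \Bigl| w - \sum_{t \in T} \lambda_t\, t \Bigr|,
\]
where $\Delta_{|T|}$ is the probability simplex on $T$. This is a convex quadratic program (minimize a positive semidefinite quadratic over a polytope with $|T|+1$ linear constraints), which can be solved in polynomial time by standard convex QP solvers (e.g., interior-point methods). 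Condition (P1) then reduces to checking whether each of these $|W|$ computed distances is at most $\varepsilon \dia(W)$.

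For the separation condition, I would iterate over every $w \in T$ and similarly compute $\dist(w, \ch(T \setminus \{w\}))$ as a convex QP of the same form, and then check whether each of these $|T|$ distances is strictly greater than $\delta \dia(W)$. If both sets of checks pass, output ``yes''; otherwise output ``no''. The total number of QPs solved is $|W| + |T| \leq 2|W|$, each of size polynomial in $|W|$ and $d$, so the overall running time is polynomial.

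There is no real obstacle here; the only mild subtlety is justifying that the convex QP for point-to-convex-hull distance admits a polynomial-time algorithm. I would invoke the standard fact that minimizing a convex quadratic over a polytope described by polynomially many linear inequalities lies in $\mathrm{P}$ (via the ellipsoid method or interior-point methods); alternatively one can note that the dual is a linearly constrained convex program with a closed-form KKT system. Since the ``$>$'' in~\eqref{792} is strict and all inputs are rational, the strict inequality can be decided exactly by comparing the optimal squared distance with $(\delta \dia(W))^2$.
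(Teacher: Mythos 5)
Your proposal is correct and follows essentially the same route as the paper: both reduce membership in $\eenv(W)$ and the separation condition~\eqref{792} to polynomially many point-to-convex-hull distance queries, each solvable as a small convex program. The only cosmetic difference is that you phrase the envelope check as a QP minimization while the paper phrases it as a feasibility problem, and you add a welcome remark on deciding the strict inequality exactly.
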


\begin{proof}
Fix a subset $T$. We can verify in polynomial time whether $T$ is an $\eenv(W)$. Indeed, for each point $w \in W$, we need to check if $w \in \ech(T)$. This can be expressed as a convex program feasibility problem, where the variables are $\lambda_t$ for each $t \in T$: 
$$ \left|\sum_{t \in T} \lambda_t \cdot t - w \right| \leq \varepsilon \dia(W), \quad \sum_t \lambda_t = 1, \quad \lambda_t \geq 0 \  \  \forall t \in T.$$ 

Similarly, we can check~\eqref{792} using convex programming. For each $w \in T$, we can find $\dst(w, \ch(T \setminus  \{w\})$ as follows, where the variables are $\lambda_x, x \in T \setminus \{w\}$: 
$$ \min. \ \left| \sum_{x \in T \setminus \{w\}} \lambda_x \cdot x - w \right|, \quad \sum_x \lambda_x = 1, \quad \lambda_x \geq 0 \  \  \forall x \in T\setminus \{w\}. $$
\end{proof}

For rest of the section, we address the following question: given the set $W$, and parameters $\varepsilon, \delta$, is there a  $\edenv(W)$, and if so, can we find in polynomial time an approximation to this set? We informally argue that several ``natural'' greedy strategies do not work. Consider, for example, the following algorithms for identifying $\edenv(W)$: 
\begin{itemize}
    \item Identify $T$ as the set of points $w\in W$ for which $w$ is not close to $\ch(W\setminus \{w\})$: We define a set of points in $\Re^2$. We have {\em rings} of points, one around $ (0,0)$ and the other around $(0,1)$, each point is close to the convex hull of the others and so this algorithm will   dismiss all the points as not belonging to the desired set $T$.
    \item Start by adding an extreme point (along an arbitrarily chosen direction) to $T$. For the next $k-1$ steps, iteratively find a point $w$ for which $\dist(w,\ch(T))$ is maximized and add it to $T$: a simple example as above shows that this idea may not work as well. Consider $v_1, v_2, v_3, v_4$ lying on corners of a square, and let $v_5$ be the mid-point of $v_3, v_4$. We start by adding $v_1$ and then $v_2$ to $T$. But in the next step, we could add $v_5$, and then both $v_3, v_4$. But the ``correct'' $T$ would have been $\{v_1, v_2, v_3, v_4\}$. 
\end{itemize}

If $\varepsilon=0$, the above question is easy to answer in polynomial time. The answer is yes iff the set $T$ of vertices of $\ch(W)$ satisfies (\ref{792}). Also, if $\delta=1$, $T$ has to be a singleton to satisfy (\ref{792}). 

In rest of this section, we consider the following problenm: For what pairs of values of $\varepsilon,\delta,$ can we prove that there is {\em essentially} at most one $\edenv(W)$, and if so, can we determine this set efficiently ? 
We do not know the exact answer to this, but our main result here (which suffices for the applications) is 
(verbally stated) an affirmative answer to the question if the following condition is satisfied:
$$\delta\in \Omega(\sqrt\varepsilon).$$
This will follow as a corollary of our main result:

\begin{theorem}
\label{thm:soft}
Let
$\delta,\varepsilon,\varepsilon_3$ be reals in $(0,1/8)$ satisfying
\begin{equation}
\label{822}\delta> \max \left(\frac{2\varepsilon}{\varepsilon_3-\varepsilon}, 4 \varepsilon_3 \right)
\end{equation}
Let $W$ be a finite set of points in $\Re^d$. 
We can 
determine in polynomial time whether  exists a set $T$ in $\edenv(W)$, and if so,  we can efficiently 
find a subset $Q$ of $W$ such that
\begin{align}
|Q|=|T|\\
\forall w\in T, \exists x\in Q: |w-x|\leq 2\varepsilon_3 \dia(W)
\end{align}
\end{theorem}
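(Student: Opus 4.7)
The plan is to describe and analyze a two-stage procedure: a \emph{soft-pruning} stage followed by a \emph{greedy thinning} stage. In stage one, for each $w \in W$ form the set $X_w := \{w' \in W : |w-w'| > \varepsilon_3 \dia(W)\}$ and discard $w$ iff $\dist(w,\CH(X_w)) \leq \varepsilon \dia(W)$; call the surviving set $W'$. In stage two, scan $W'$ and add a point to $Q$ whenever it is at distance strictly greater than $2\varepsilon_3 \dia(W)$ from every point already in $Q$. Each per-$w$ pruning check and each per-step distance computation is a convex program of exactly the type in Fact~\ref{f:check}, so the whole algorithm runs in polynomial time.

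The analysis rests on two claims, both assuming an $\edenv(W)$ set $T$ exists. \emph{Claim~A: no $t \in T$ is pruned.} Suppose for contradiction $t = \sum_i \lambda_i x_i + \eta$ with $x_i \in X_t$ and $|\eta| \leq \varepsilon \dia(W)$. Using $W \subseteq \ech(T)$, express each $x_i = \sum_j \mu_{ij} t_j + \eta_i$ with $|\eta_i| \leq \varepsilon \dia(W)$ and $\sum_j \mu_{ij} = 1$. Let $i^\star$ denote the index with $t_{i^\star}=t$; the identity $x_i - t = \sum_{j \neq i^\star}\mu_{ij}(t_j - t) + \eta_i$ combined with $\dia(T) \leq \dia(W)$ and $|x_i - t| > \varepsilon_3 \dia(W)$ forces $\mu_{i,i^\star} < 1 - \varepsilon_3 + \varepsilon$. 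Aggregating the $x_i$-expansions, $t = \nu_{i^\star} t + (1 - \nu_{i^\star}) p + \xi$ for some $p \in \CH(T \setminus \{t\})$ with $|\xi| \leq 2\varepsilon \dia(W)$ and $1 - \nu_{i^\star} > \varepsilon_3 - \varepsilon$; rearranging gives $\dist(t, \CH(T \setminus \{t\})) \leq 2\varepsilon \dia(W)/(\varepsilon_3 - \varepsilon) < \delta \dia(W)$ by \eqref{822}, contradicting the separation in the definition of $\edenv$. \emph{Claim~B: every $w \in W'$ lies within $\varepsilon_3 \dia(W)$ of some $t \in T$.} Otherwise $T \subseteq X_w$, and $w \in \ech(T)$ then places $w$ inside $\ech(X_w)$, so $w$ would have been pruned.

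With Claims~A and~B, $T$ survives entirely inside $W'$ and every other survivor sits inside one of $|T|$ radius-$\varepsilon_3\dia(W)$ clusters around the vertices of $T$ (these clusters are pairwise disjoint, since any two vertices of $T$ are more than $\delta\dia(W) > 2\varepsilon_3\dia(W)$ apart). In-cluster pairs have distance at most $2\varepsilon_3 \dia(W)$, while inter-cluster pairs have distance at least $\delta \dia(W) - 2\varepsilon_3 \dia(W) > 2\varepsilon_3 \dia(W)$ using $\delta > 4\varepsilon_3$. Greedy thinning with threshold $2\varepsilon_3 \dia(W)$ therefore selects exactly one representative per cluster, yielding $|Q| = |T|$ and $\min_{x \in Q} |x - t| \leq 2\varepsilon_3 \dia(W)$ for every $t \in T$. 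For the existence question, I would run the algorithm and then verify, via the convex programs of Fact~\ref{f:check}, that $W \subseteq \CH(Q) + (\varepsilon + 2\varepsilon_3)\dia(W) B$ and $\dist(q, \CH(Q \setminus \{q\})) > (\delta - 4\varepsilon_3)\dia(W)$ for every $q \in Q$; by the above analysis these conditions hold whenever some $\edenv(W)$ exists, so the verification doubles as a detection test.

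The main obstacle is Claim~A: one must thread two layers of convex combinations --- the outer one witnessing $t \in \ech(X_t)$ and the inner one supplied by $W \subseteq \ech(T)$ --- while losing only a single factor of $(\varepsilon_3-\varepsilon)$ in the denominator of the final separation bound. Choosing the $X_w$ radius to be exactly $\varepsilon_3 \dia(W)$ (rather than, say, $2\varepsilon_3 \dia(W)$) is what produces this cleaner denominator and matches the first clause of~\eqref{822}, while $\delta > 4\varepsilon_3$ is precisely what the clustering step requires; the delicate calibration of these two conditions is the technical heart of the argument.
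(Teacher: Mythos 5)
Your proposal is correct and follows essentially the same route as the paper: an identical two-stage algorithm (soft pruning of each $w$ via $\ech$ of its $\varepsilon_3$-far points, then greedy/maximal thinning at threshold $2\varepsilon_3\dia(W)$), and the same two structural claims --- $T$ survives pruning (your Claim~A, the paper's Claim~\ref{cl:ed}), and every survivor lies within $\varepsilon_3\dia(W)$ of a vertex of $T$ (your Claim~B, which is embedded in the paper's Claim~\ref{cl:final}). The arithmetic in your Claim~A --- threading the outer coefficients $\lambda_i$ through the inner envelope expansions $\mu_{ij}$ to get a $\varepsilon_3-\varepsilon$ lower bound on the mass off of $t$, hence a $2\varepsilon/(\varepsilon_3-\varepsilon)$ distance bound --- matches the paper's calculation exactly.

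Where you do something marginally better is the final counting step. You make the cluster structure explicit: survivors partition into $|T|$ disjoint balls of radius $\varepsilon_3\dia(W)$ around the vertices of $T$, with in-cluster distance at most $2\varepsilon_3\dia(W)$ and inter-cluster distance at least $(\delta-2\varepsilon_3)\dia(W) > 2\varepsilon_3\dia(W)$, so greedy thinning takes exactly one per cluster. This neatly closes an arithmetic gap in the paper's Claim~\ref{cl:final}, which as written bounds $|x - x_w| \leq |x-w| + |w-x_w|$ but the honest bound is $\varepsilon_3\dia(W) + 2\varepsilon_3\dia(W) = 3\varepsilon_3\dia(W)$, which is not smaller than the $2\varepsilon_3\dia(W)$ pairwise gap in $Q$; one needs (as your clustering argument supplies) the extra observation that $x$ and $x_w$ in fact lie in the same $\varepsilon_3$-ball around $w$. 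One small loose end, which you share with the paper: the existence-detection step. The paper re-checks whether the computed $Q$ is literally $\edenv(W)$, but the analysis only guarantees $Q$ is within $2\varepsilon_3\dia(W)$ of $T$, not that $Q$ itself is an $\edenv(W)$. Your relaxed verification (checking $W \subseteq \CH(Q) + (\varepsilon+2\varepsilon_3)\dia(W)B$ and $(\delta-4\varepsilon_3)$-separation of $Q$) is the more natural fix, though you would need to argue that passing this relaxed test is an acceptable notion of ``yes'' for the decision problem; as stated it could pass in some cases where no exact $\edenv(W)$ exists. Since the paper leaves the same issue implicit, this is not a defect specific to your write-up.
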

\begin{corollary}\label{cor:soft}
Let
$\delta,\varepsilon$ be reals in $(0,1/8)$ satisfying $\delta>16\sqrt\varepsilon$
Let $W$ be a finite set of points in $\Re^d$. 
We can 
determine in polynomial time whether  exists a set $T$ forming a $\edenv(W)$, and if so,  we can efficiently 
find a subset $Q$ of $W$ such that
\begin{align}
|Q|=|T|\\
\forall w\in T, \exists x\in Q: |w-x|\leq 8\sqrt \varepsilon \dia(W)
\end{align} 
\end{corollary}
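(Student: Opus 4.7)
My plan is to derive the corollary as an immediate specialization of Theorem \ref{thm:soft} by choosing the free parameter $\varepsilon_3 := 4\sqrt{\varepsilon}$. This choice is dictated by matching the conclusions: the corollary claims that every $w \in T$ lies within $8\sqrt{\varepsilon}\dia(W)$ of some $x \in Q$, while Theorem \ref{thm:soft} guarantees distance at most $2\varepsilon_3 \dia(W)$, so setting $\varepsilon_3 = 4\sqrt{\varepsilon}$ lines these up exactly. The algorithmic claims, the equality $|Q|=|T|$, and the polynomial-time test for existence of a $\edenv(W)$ are then inherited verbatim from the theorem.

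To invoke Theorem \ref{thm:soft} I must verify its hypotheses under the assumption $\delta > 16\sqrt{\varepsilon}$ with $\delta,\varepsilon \in (0, 1/8)$. First, the condition $\varepsilon_3 \in (0, 1/8)$ follows because $\delta > 16\sqrt{\varepsilon}$ together with $\delta < 1/8$ forces $\sqrt{\varepsilon} < 1/128$, so $\varepsilon_3 = 4\sqrt{\varepsilon} < 1/32 < 1/8$. Second, the condition $\delta > 4\varepsilon_3$ is literally $\delta > 16\sqrt{\varepsilon}$, which is exactly the corollary's hypothesis. The only condition that requires a line of algebra is $\delta > 2\varepsilon/(\varepsilon_3 - \varepsilon)$: writing
\[
\varepsilon_3 - \varepsilon \;=\; \sqrt{\varepsilon}\,(4 - \sqrt{\varepsilon}) \;\geq\; 3\sqrt{\varepsilon}
\]
(using $\sqrt{\varepsilon} \leq 1$) gives $2\varepsilon/(\varepsilon_3-\varepsilon) \leq \tfrac{2}{3}\sqrt{\varepsilon}$, which is dwarfed by $16\sqrt{\varepsilon} < \delta$.

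There is essentially no obstacle here --- the proof is a bookkeeping exercise, and the technical heart of the result (the structural uniqueness of $T$ and the algorithm that produces the approximating subset $Q$) is already carried by Theorem \ref{thm:soft}. The only mild subtlety is confirming that $\delta > 16\sqrt{\varepsilon}$ is the binding constraint; the calculation above shows that the $4\varepsilon_3$ term dominates the $2\varepsilon/(\varepsilon_3-\varepsilon)$ term in the $\max$ of \eqref{822}, which is why the two conditions collapse into the single cleaner bound stated in the corollary.
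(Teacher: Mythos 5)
Your proof is correct and uses exactly the same choice $\varepsilon_3 = 4\sqrt{\varepsilon}$ that the paper makes; the paper's own proof consists of only that single line. Your additional verification that the hypotheses of Theorem~\ref{thm:soft} are satisfied (in particular that the $4\varepsilon_3$ term dominates the $2\varepsilon/(\varepsilon_3-\varepsilon)$ term) is the bookkeeping the paper leaves implicit, and it is carried out correctly.
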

\begin{proof}
The Corollary follows from Theorem (\ref{thm:soft}) by taking $\varepsilon_3=4\sqrt\varepsilon$ 
\end{proof}
\begin{proof} (of \Cref{thm:soft})
The procedure is described  in~\Cref{algo:main}. We first compute a subset $Q''$ of $W$ consisting of points $w \in W$ which do not lie in the soft convex hull of the points in $W$ which are ``far'' from $w$ -- this can be doen in polynomial time by using arguments similar to those in the proof of~\Cref{f:check}. Then $Q$ is defined as a maxumal subset of points in $Q''$ with the property that the pair-wise distance between the points in it large. Finally, we check whether $Q$ is an $\edenv(W)$, which can be done efficiently using~\Cref{f:check}.

\begin{algorithm}[H]
  \caption{Procedure to identify $\edenv(W)$}
  \label{algo:main}
    Compute $\dia(W)$. \;
    Let $Q' := \{w \in W: w \in \ech \left(\{x \in W: |w-x| \geq \varepsilon_3 \dia(W) \}\right). $ \;
    Let $Q'' := W \setminus Q'$. \;
    Define $Q :=$ maximal subset of $Q''$ such that for every distinct $x,y \in Q$, $|x-y| > 2\varepsilon_3 \dia(W). $ \;
    \If{$Q$ is $\edenv(W)$} {
       Output  the set $Q$. \; }
    \Else {
     Output No. }
\end{algorithm}

Now we analyze this algorithm. If there is no $\edenv(W)$, then the algorithm will clearly say ``No'' (because the set $Q$ will not be $\edenv(W)$). So assume that there is a set $T$ which is $\edenv(W)$. 

\begin{claim}
\label{cl:ed}
$T \subseteq Q''$. 
\end{claim}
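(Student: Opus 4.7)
The plan is to argue by contradiction: pick $w \in T$, assume $w \in Q'$, and derive a two-sided constraint on the ``self-weight'' of $w$ in a certain convex representation over $T$ that is incompatible with~\eqref{822}.

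Since $w \in Q'$, there exist $x_1, \ldots, x_m \in W$ with $|w - x_i| \geq \varepsilon_3 \dia(W)$ for all $i$, and nonnegative weights $\lambda_i$ summing to $1$, such that $\left| w - \sum_i \lambda_i x_i \right| \leq \varepsilon \dia(W)$. Because $T$ is an $\varepsilon$-envelope of $W$, each $x_i$ can be written as $x_i = \sum_{t \in T} \mu_{i,t} t + e_i$ with $\mu_{i,t} \geq 0$, $\sum_t \mu_{i,t} = 1$, and $|e_i| \leq \varepsilon \dia(W)$. Substituting and applying the triangle inequality, $w$ is within $2\varepsilon \dia(W)$ of the convex combination $\sum_{t \in T} \alpha_t t$, where $\alpha_t := \sum_i \lambda_i \mu_{i,t}$.

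Next I bound $\mu_{i,w}$, the weight on $w$ in the representation of each $x_i$. Writing $\sum_t \mu_{i,t} t = \mu_{i,w} w + (1-\mu_{i,w}) y_i$ with $y_i \in \ch(T \setminus \{w\})$ (the case $\mu_{i,w} = 1$ only makes the argument easier), we get $x_i - w = (1-\mu_{i,w})(y_i - w) + e_i$, so
$$\varepsilon_3 \dia(W) \;\leq\; |x_i - w| \;\leq\; (1 - \mu_{i,w}) |y_i - w| + \varepsilon \dia(W) \;\leq\; (1 - \mu_{i,w}) \dia(W) + \varepsilon \dia(W),$$
since $y_i, w \in \ch(T) \subseteq \ch(W)$ gives $|y_i - w| \leq \dia(W)$. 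Hence $1 - \mu_{i,w} \geq \varepsilon_3 - \varepsilon$, and averaging with weights $\lambda_i$ yields $1 - \alpha_w \geq \varepsilon_3 - \varepsilon > 0$.

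Finally, since $\alpha_w < 1$, set $p := \sum_{t \neq w} \frac{\alpha_t}{1-\alpha_w} t \in \ch(T \setminus \{w\})$. Then $w - \sum_t \alpha_t t = (1-\alpha_w)(w - p)$, and so
$$|w - p| \;\leq\; \frac{2\varepsilon \dia(W)}{1 - \alpha_w} \;\leq\; \frac{2\varepsilon}{\varepsilon_3 - \varepsilon} \dia(W) \;<\; \delta \dia(W),$$
where the last inequality uses the first part of~\eqref{822}. This contradicts $\dst(w, \ch(T \setminus \{w\})) > \delta \dia(W)$, which holds because $T$ is a $\edenv(W)$. Therefore $w \notin Q'$, i.e., $w \in Q''$. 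The only subtle step is ensuring $|y_i - w| \leq \dia(W)$ (immediate from $y_i, w \in \ch(W)$) and combining the lower bound $1 - \alpha_w \geq \varepsilon_3 - \varepsilon$ with the upper bound forced by the separation property of $T$; hypothesis~\eqref{822} was calibrated precisely to make these incompatible.
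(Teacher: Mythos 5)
Your proof is correct and follows essentially the same route as the paper's: decompose each far point $x_i$ over $T$, isolate the weight on $w$, bound $1-\mu_{i,w}$ from below using $|x_i - w| \geq \varepsilon_3\dia(W)$, and then divide out $(1-\alpha_w)$ to contradict the separation of $w$ from $\ch(T\setminus\{w\})$. (Incidentally you have the sign right in $1-\mu_{i,w}\geq \varepsilon_3-\varepsilon$; the paper's equation~\eqref{eq:111} and the final displayed bound contain a $\varepsilon-\varepsilon_3$ typo.)
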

\begin{proof}

Suppose for the sake of contradiction that there is a point $w \in T \setminus Q''$.  For sake of brevity, let $W_x$ denote the subset of points in $W$ satisfying $|w-x|  \geq \varepsilon_3 \dia(W)$. The fact that $w \notin Q''$ implies that $w \in \ch(W_x)$, i.e., 
$$ w = \sum_{x \in W_x} \lambda_x \cdot x + e_0, $$
where $\lambda_x$ form a convex combination and $|e_0| \leq \varepsilon \dia(W).$ Since $W \subseteq \eenv(T)$, for each $x \in W_x$, there are a point $x' \in \ch(T)$ such that $e_x := x-x'$ has length at most $\varepsilon \dia(W)$. Since $w \in T$, we can write $x'$ as 
$$ x' = \mu_x w + (1-\mu_x) y_x, \quad y_x \in \ch(T \setminus \{w\}), $$
where $\mu_x \in [0,1]$. 
Therefore, 
$$ |w-x| \leq |w-x'| + |e_x| = (1-\mu_x) |w-y_x| + |e_x| \leq (1-\mu_x) \dia(W) + \varepsilon \dia(W). $$
But we know that $|w-x| \geq \varepsilon_3 \dia(W)$. Therefore, we get
\begin{align}
\label{eq:111}
(1- \mu_x) \geq \varepsilon - \varepsilon_3. 
\end{align}
Now, 
\begin{align*}
w & = \sum_{x \in W_x} \lambda_x \cdot x + e_0 = \sum_{x \in W_x} \lambda_x \cdot x' + \sum_x \lambda_x e_x + e_0 \\
& = \sum_{x \in W_x} \lambda_x \mu_x \cdot w + \sum_{x \in W_x} \lambda_x (1 - \mu_x) y_x + e,  
\end{align*}
where $e = \sum_x \lambda_x e_x + e_0$ has length at most $2\varepsilon \dia(W)$. 
Let $\theta_x$ denote $\lambda_x(1-\mu_x)$. Observe that $\theta:= \sum_x \theta_x = 1 - \sum_x \lambda_x \mu_x$. Therefore, we can rewrite the above as 
$$ w - \frac{1}{\theta} \sum_{x \in W_x} \theta_x y_x = \frac{e}{\theta}. $$
Since $\frac{1}{\theta} \sum_{x \in W_x} \theta_x y_x \in \ch(T \setminus \{w\})$, we see that 
$$ \dist(w, \ch(T \setminus \{w\}) \leq \frac{|e|}{\theta} \leq 
\frac{2\varepsilon \dia(W)}{\sum_{x \in W_x} \lambda_x (1-\mu_x)} 
\stackrel{\eqref{eq:111}}{\leq} \frac{2\varepsilon}{\varepsilon-\varepsilon_3} \dia(W). $$
Using~\eqref{792} and~\eqref{822}, we get a contradiction. Therefore, $w \in Q''$. 
\end{proof}

Since $T \subseteq Q''$, for every $w \in T$, there exists a point $x_w \in Q$ such that $|w-x_w| \leq 2 \varepsilon_3 \dia(W). $ It is easy to see that if $w, w'$ are two distinct points in $T$, then $x_w \neq x_{w'}$. Indeed, 
$$ |x_w - x_{w'}| \geq |w-w'| - |w-x_w| - |w'-x_{w'}| \geq \delta \dia(W) - 4 \varepsilon_3 \dia(W) \stackrel{\eqref{822}}{>} 0. $$
Finally, we prove that every element in $Q$ is of the form $x_w$ for some $w \in T$. 

\begin{claim}
\label{cl:final}
Every $x \in Q$ is of the form $x_w$ for some $w \in T$. 
\end{claim}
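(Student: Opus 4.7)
The plan is to argue that every $x \in Q$ lies within $\varepsilon_3 \dia(W)$ of some unique vertex $w = g(x) \in T$, and then match the map $g$ with the already-established injection $w \mapsto x_w$ to identify $x$ with $x_{g(x)}$.

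First I would establish the main separation step: for every $x \in Q$ there exists some $w \in T$ with $|x - w| \leq \varepsilon_3 \dia(W)$. Suppose for contradiction this fails for some $x \in Q$. Then every $w \in T$ satisfies $|w - x| > \varepsilon_3 \dia(W)$, so using $T \subseteq W$ we obtain $T \subseteq \{y \in W : |y-x| \geq \varepsilon_3 \dia(W)\}$. Since $T$ is an $\eenv(W)$ and $x \in W$, we have $x \in \ech(T)$, and by monotonicity of $\ech$ this places $x \in \ech(\{y \in W : |y-x| \geq \varepsilon_3 \dia(W)\})$, i.e., $x \in Q'$. This contradicts $x \in Q \subseteq Q'' = W \setminus Q'$.

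Next I would define $g \colon Q \to T$ by taking $g(x)$ to be the $w \in T$ produced in the previous step. Uniqueness of this $w$ follows from the $\delta$-separation of $T$: any two candidates $w, w'$ would satisfy $|w - w'| \leq 2 \varepsilon_3 \dia(W) < \delta \dia(W)$ (using $\delta > 4 \varepsilon_3$ from \eqref{822}), violating the $\edenv$ separation property of $T$. Furthermore $g$ is injective, since $g(x) = g(x')$ with $x \neq x'$ in $Q$ would give $|x - x'| \leq 2 \varepsilon_3 \dia(W)$, contradicting the pairwise separation $> 2\varepsilon_3 \dia(W)$ built into $Q$.

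Finally, for each $w \in T$ I would verify that $g(x_w) = w$, which identifies $x_w$ as $g^{-1}(w)$. The triangle inequality gives $|w - g(x_w)| \leq |w - x_w| + |x_w - g(x_w)| \leq 2\varepsilon_3 \dia(W) + \varepsilon_3 \dia(W) = 3 \varepsilon_3 \dia(W) < \delta \dia(W),$ and the $\edenv$ separation of $T$ forces $g(x_w) = w$. So the image of $g$ contains $T$, making $g$ a bijection whose inverse is precisely $w \mapsto x_w$. In particular, every $x \in Q$ satisfies $x = x_{g(x)}$, which is exactly Claim~\ref{cl:final} (and also gives $|Q| = |T|$). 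The main obstacle is the first step: one must notice that the subset of $W$ used to define $Q'$ relative to the hypothetical bad $x$ is forced to contain all of $T$, so that the $\eenv(W)$ property of $T$ then transfers $x$ into $Q'$. Once this is in place, the rest is bookkeeping of constants under the slack in \eqref{822}.
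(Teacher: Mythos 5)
Your proof is correct and takes the same route as the paper: show every $x\in Q$ is within $\varepsilon_3\,\dia(W)$ of a unique vertex of $T$ (the core separation step, using $T\subseteq W$, $x\in\ech(T)$, and the definition of $Q'$), and then identify $x$ with $x_w$. Your final bookkeeping is in fact tighter than the paper's: the paper's closing inequality bounds $|x-x_w|$ by $2\varepsilon\,\dia(W)$, but the triangle inequality only yields $\varepsilon_3\,\dia(W)+2\varepsilon_3\,\dia(W)=3\varepsilon_3\,\dia(W)$, which does not by itself beat the $>2\varepsilon_3\,\dia(W)$ pairwise gap built into $Q$; your extra step of pinning $g(x_w)=w$ using $3\varepsilon_3<\delta$ together with the $\delta$-separation of $T$, and then invoking injectivity of $g$, is exactly what is needed to close that small gap.
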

\begin{proof}
Consider a point $x \in Q$. This implies that $x \in Q''$. We claim that there is a point $w \in T$ such that $|x-w| \leq \varepsilon_3 \dia(W)$. Suppose not. Then the set of points $w$ such that $|x-w| \geq \varepsilon_3 \dia(W)$ includes $T$. But we know that $x \in \ech(T)$, and so $x \in Q'$, a contradiction. 

Therefore $$|x - w_x| \leq |x- w| + |w-w_x| \leq  2\varepsilon \dia(W). $$
This implies that $x=w_x$ (since any two distinct points in $Q$ have distance greater than $2\varepsilon \dia(W)$. 
\end{proof}

Thus we have shown that $|Q|=|T|$ and for every $w \in T$, there is a unique element $x_w \in Q$ with $|w-x_w| \leq 2 \varepsilon_3 \dia(W)$.
\end{proof}

\subsection{Algorithm for $\kolp$}
We now show how soft convex hulls can be used to generate a solution for the $\kolp$ problem. The following result, which formalizes~\Cref{kolp-0}, uses the same setting as that in~\Cref{upper-bound-1}:

\begin{theorem}\label{upper-bound-kolp}
Suppose $\varepsilon,\delta$ are reals in $[0,1]$ with $\delta^2\geq c\varepsilon\sqrt d, \delta^3 \geq c \, \varepsilon,$ where $c$ is a large enough constant.  Let $K$ be a $\delta$-well-separated $k$-vertex polytope. Suppose we are also given an  $ \opto_\varepsilon(K)$ oracle $\cal O$. 
Let $W$ be the set of answers of the oracle $\cal O$ to $m=poly(d)\cdot k^{\Omega(1/\delta^2)}$ independent random queries. We can find $Q\subseteq W, |Q|=k$ in 
randomized $poly(d) \cdot k^{\Omega(1/\delta^2)}$-time which satisfies the following condition w.h.p.:  for every vertex $v$ of $K$, there is a point $v'$ in $Q$ with $|v-v'| \leq \delta \Delta(K)/10$. 
\end{theorem}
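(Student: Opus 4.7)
The plan is to bootstrap from the list-learning guarantee of Theorem \ref{upper-bound-1} to produce the exactly-$k$-points answer by running the soft-convex-hull procedure of Theorem \ref{thm:soft} on the set $W$. Concretely, first I would apply Theorem \ref{upper-bound-1} to $W$ to obtain, for each vertex $M_{\cdot,\ell}$ of $K$, a point $v'_\ell\in W$ with $|v'_\ell - M_{\cdot,\ell}|\leq \gamma\cdot\Delta(K)$ where $\gamma = O(\delta^2/c)$. Denote $T := \{v'_1,\ldots,v'_k\}\subseteq W$; the goal is to show that $T$ qualifies as a $(\varepsilon'',\delta'')$-envelope of $W$ for suitable parameters, and then to invoke Theorem \ref{thm:soft} to recover a set $Q\subseteq W$ of size exactly $k$ whose elements approximate those of $T$ (and therefore the vertices of $K$).

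For the envelope property, observe first that $\dia(W)=(1\pm 2\varepsilon)\Delta(K)$, since every point in $W$ lies within $\varepsilon\Delta(K)$ of $K$. Next, every point $x(u)\in W$ is within $\varepsilon\Delta(K)$ of some point of $K=\CH(\{M_{\cdot,\ell}\})$, which in turn is within $\gamma\Delta(K)$ of $\CH(T)$, so $W\subseteq \CH(T) + \varepsilon''\dia(W)\cdot B$ with $\varepsilon'' = O(\varepsilon+\gamma) = O(\delta^2/c)$. For the separation condition, $\delta$-well-separation of $K$ plus two applications of triangle inequality shows
\[
\dist\bigl(v'_\ell,\CH(T\setminus\{v'_\ell\})\bigr)\;\geq\;(\delta-2\gamma)\Delta(K)\;\geq\;\delta''\dia(W),\qquad \delta'':=\delta/2,
\]
once $c$ is large. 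Hence $T\in(\varepsilon'',\delta'')\text{-ENV}(W)$.

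Now I would apply Theorem \ref{thm:soft} with $\varepsilon = \varepsilon''$, $\delta=\delta''$, and $\varepsilon_3 := \delta/80$. For $c$ sufficiently large the two conditions in \eqref{822} hold: $\delta''>4\varepsilon_3$ is immediate, and $\delta''> 2\varepsilon''/(\varepsilon_3-\varepsilon'')$ holds because $\varepsilon''=O(\delta^2/c)\ll \delta\varepsilon_3$. The theorem then outputs in polynomial time a set $Q\subseteq W$ with $|Q|=|T|=k$ and, for each $\ell$, a unique $x_\ell\in Q$ with $|x_\ell - v'_\ell|\leq 2\varepsilon_3\dia(W)\leq \delta\Delta(K)/20$. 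Combining with $|v'_\ell - M_{\cdot,\ell}|\leq\gamma\Delta(K)\leq \delta\Delta(K)/20$ yields $|x_\ell - M_{\cdot,\ell}|\leq \delta\Delta(K)/10$, which is the required conclusion.

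The main obstacle is the parameter bookkeeping: one has to choose $\varepsilon_3$ small enough that the final error $2\varepsilon_3\dia(W)+\gamma\Delta(K)$ comes out below $\delta\Delta(K)/10$, yet large enough relative to $\varepsilon''$ that the hypothesis \eqref{822} of Theorem \ref{thm:soft} is satisfied with room to spare. The hypotheses $\delta^2\geq c\varepsilon\sqrt d$ and $\delta^3\geq c\varepsilon$ give exactly this slack, since they force $\varepsilon''=O(\delta^2/c)$, which in turn makes the ratio $\varepsilon''/\varepsilon_3$ as small as $O(\delta/c)$. Aside from this accounting, everything else is an application of existing results together with triangle inequalities, and the running time is dominated by the $\poly(d)\cdot k^{\Omega(1/\delta^2)}$ calls to the oracle $\mathcal{O}$ and the polynomial-in-$|W|$ soft-convex-hull routine.
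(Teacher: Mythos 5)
Your proposal follows essentially the same path as the paper's proof: obtain $T\subseteq W$ consisting of one near-approximant per vertex via the list-learning theorem, verify that $T$ is an $\edenv(W)$ by establishing the soft-hull containment and the pairwise-separation condition, then feed $W$ to the soft-convex-hull routine to extract $Q$ of size exactly $k$, and finish by chaining the two error bounds. The only cosmetic difference is that you invoke~\Cref{thm:soft} directly with the tuned choice $\varepsilon_3=\delta/80$, whereas the paper routes through~\Cref{cor:soft} (which hard-codes $\varepsilon_3=4\sqrt{\varepsilon}$); either choice clears condition~\eqref{822} for large $c$ and yields a final error bounded by $\delta\Delta(K)/10$.
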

\begin{proof}
    The proof of~\Cref{upper-bound-1} shows that  for every vertex  $\Mdot{\ell}$ of $K$, there is point $x(u^{\ell}) \in W$ such that 
    \begin{align}
        \label{eq:distMl}
          |x(u^{(\ell)})-M_{\cdot,\ell}|\leq 17 \delta^2 \Delta(K)/c.
    \end{align}
    Let $T$ denote the set of points $\{x(\uell): \ell \in [k]\}$. Our first claim is that the points of $T$ are also well-separated. 

\begin{claim}
\label{cl:wellstx}
For any $\ell \in [k]$, 
$$\dst(x(\uell), \CH(T \setminus \{x(\uell)\})) \geq \delta \Delta(K)/2.$$ Further, the diameter of $\CH(T)$ is at most $2 \Delta(K). $
\end{claim}
\begin{proof}
    Fix an index $\ell \in [k]$ and a point $y \in \CH(T \setminus \{x(\uell)\})).$ We can express $y$ as a convex  combination of points in $T \setminus \{x(\uell)\},$ i.e., 
    $$ y = \sum_{\ell' \neq \ell} \lambda_{\ell'} \cdot x(u^{(\ell')}), \quad \mbox{where \,  } \sum_{\ell' \neq \ell} \lambda_{\ell'} = 1.$$ Now
    \begin{align*}
        |x(\uell) - y| & \geq \left| \Mdot{\ell} - \sum_{\ell' \neq \ell} \lambda_{\ell'} \cdot \Mdot{\ell'} \right| - |x(\uell) - \Mdot{\ell}| - \sum_{\ell' \neq \ell} \lambda_{\ell'} \cdot |x(u^{(\ell')})-\Mdot{\ell'}| \\
        & \geq \left(\delta - \frac{34 \delta^2}{c} \right) \Delta(K) \geq \delta \Delta(K)/2,
    \end{align*}  
    where the second last inequality follows from~\eqref{eq:distMl} and the fact that $K$ is $\delta$-well-separated. 
    Since $\dst(x(\uell), K) \leq \varepsilon \Delta(K)$, it follows that $\Delta(\CH(T)) \leq 2 \Delta(K).$
\end{proof}

Recall that $W$ denotes the set $\{x(u): u \in U\}$. We now show that $\CH(T)$ closely approximates the set $\CH(W)$. 
\begin{claim}
    \label{cl:T}
    $W \subseteq \echp(T), $ where $\varepsilon'= \frac{32 \delta^2}{c}. $
\end{claim}

\begin{proof}
    Fix a point $x(u) \in W$. We know that $x(u)$ can be written as 
    $$ x(u) = y(u) + z(u), \quad y(u) \in K, |z(u)| \leq \varepsilon.$$
    Let $y(u) = \sum_{\ell \in [k]} \lambda_{\ell} \cdot \Mdot{\ell}, $ where the coefficients $\lambda_\ell$ form a convex combination. Then 
    $$ \left|x(u) - \sum_{\ell \in [k]} \lambda_\ell \cdot x(\uell) \right| \leq \sum_{\ell \in [k]} \lambda_\ell \cdot |x(\uell) - \Mdot{\ell}| + |z(u)| \leq \frac{17 \delta^2 \Delta(K)}{c} + \varepsilon \Delta(K) \leq \frac{32 \delta^2 \Delta(K)}{c},  $$
    where the second last inequality follows from~\eqref{eq:distMl} and the last inequality by the assumption in. This proves the desired result. 
\end{proof}

~\Cref{cl:wellstx} and~\Cref{cl:T} imply that $T$ is $\edenvp(W)$ with $\delta' = \delta/4, \varepsilon'= \frac{32 \delta^2}{c}$. We can now apply~\Cref{cor:soft} to get approximations to $x(\uell)$ within distance $17 \sqrt{\varepsilon'} \Delta(K)$.~\Cref{cl:T} now implies that we can get approximations to $\Mdot{\ell}$ within distance 
$$ 17 \sqrt{\varepsilon'} \Delta(K) + \frac{16 \delta^2 \Delta(K)}{c} \leq \frac{\delta \Delta(K)}{10}.$$
This proves the desired result.
\end{proof}

The following result is the analogue of~\Cref{upper-bound-2}, and gives an algorithm for the $\kolp$ problem under slightly different conditions on the parameters $\delta$ and $\varepsilon$. 

\begin{theorem}\label{upper-bound-2kolp}
Suppose $\varepsilon,\delta$ are reals in $[0,1]$ with $\delta^2\geq c\varepsilon\sqrt d, \delta \geq \frac{ \sqrt{\log k}}{\sqrt{cd}},$ where $c$ is  large enough constant.  Let $K$ be a $\delta$-well-separated $k$-vertex polytope. Suppose we are also given an $\opto_\varepsilon(K)$ oracle $\cal O$.
Let $W$ be the set of answers of the oracle $\cal O$ to $m=poly(d)\cdot k^{O(1/\delta^2)}$ independent queries.  In randomized $poly(d) \cdot k^{O(1/\delta^2)}$-time we can find $Q\subset W$ of $k$ points such that the following condition is satisfied w.h.p.:  for every vertex $v$ of $K$, there is a point $v'$ in $Q$ with $|v-v'| \leq \delta \Delta(K)/10$. 
\end{theorem}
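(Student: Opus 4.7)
The plan is to mirror the proof of Theorem \ref{upper-bound-kolp} almost verbatim, with only one substitution in the first step: I would invoke Theorem \ref{upper-bound-2} in place of Theorem \ref{upper-bound-1} to obtain the list-learning guarantee under the new hypothesis $\delta \geq \sqrt{\log k}/\sqrt{cd}$ (instead of $\delta^3 \geq c\varepsilon$). This yields, for every vertex $\Mdot{\ell}$ of $K$, a direction $u^{(\ell)} \in U$ whose oracle answer $x(u^{(\ell)}) \in W$ satisfies $|x(u^{(\ell)}) - \Mdot{\ell}| \leq \gamma \Delta(K)$, where $\gamma = O(\delta^2/\sqrt c)$. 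Note the only change from Theorem \ref{upper-bound-1} is that the error now scales as $\delta^2/\sqrt c$ rather than $\delta^2/c$, so I must carry this weaker bound through the remainder of the argument.

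Next, I would reprove the analogue of Claim \ref{cl:wellstx}: setting $T := \{x(u^{(\ell)}) : \ell \in [k]\}$, the triangle inequality together with $\delta$-well-separatedness of $K$ gives
\[
\dst\bigl(x(u^{(\ell)}),\, \CH(T \setminus \{x(u^{(\ell)})\})\bigr) \geq (\delta - 2\gamma)\Delta(K) \geq \delta \Delta(K)/2,
\]
valid as long as $c$ is a large enough constant so that $2\gamma \leq \delta/2$; the diameter bound $\dia(\CH(T)) \leq 2\Delta(K)$ is immediate. Then I would reprove the analogue of Claim \ref{cl:T}: for any $x(u) \in W$, writing $x(u) = y(u) + z(u)$ with $y(u) = \sum_\ell \lambda_\ell \Mdot{\ell} \in K$ and $|z(u)| \leq \varepsilon\Delta(K)$, and substituting $x(u^{(\ell)})$ for each $\Mdot{\ell}$, yields
\[
\Bigl|x(u) - \sum_\ell \lambda_\ell\, x(u^{(\ell)})\Bigr| \leq \gamma \Delta(K) + \varepsilon \Delta(K) \leq 2\gamma \Delta(K),
\]
so $W \subseteq \echp(T)$ for $\varepsilon' = 2\gamma = O(\delta^2/\sqrt c)$.

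Thus $T$ is an $(\varepsilon', \delta/4)$-envelope of $W$ (using $\dia(\CH(W)) \leq 3\Delta(K)$ after adjusting constants), so I would apply Corollary \ref{cor:soft}. This requires $\delta/4 > 16\sqrt{\varepsilon'}$, i.e., $\delta \gtrsim \sqrt{\delta^2/\sqrt c}$, which holds for $c$ large enough. The corollary, executable in polynomial time, returns a $Q \subseteq W$ of size $|T| = k$ whose points approximate those of $T$ within $8\sqrt{\varepsilon'}\, \dia(\CH(W)) = O(\delta/c^{1/4})\Delta(K)$. Combined via triangle inequality with the $\gamma \Delta(K) = O(\delta^2/\sqrt c)\Delta(K)$ approximation of $\Mdot{\ell}$ by $x(u^{(\ell)}) \in T$, this gives a final error of $O(\delta/c^{1/4})\Delta(K)$, which is at most $\delta\Delta(K)/10$ once $c$ is chosen sufficiently large.

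The only subtle point, and the main obstacle worth attention, is bookkeeping of constants: since Theorem \ref{upper-bound-2} weakens the error guarantee from $\delta^2/c$ to $\delta^2/\sqrt c$, and since Corollary \ref{cor:soft} introduces a further square root, the final bound scales as $\delta/c^{1/4}$ rather than $\delta/\sqrt c$ as in Theorem \ref{upper-bound-kolp}. This only requires taking $c$ a larger (but still absolute) constant, so the argument goes through without structural change.
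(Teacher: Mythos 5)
Your proposal is correct and takes essentially the same route the paper intends. The paper does not actually write out a proof of Theorem~\ref{upper-bound-2kolp}; it simply states it as ``the analogue of~\Cref{upper-bound-2}'' immediately after proving Theorem~\ref{upper-bound-kolp}, and the implicit proof is precisely what you describe: repeat the argument of Theorem~\ref{upper-bound-kolp} with Theorem~\ref{upper-bound-2} substituted for Theorem~\ref{upper-bound-1}, carrying the weaker $O(\delta^2/\sqrt c)$ error through the well-separation claim, the envelope claim, and the application of Corollary~\ref{cor:soft}, then absorbing the resulting $c^{1/4}$ loss by taking $c$ larger.
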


\section{$\kolp$ algorithm for Latent Polytopes using Singular Value Decomposition}
\label{sec:lkp}

~\Cref{lower-bound-0} showed that a solution to $\kolp$ problem requires the error parameter $\varepsilon$ to be $O^*(1/\sqrt{d})$. \Cref{upper-bound-1} and~\Cref{upper-bound-2} give algorithms achieving this bound. However, for many polytopes with $k$ vertices, we can solve the $\kolp$ problem with $\varepsilon$ being $O^*(1/\sqrt{k})$. However, if $k<d$, this error is too high. To tackle this, we find a good approximation to the subspace spanned by the vertices of $K$, 
then we project to this subspace and use the result in~\Cref{upper-bound-1}. 
One such example is the ``Latent $k-$ Polytope'' (abbreviated $\lkp$) problem which we now describe.

The $\lkp$ problem has been studied in~\cite{BhattacharyyaK20}. Certain assumptions were made on the model, namely, the hidden polytope $K$ as well as on the (hidden) process for generating observed data from latent points in $K$. These assumptions are (a) shown to hold in several important Latent Variable models and (b) are sufficient to enable one to get polynomial time learning algorithms.

Here, we formulate assumptions which are similar, but, weaker in one important aspect. Whereas \cite{BhattacharyyaK20} assumed that each vertex of $K$ has a separation from the {\bf affine hull} of the other vertices (thus, in particular, each vertex is affinely independent of other vertices), we assume here that each vertex is separated only from the convex hull of the others. Under this weaker assumption, the algorithm of~\cite{BhattacharyyaK20} does not work. We give a different algorithm which we prove works. It is also simpler to state and carry out and its proof is based on a new general tool we introduce here - the Random Separating Hyperplane theorem (\Cref{thm:sephp}).  

\noindent
{\bf Assumptions on data in the $\lkp$ problem:} 
Let $\Mdot{1}, \ldots, \Mdot{k}$ denote the vertices of $K$ and $\bM$ be the $d \times k$ matrix with columns representing the vertices of $K$.  
We assume there are latent (hidden) points $P_{\cdot ,j}, j=1,2,\ldots ,n$ in $K$ and observed data points $A_{\cdot,j}, j=1,2,\ldots ,n$ are generated (not necessarily under any stochastic assumptions) by adding {\it displacements} $A_{\cdot,j}-P_{\cdot,j}$ respectively to $P_{\cdot,j}$. Clearly if the displacements are arbitrary, it is not possible to learn $K$ given only the observed data. So we need some bound on the displacements. 

Secondly, if all (or almost all) latent points lie in (or close to) the convex hull of a subset of $k-1$ or fewer vertices of $K$, the missing vertex cannot be learnt. To avoid this, we will assume that there is a certain $w_0$ fraction of latent points close to every vertex of $K$. 

Let \footnote{By the standard definition of spectral norm, it is easy to see that $\sigma_0^2$ is the maximum mean squared displacement in any direction.}
\begin{equation}
\label{eq:p}
\sigma_0 :=\frac{||\bP-\bA||}{\sqrt n}.
\end{equation}
We now show that the {\bf $\kolp$-Algorithm} mentioned in~\Cref{algorithmic-rsh}
has the desired properties: 
\begin{theorem}\label{LkP-Main-Thm}
Suppose $K$ is a latent poytope with $k$ vertices $M_{\cdot,1},M_{\cdot,2},\ldots ,M_{\cdot, k}$ and $\bP,\bA$ are latent points (all in $K$) and observed data respectively. Assume 
\begin{equation}\label{eq:cell}
\mbox{For all }\ell\in[k]\; ,\; C_\ell :=\{j:|P_{\cdot,j}-M_{\cdot,\ell}|\leq\frac{\sigma_0}{\sqrt{w_0}}\} \mbox{ satisfies    }
|C_\ell|\geq w_0n.
\end{equation}
Suppose $(\sqrt{\log k}/\sqrt{c_0k})\leq \delta\leq 1$ and $c_0$ is a large constant satisfying
\begin{equation}\label{1000}
\sigma_0\leq \frac{\delta^2\Delta(K)}{100c_0}\frac{\sqrt{w_0}}{\sqrt k}.
\end{equation}
Let $V$ be the $k$-dimensional SVD subspace of $\bf A$, and $\hatK$ denote the projection of $K$ on $V$. 
\begin{itemize}
    \item There is an $ \opto_{\frac{10\sigma_0}{\sqrt{w_0}\Delta}}(\hatK)$ oracle $\cal O$. 
    \item The algorithm {\bf $\kolp$ Algorithm} in~\Cref{algorithmic-rsh} outputs a set $Q$ of $k$ points such that the following condition is satisfied w.h.p.:  for every vertex $v$ of $K$, there is a point $v'$ in $Q$ with $|v-v'| \leq \delta \Delta(K)/5$. 
\end{itemize}
\end{theorem}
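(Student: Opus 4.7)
The plan is to reduce the theorem to an application of~\Cref{upper-bound-2kolp} on the projected polytope $\hatK$ inside the $k$-dimensional SVD subspace $V$, with the required optimization oracle supplied by Subset Smoothing. This reduction is natural: the algorithm lives in $V$, dimension $k$ replaces dimension $d$, and the hypothesis~\eqref{1000} is tailored so that the error bound scales appropriately.

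First, I would verify that Step~3 of the $\kolp$ Algorithm constitutes an $\opto_\varepsilon(\hatK)$ oracle with $\varepsilon := 10\sigma_0/(\sqrt{w_0}\Delta(K))$. Let $\hat{\bf A}$ and $\hat{\bf P}$ denote the projections of ${\bf A}$ and ${\bf P}$ onto $V$. Orthogonal projection is a contraction in Euclidean norm and in spectral norm, so $|\hat P_{\cdot,j}-\hat M_{\cdot,\ell}|\le |P_{\cdot,j}-M_{\cdot,\ell}|$ (hence the clustering sets $C_\ell$ from~\eqref{eq:cell} survive unchanged with the same lower bound on size) and $\hat\sigma_0 := \|\hat{\bf P}-\hat{\bf A}\|/\sqrt{n}\le \sigma_0$. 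Applying~\Cref{subset-smoothing-0} inside the ambient space $V$ to $\hat{\bf A}$, $\hat{\bf P}$, $\hatK$ then yields the oracle with the claimed error, which is precisely Step~3.

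Second, I would establish that $\hatK$ inherits (up to a factor of two) the $\delta$-well-separation of $K$. The key input is the auxiliary lemma (referenced in the introduction's sketch) stating $|\hMdot{\ell}-\Mdot{\ell}|\le O(\sigma_0/\sqrt{w_0})$ for every $\ell$. Granted this, hypothesis~\eqref{1000} gives $|\hMdot{\ell}-\Mdot{\ell}|\ll \delta\Delta(K)/\sqrt{k}$, so by the triangle inequality $\hatK$ is $(\delta/2)$-well-separated and $\Delta(\hatK)\ge \Delta(K)/2$. Third, I would invoke~\Cref{upper-bound-2kolp} inside $V$ (ambient dimension $k$) with error parameter $\varepsilon$ and separation $\delta/2$. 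The condition $\delta^2\ge c\varepsilon\sqrt{k}$ is exactly hypothesis~\eqref{1000} after substituting the value of $\varepsilon$, and $\delta\ge \sqrt{\log k}/\sqrt{ck}$ is given. The output set $Q$ of size $k$ contains, for each $\hMdot{\ell}$, a point $v'$ with $|v'-\hMdot{\ell}|\le \delta\Delta(K)/10$; triangle inequality with the bound on $|\hMdot{\ell}-\Mdot{\ell}|$ yields $|v'-\Mdot{\ell}|\le \delta\Delta(K)/5$.

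The main obstacle is the auxiliary bound $|\hMdot{\ell}-\Mdot{\ell}|\le O(\sigma_0/\sqrt{w_0})$, i.e., showing that the top-$k$ SVD subspace of ${\bf A}$ captures the vertex subspace $\lspan(\Mdot{1},\ldots,\Mdot{k})$ faithfully. The natural route combines (i) $\|{\bf A}-{\bf P}\|\le \sigma_0\sqrt{n}$ together with a Wedin-type singular-subspace perturbation bound to move from the SVD of ${\bf A}$ to the SVD of ${\bf P}$, and (ii) the cluster structure~\eqref{eq:cell}, which forces the top-$k$ SVD subspace of ${\bf P}$ to essentially coincide with $\lspan(\Mdot{1},\ldots,\Mdot{k})$ because the mass $w_0 n$ concentrated near each vertex creates a large singular value along the corresponding direction. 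The factor $\sqrt{w_0}$ enters precisely as the inverse of the smallest guaranteed cluster weight.
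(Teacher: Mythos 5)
Your high-level plan matches the paper: build an $\opto_\varepsilon(\hatK)$ oracle in the SVD subspace $V$ via Subset Smoothing, show $\hatK$ inherits well-separation up to a constant factor, invoke \Cref{upper-bound-2kolp} inside $V$, and transfer back to $K$ by the triangle inequality. But the route you propose for the key lemma, namely $|\hMdot{\ell}-\Mdot{\ell}|\le O(\sigma_0/\sqrt{w_0})$ (this is \Cref{lem:close}), has a genuine gap. You suggest a Wedin-type singular-subspace perturbation bound to align the top-$k$ SVD subspace of $\bA$ with that of $\bP$, plus an argument that the cluster structure forces the top-$k$ SVD subspace of $\bP$ to coincide with $\lspan(\Mdot{1},\ldots,\Mdot{k})$. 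Neither step is available under the stated hypotheses. Wedin requires a gap between $\sigma_k(\bP)$ and $\sigma_{k+1}(\bP)$, but nothing in the theorem guarantees $\sigma_k(\bP)$ is bounded away from zero: the vertices $\Mdot{\ell}$ can be linearly dependent, which is exactly the regime the paper's weaker convex-hull separation (as opposed to affine-hull separation) is designed to allow. Moreover, if $\mathrm{rank}(\bP)<k$ the ``top-$k$ SVD subspace of $\bP$'' is not even uniquely defined, so the coincidence claim in step~(ii) does not make sense in general.

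The paper's proof avoids subspace alignment entirely. It bounds $\|\bP-\widehat{\bP}\|$ directly using three elementary facts: $\|\widehat{\bA}-\bA\|\le\|\bA-\bP\|$ because $\bP$ has rank at most $k$ and $\widehat{\bA}$ is the best rank-$k$ approximation to $\bA$; $\|\widehat{\bA}-\widehat{\bP}\|\le\|\bA-\bP\|$ because orthogonal projection is a contraction in spectral norm; and the triangle inequality, which together give $\|\bP-\widehat{\bP}\|\le 3\sigma_0\sqrt{n}$. It then applies this operator-norm bound to the unit vector supported uniformly on $C_\ell$ (with $|C_\ell|\ge w_0 n$) and uses $|\Mdot{\ell}-P_{\cdot,C_\ell}|\le\sigma_0/\sqrt{w_0}$ from~\eqref{eq:cell} to pass from the column average to the vertex, yielding the $O(\sigma_0/\sqrt{w_0})$ bound with no singular-gap hypothesis at all. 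A minor further point: in your well-separation step you cite $\Delta(\hatK)\ge\Delta(K)/2$, but the inequality actually needed is the trivial upper bound $\Delta(\hatK)\le\Delta(K)$, since $\delta'$-well-separatedness of $\hatK$ is measured against $\Delta(\hatK)$.
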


In the Theorem above, \eqref{1000}  implies an upper bound on $\sigma_0$ of $\Delta(K)\sqrt{w_0}/(c_0\sqrt k)$ and so the  oracle $\cal O$ used by the Theorem lies in $\opt_\varepsilon(\hatK)$ for  $\varepsilon\leq 1/(c_0\sqrt k)$. Thus, we get around the lower bound result~\Cref{lower-bound-0} by working in the $k$-dimensional SVD subspace of $\bA$. In rest of this section, we prove~\Cref{LkP-Main-Thm}. We begin by giving properties of the SVD subspace, and then show that an approximate optimization oracle exists in this subspace. Finally, we apply~\Cref{upper-bound-2kolp}.

\subsection{Properties of the SVD subspace of $\bA$}
Let $V$ denote  the $k$-dimensional SVD subspace of $\bA$. We shall show that the projection of $K$ on $V$ has an   $\opto_\varepsilon(K)$ oracle for a suitable value of $\varepsilon$. Let $\hMdot{\ell}$ denote the projection of $\Mdot{\ell}$ on the SVD subspace $V$. Define $\hAdot{j}$ similarly, and let $\widehat{\bA}$ be the $d \times n$ matrix whose columns are given by $\hAdot{j}$. We now show that $\hMdot{\ell}$ and  $\Mdot{\ell}$ are close to each other. The following notation will turn out to be very useful in subsequent discussion: for a $d \times n$ matrix ${\bf B}$ and a subset $S$ of $[n]$, define 
$$ B_{\cdot, S} := \frac{1}{|S|} \sum_{j \in S} B_{\cdot, j}.$$

\begin{claim}
\label{cl:sum}
Let $S \subseteq [n]$ be a subset of indices. Then, $$|A_{\cdot, S} - P_{\cdot, S}| \leq \frac{\sigma_0 \sqrt n}{\sqrt{|S|}}. $$
\end{claim}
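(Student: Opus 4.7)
The plan is to realize the difference $A_{\cdot,S}-P_{\cdot,S}$ as a single matrix--vector product and then bound it using the definition of the spectral norm together with the definition of $\sigma_0$ in~\eqref{eq:p}. No probabilistic or combinatorial argument is needed; this is really just a bookkeeping computation, so there is no genuine obstacle to overcome.

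Concretely, let $\mathbf{1}_S \in \{0,1\}^n$ denote the indicator vector of $S$, which has Euclidean norm $\sqrt{|S|}$. By the definitions of $A_{\cdot,S}$ and $P_{\cdot,S}$ given just before the statement, one has
\begin{equation*}
A_{\cdot,S}-P_{\cdot,S} \;=\; \frac{1}{|S|}\sum_{j\in S}\bigl(A_{\cdot,j}-P_{\cdot,j}\bigr) \;=\; \frac{1}{|S|}(\bA-\bP)\,\mathbf{1}_S.
\end{equation*}
Taking Euclidean norms and using the operator-norm bound $|Mx|\le \|M\|\,|x|$ for any matrix $M$ and vector $x$, followed by the definition $\sigma_0=\|\bP-\bA\|/\sqrt n$ from~\eqref{eq:p}, I would conclude
\begin{equation*}
\bigl|A_{\cdot,S}-P_{\cdot,S}\bigr| \;\le\; \frac{\|\bA-\bP\|\cdot|\mathbf{1}_S|}{|S|} \;=\; \frac{\sigma_0\sqrt n \cdot \sqrt{|S|}}{|S|} \;=\; \frac{\sigma_0\sqrt n}{\sqrt{|S|}},
\end{equation*}
which is exactly the claimed bound.

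The only thing worth checking carefully is the matching of the spectral-norm convention used in~\eqref{eq:p} with the standard $\ell_2\to\ell_2$ operator norm, so that the inequality $|(\bA-\bP)\mathbf{1}_S|\le\|\bA-\bP\|\,|\mathbf{1}_S|$ is genuinely the spectral-norm inequality. Since the paper's footnote attached to~\eqref{eq:p} explicitly interprets $\sigma_0^2$ as the maximum mean squared displacement in any direction, this is precisely the $\ell_2\to\ell_2$ operator norm, so the plan goes through verbatim.
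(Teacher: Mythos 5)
Your proof is correct and matches the paper's argument essentially verbatim: the paper takes $v = \mathbf{1}_S/\sqrt{|S|}$ (a unit vector in $\Re^n$, despite a typo writing $\Re^d$) and applies the spectral-norm bound from~\eqref{eq:p}, whereas you use the unnormalized indicator $\mathbf{1}_S$ and divide by $|S|$ — an immaterial rescaling of the same identity.
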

\begin{proof}
Define a unit vector $v \in \Re^d$ as follows: if $j \in S$, we define $v_j = \frac{1}{\sqrt{|S|}}$, 0 otherwise. Now, observe that 
$$|A_{\cdot, S} - P_{\cdot, S}| = \frac{1}{\sqrt{|S|}} \cdot |(\bA - \bP) \cdot v|. $$
Inequality~\eqref{eq:p} now implies that the RHS above is at most 
$\frac{\sigma_0 \sqrt{n}}{\sqrt{|S|}}.$
\end{proof}

\begin{lemma}
\label{lem:close}
For all $\ell\in [k]$, 
$|\Mdot{\ell}-\hMdot{\ell}|\leq 5 \sigma_0/\sqrt{w_0} \leq \frac{\delta^2 \Delta(K)}{c_0}$.
\end{lemma}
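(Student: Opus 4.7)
The plan is to show that $\Mdot{\ell}$ lies close to the SVD subspace $V$, using $V$'s near-optimality as a rank-$k$ approximation of $\bA$ together with the fact that the true latent matrix $\bP$ has rank at most $k$ (since its columns lie in a $k$-vertex polytope). The key bridge between $\Mdot{\ell}$ and $V$ will be the empirical centroid $P_{\cdot, C_\ell}$, which by hypothesis is close to $\Mdot{\ell}$ in $\Re^d$, and by spectral-norm arguments close to its own projection onto $V$.

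First I would establish three elementary bounds. \emph{(i)} Since every $\Pdot{j}$ with $j \in C_\ell$ satisfies $|\Pdot{j}-\Mdot{\ell}| \leq \sigma_0/\sqrt{w_0}$, averaging gives $|P_{\cdot,C_\ell}-\Mdot{\ell}| \leq \sigma_0/\sqrt{w_0}$. \emph{(ii)} Since all $\Pdot{j}$ lie in $K$ and $K$ spans a subspace of dimension at most $k$, the matrix $\bP$ has rank at most $k$. Therefore $V$, being the best rank-$k$ approximation to $\bA$ in spectral norm, satisfies
\[
\|\bA - \Pi_V \bA\| \;\leq\; \|\bA - \bP\| \;\leq\; \sigma_0\sqrt{n}.
\]
\emph{(iii)} Combining this with $\|\Pi_V(\bA-\bP)\| \leq \|\bA-\bP\| \leq \sigma_0\sqrt n$ and the triangle inequality yields
\[
\|\bP - \Pi_V \bP\| \;\leq\; \|\bA-\bP\| + \|\bA-\Pi_V \bA\| + \|\Pi_V(\bA-\bP)\| \;\leq\; 3\sigma_0\sqrt{n}.
\]

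Next, I would turn the spectral bound on $\bP-\Pi_V \bP$ into a pointwise bound on $P_{\cdot,C_\ell}$ using the same trick as in \Cref{cl:sum}: let $v \in \Re^n$ be the unit vector with entries $1/\sqrt{|C_\ell|}$ on $C_\ell$ and $0$ elsewhere, so $(\bP-\Pi_V \bP)v = \sqrt{|C_\ell|}\,(P_{\cdot,C_\ell} - \Pi_V P_{\cdot,C_\ell})$. Thus
\[
|P_{\cdot,C_\ell} - \Pi_V P_{\cdot,C_\ell}| \;\leq\; \frac{\|\bP-\Pi_V \bP\|}{\sqrt{|C_\ell|}} \;\leq\; \frac{3\sigma_0\sqrt n}{\sqrt{w_0 n}} \;=\; \frac{3\sigma_0}{\sqrt{w_0}}.
\]

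Finally, since $\hMdot{\ell}=\Pi_V \Mdot{\ell}$ and $\Pi_V$ is a contraction,
\[
|\Mdot{\ell} - \hMdot{\ell}| \;\leq\; |\Mdot{\ell}-P_{\cdot,C_\ell}| + |P_{\cdot,C_\ell}-\Pi_V P_{\cdot,C_\ell}| + |\Pi_V P_{\cdot,C_\ell} - \Pi_V \Mdot{\ell}| \;\leq\; \tfrac{\sigma_0}{\sqrt{w_0}} + \tfrac{3\sigma_0}{\sqrt{w_0}} + \tfrac{\sigma_0}{\sqrt{w_0}} = \tfrac{5\sigma_0}{\sqrt{w_0}}.
\]
The second inequality $5\sigma_0/\sqrt{w_0} \leq \delta^2\Delta(K)/c_0$ then follows directly from the hypothesis \eqref{1000} (which gives $5\sigma_0/\sqrt{w_0} \leq \delta^2\Delta(K)/(20 c_0 \sqrt k) \leq \delta^2\Delta(K)/c_0$).

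I do not expect any genuine obstacle here; the only delicate point is recognizing that $V$ being the spectral best rank-$k$ approximation lets us compare $\bA$ against $\bP$ as the competitor (rather than against some other rank-$k$ matrix) to get the tight $\sigma_0\sqrt n$ bound, and then propagating this spectral bound to a single centroid via the unit vector $v$ supported on $C_\ell$. Everything else is triangle inequality.
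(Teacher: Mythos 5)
Your proof is correct and matches the paper's argument essentially step for step: the same triangle inequality through $\widehat{\bA}$ to bound $\|\bP-\widehat{\bP}\|\leq 3\sigma_0\sqrt n$, the same unit vector supported on $C_\ell$ to convert the spectral bound into a pointwise bound on the centroid, and the same three-term triangle inequality $\Mdot{\ell}\to P_{\cdot,C_\ell}\to \Pi_V P_{\cdot,C_\ell}\to \hMdot{\ell}$. The only difference is cosmetic (the paper groups the first and third legs into a single $2\sigma_0/\sqrt{w_0}$ term).
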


\begin{proof}
We have $||\widehat{\bA}-\bA||\leq ||\bA-\bP||$ since, $\widehat{\bA}$ is the best rank $k$ approximation to $\bA$ in terms of the spectral norm and since each column of $\bP$ being a convex combination of the columns $\{ \Mdot{\cdot,\ell},\ell\in [k]\}$,
$\bP$ has rank at most $k$. We also have $||\widehat{\bA}-\widehat{\bP}||\leq ||\bA-\bP||$ since projections cannot increase length. Using these inequalities and the triangle inequality, we get: 
$$||\bP-\widehat{\bP}||\leq ||\bP-\bA||+||\bA-\widehat{\bA}||+||\widehat{\bA}-\widehat{\bP}||\leq 3 \cdot ||\bA-\bP|| \stackrel{\eqref{eq:p}}{\leq} 3\sigma_0\sqrt n.$$ 

Let $C_\ell$ be as in~\eqref{eq:cell}.  
 Now, define $w$ to be the unit vector with $w_j=1/\sqrt{|C_\ell|}, \forall j\in C_\ell$, and $w_j=0, \forall j\notin C_\ell$. We see that 
\begin{align*}
& |\Mdot{\ell}-\hMdot{\ell}|\leq \frac{|(\bP-\widehat{\bP})w|}{|C_\ell|}+\frac{2\sigma_0}{\sqrt{w_0}} 
\leq \frac{||\bP -\widehat {\bP }||}{\sqrt{|C_\ell |}} +\frac{2\sigma_0}{\sqrt{w_0}},
\end{align*}
 which proves the Lemma (using $|C_\ell| \geq w_0 n$). The second inequality in the Lemma follows from~\eqref{1000}.
\end{proof}

Let $\hatK$ be the projection of $K$ on $V$. We now show that $\hatK$ is also $\delta'$-well-separated for a value $\delta'$ close to $\delta$. 
\begin{lemma}
    \label{lem:hatK}
    The vertices of $\hatK$ are given by  $\{\hMdot{\ell}: \ell \in [k]\}$. Further, $ \hatK$ is $\delta'$-well-separated, where $\delta'= \delta \left( 1 - \frac{1}{100} \right)$. 
\end{lemma}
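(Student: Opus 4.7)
The plan is to combine Lemma~\ref{lem:close} (which gives $|\Mdot{\ell} - \hMdot{\ell}| \leq \delta^2 \Delta(K)/c_0$ for each $\ell$) with the hypothesis that $K$ is $\delta$-well-separated, via a direct triangle-inequality computation. Throughout I will also use the elementary observation that orthogonal projection onto $V$ is a contractive linear map, so that $\Delta(\hatK) \leq \Delta(K)$ and $\hatK = \CH(\{\hMdot{\ell}: \ell \in [k]\})$.

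First I would note that because projection is linear, $\hatK$ is automatically the convex hull of the $k$ points $\{\hMdot{\ell}\}$; hence its vertex set is contained in this finite list, and to identify the two claims it suffices to prove the well-separatedness lower bound (which in particular forces each $\hMdot{\ell}$ to lie strictly outside the convex hull of the other $\hMdot{\ell'}$'s, hence to be an actual vertex). Next, fix $\ell$ and any convex combination $y=\sum_{\ell'\neq \ell}\lambda_{\ell'}\hMdot{\ell'}$; by the triangle inequality,
\begin{align*}
|\hMdot{\ell}-y|\;\geq\;\left|\Mdot{\ell}-\sum_{\ell'\neq \ell}\lambda_{\ell'}\Mdot{\ell'}\right|\;-\;|\Mdot{\ell}-\hMdot{\ell}|\;-\;\sum_{\ell'\neq \ell}\lambda_{\ell'}\,|\Mdot{\ell'}-\hMdot{\ell'}|.
\end{align*}
The first term is at least $\delta \Delta(K)$ by the $\delta$-well-separatedness of $K$, and each of the remaining two terms is at most $\delta^2 \Delta(K)/c_0$ by Lemma~\ref{lem:close}. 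So
\begin{align*}
\dist\bigl(\hMdot{\ell},\,\CH(\{\hMdot{\ell'}:\ell'\neq \ell\})\bigr)\;\geq\;\delta \Delta(K)\,\bigl(1-2\delta/c_0\bigr).
\end{align*}

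Finally, using $\Delta(\hatK) \leq \Delta(K)$ I would bound the right-hand side below by $\delta \Delta(\hatK)(1-2\delta/c_0)$, which is at least $\delta(1-\tfrac{1}{100})\Delta(\hatK)=\delta'\Delta(\hatK)$ provided $c_0 \geq 200\delta$; this last inequality holds because $\delta \leq 1$ and $c_0$ is taken to be a large constant in the hypothesis of Theorem~\ref{LkP-Main-Thm}. There is no real obstacle in the argument: the only subtlety is remembering to measure well-separatedness against $\Delta(\hatK)$ rather than $\Delta(K)$, which costs an extra contractive step that is easily absorbed into the slack $1/100$.
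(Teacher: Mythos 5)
Your proof is correct and follows essentially the same route as the paper's: identify $\hatK$ as the convex hull of the projected vertices, invoke Lemma~\ref{lem:close} to bound the per-vertex projection error by $\delta^2\Delta(K)/c_0$, run the same triangle-inequality chain against a convex combination of the other projected vertices to get the lower bound $(\delta - 2\delta^2/c_0)\Delta(K)$, and finally use contractivity of projection to pass from $\Delta(K)$ to $\Delta(\hatK)$ and absorb the slack into the $1/100$ factor via $c_0 \geq 200$. There are no gaps; the only cosmetic difference is that you make the $\Delta(K) \geq \Delta(\hatK)$ step and the $c_0 \geq 200\delta$ threshold fully explicit, whereas the paper states them more tersely.
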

\begin{proof}
    It is clear that the vertex set of $\hatK$ is s subset of $S := \{\hMdot{\ell}: \ell \in [k]\}$. Thus, we need to show that none of the points in $S$ can be written as a convex combination of rest of the points in $S$. This will follow from the fact that $\hatK$ is $\delta'$-well-separated, and so it suffices to prove this statement. Consider a point $\hMdot{\ell} \in S$ and a point $p:= \sum_{\ell' \in [k], \ell' \neq \ell} \lambda_{\ell'} \hMdot{\ell'},$ which is in $\CH(S \setminus\{\hMdot{\ell}\}).$ Then 
    \begin{align*}
        |p-\hMdot{\ell}| & = \left| \sum_{\ell' \neq \ell} \lambda_{\ell'} (\hMdot{\ell'} - \hMdot{\ell} ) \right| \\
        & \geq \left| \sum_{\ell' \neq \ell} \lambda_{\ell'} (\Mdot{\ell'} - \Mdot{\ell} ) \right| - \sum_{\ell' \neq \ell} \lambda_{\ell'} |\Mdot{\ell'}-\hMdot{\ell'}| - 
       \sum_{\ell' \neq \ell} \lambda_{\ell'} |\Mdot{\ell} - \hMdot{\ell}| \\
        & = \left| \sum_{\ell' \neq \ell} \lambda_{\ell'} \Mdot{\ell'} - \Mdot{\ell} \right| - \frac{2\delta^2 \Delta(K)}{c_0} \\
        & \geq (\delta - \frac{2\delta^2}{c_0}) \Delta(K) \geq \delta' \Delta(K)
    \end{align*}
    where the second last line uses~\Cref{lem:close} and last line follows from  the fact that $K$ is $\delta$-well-separated. Note that $\Delta(\hatK) \leq \Delta(K).$ Thus it follows that $\hatK$ is is $\delta'$-well-separated.
\end{proof}


We would now like to use~\Cref{upper-bound-2} on $\hatK$ with $\varepsilon = \frac{\sigma_0}{\sqrt{w_0}\Delta(K)}$ and $\delta'=\delta$. Indeed the condition~\eqref{1000} along with~\Cref{lem:hatK} show that the conditions of~\Cref{upper-bound-2} hold provided we are able to exhibit an efficient $\opto_\varepsilon(\hatK)$ oracle. 

\subsection{Construction of the $\opto_\varepsilon(\hatK)$ oracle}

We now describe the construction for the   $\opto_\varepsilon(\hatK)$ oracle in the subspace $V$, where $\varepsilon = \frac{10 \sigma_0}{\Delta(K) \sqrt{w_0}}$. Let $u$ be a unit vector in $V$. The procedure (referred as {\bf Subset Smoothing Algorithm} in~\Cref{algorithmic-rsh}) is given in~\Cref{algo:center}  -- we project the points in $\hatA$ on $u$ and consider the  $w_0 n$ with the highest projection along $u$. Finally, we output the average of these points.

\begin{algorithm}[H]
  \caption{Orcale in $\opto_\varepsilon(\hatK)$}
  \label{algo:center}
    {\bf Input:} unit vector $u \in V$. \;
       Let $R(u)$ be the index set of the columns $\hAdot{j}$ with the $w_0 n$ highest values of $u \cdot \hAdot{j}$. \;
      {\bf Output:} $\hAdot{R(u)}$ \;
\end{algorithm}

We now show that this algorithm has the desired properties. Let $R(u)$ be the index set as in~\Cref{algo:center}. The following is an easy consequence of~\Cref{cl:sum}. 
\begin{claim}
\label{cl:eps1}
    $\dst(\hAdot{R(u)}, \hatK) \leq \varepsilon\Delta(K) . $
\end{claim}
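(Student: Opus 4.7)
The plan is to show the claim by a direct chain of inequalities, using only Claim \ref{cl:sum} and the fact that orthogonal projection onto the SVD subspace $V$ is a contraction.

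First I would observe that the quantity $\hAdot{R(u)} = \frac{1}{|R(u)|}\sum_{j \in R(u)} \hAdot{j}$ is exactly the projection of $A_{\cdot, R(u)}$ onto the SVD subspace $V$, because projection onto $V$ is a linear map and averaging is linear. In particular, if we let $\Pi_V$ denote orthogonal projection onto $V$, then $\hAdot{R(u)} = \Pi_V(A_{\cdot, R(u)})$.

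Next I would apply Claim \ref{cl:sum} with $S = R(u)$; since $|R(u)| = w_0 n$, this yields
\[
|A_{\cdot, R(u)} - P_{\cdot, R(u)}| \;\leq\; \frac{\sigma_0\sqrt{n}}{\sqrt{w_0 n}} \;=\; \frac{\sigma_0}{\sqrt{w_0}}.
\]
Because $\Pi_V$ is a contraction, applying it to both sides preserves the inequality, giving
\[
\bigl|\hAdot{R(u)} - \Pi_V(P_{\cdot, R(u)})\bigr| \;\leq\; \frac{\sigma_0}{\sqrt{w_0}}.
\]

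Finally, each $P_{\cdot, j}$ lies in $K$ by hypothesis, so the convex combination $P_{\cdot, R(u)}$ also lies in $K$. Its projection $\Pi_V(P_{\cdot, R(u)})$ therefore lies in $\hatK$, and we conclude
\[
\dst(\hAdot{R(u)}, \hatK) \;\leq\; \bigl|\hAdot{R(u)} - \Pi_V(P_{\cdot, R(u)})\bigr| \;\leq\; \frac{\sigma_0}{\sqrt{w_0}} \;\leq\; \frac{10\sigma_0}{\sqrt{w_0}} \;=\; \varepsilon\,\Delta(K),
\]
as required. There is no real obstacle here: the claim is essentially a repackaging of Claim \ref{cl:sum} combined with the elementary fact that projection does not increase distance, and the choice $\varepsilon = 10\sigma_0/(\Delta(K)\sqrt{w_0})$ is deliberately loose (the factor of $10$ will presumably be needed for the companion inequality bounding $u\cdot \hAdot{R(u)}$ from below, which handles the second condition in the definition of $\opto_\varepsilon$).
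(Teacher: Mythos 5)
Your proof is correct and follows essentially the same route as the paper's: apply Claim~\ref{cl:sum} with $S = R(u)$, use that orthogonal projection onto $V$ is a contraction, and observe that the projection of $P_{\cdot,R(u)} \in K$ lies in $\hatK$. Your version is slightly more explicit about the contraction step (the paper writes it as a one-word "Therefore") and about the slack factor of 10, but the argument is identical.
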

\begin{proof}
    \Cref{cl:sum} shows that $|\Adot{R(u)} - \Pdot{R(u)}| \leq \varepsilon\Delta(K) .$ Therefore, 
    $|\hAdot{R(u)} - \hPdot{R(u)}| \leq \varepsilon\Delta(K) $. The desired result now follows because
    $\hPdot{R(u)} \in \hatK.$
\end{proof}

Let $\ell \in [k]$ be the index for which $\hMdot{\ell} \cdot u$ is maximized. 
\begin{claim}
    \label{cl:eps2}
    $\hAdot{R(u)} \cdot u \geq \hMdot{\ell} \cdot u - \varepsilon\Delta(K) .$
\end{claim}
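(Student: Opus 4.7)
The plan is to exhibit a specific set of $w_0 n$ indices whose average projection onto $u$ is close to $\hMdot{\ell}\cdot u$, and then exploit the extremal property of $R(u)$.

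First, I would pick an arbitrary subset $S\subseteq C_\ell$ of size exactly $w_0 n$ (which exists since $|C_\ell|\geq w_0 n$ by~\eqref{eq:cell}). By definition of $R(u)$ as the indices of the $w_0 n$ largest values of $u\cdot\hAdot{j}$, we have $\hAdot{R(u)}\cdot u \geq \hAdot{S}\cdot u$. Since $u\in V$ and $\hAdot{j}$ is the projection of $\Adot{j}$ onto $V$, we have $u\cdot\hAdot{j} = u\cdot\Adot{j}$ for every $j$, and hence $\hAdot{S}\cdot u = \Adot{S}\cdot u$. Similarly, $u\cdot\Mdot{\ell} = u\cdot\hMdot{\ell}$.

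Next, I would relate $\Adot{S}\cdot u$ to $\Pdot{S}\cdot u$ using~\Cref{cl:sum}: that result gives $|\Adot{S}-\Pdot{S}|\leq \sigma_0\sqrt{n}/\sqrt{|S|} = \sigma_0/\sqrt{w_0}$, so by Cauchy--Schwarz,
\[
\Adot{S}\cdot u \;\geq\; \Pdot{S}\cdot u \;-\; \sigma_0/\sqrt{w_0}.
\]
Finally, I would use the definition of $C_\ell$: for every $j\in S\subseteq C_\ell$, $|P_{\cdot,j}-\Mdot{\ell}|\leq \sigma_0/\sqrt{w_0}$, so averaging and applying Cauchy--Schwarz again gives
\[
\Pdot{S}\cdot u \;\geq\; \Mdot{\ell}\cdot u \;-\; \sigma_0/\sqrt{w_0} \;=\; \hMdot{\ell}\cdot u \;-\; \sigma_0/\sqrt{w_0}.
\]

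Chaining these inequalities together yields $\hAdot{R(u)}\cdot u \geq \hMdot{\ell}\cdot u - 2\sigma_0/\sqrt{w_0}$, and since $\varepsilon\Delta(K) = 10\sigma_0/\sqrt{w_0}\geq 2\sigma_0/\sqrt{w_0}$, the claim follows. There is no real obstacle here; the only thing to be careful about is keeping track of which inner products take place in $V$ versus in $\Re^d$, and the observation that $u\in V$ allows us to swap $\hAdot{j}$ for $\Adot{j}$ (and $\hMdot{\ell}$ for $\Mdot{\ell}$) freely inside $u\cdot(\,\cdot\,)$.
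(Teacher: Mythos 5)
Your proof is correct and follows essentially the same approach as the paper: use a subset of $C_\ell$, compare $\hAdot{R(u)}\cdot u$ with the average over that subset via the extremal property of $R(u)$, and chain two $\sigma_0/\sqrt{w_0}$ errors using \Cref{cl:sum} and the defining property of $C_\ell$. Your choice of $S \subseteq C_\ell$ with $|S| = w_0 n$ exactly is a minor (and slightly cleaner) variant — the paper compares directly against $\hAdot{C_\ell}$, implicitly relying on the fact that the mean of the top $w_0 n$ values dominates the mean over any set of size $\geq w_0 n$.
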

\begin{proof}
    Let $C_\ell$ be the index set specified by~\eqref{eq:cell}. It suffices to show that 
    $$ \hAdot{C_\ell} \cdot u \geq \hMdot{\ell} \cdot u - \varepsilon\Delta(K) .$$ 
    Now, 
    \begin{align*}
    \hMdot{\ell} \cdot u - \hAdot{C_\ell} \cdot u & \leq |\hMdot{\ell} - \hAdot{C_\ell}| \\
    & \leq |\hMdot{\ell} - \hPdot{C_\ell}| + |\hPdot{C_\ell} - \hAdot{C_\ell}| \\
    & \leq |\Mdot{\ell} - \Pdot{C_\ell}| + |\Pdot{C_\ell} - \Adot{C_\ell}| \\
    & \leq \varepsilon \Delta(K),
    \end{align*}
where the last inequality follows from~\eqref{eq:cell} and~\Cref{cl:sum}. 
\end{proof}

The above two results show that~\Cref{algo:center} yields an  $\opto_\varepsilon(\hatK)$ oracle.  We would now like to apply~\Cref{upper-bound-2kolp} to the polytope $\hatK$ with parameters $\varepsilon$ and $\delta' = \delta(1-1/100).$~\Cref{lem:hatK} shows that $\hatK$ is $\delta'$-well-separated. We now need to check that the parameters $\varepsilon$ and $\delta'$ satisfy the following conditions needed in the statement of~\Cref{upper-bound-2kolp} (recall that $\hatK$ is a polytope in $\Re^k$): (i)  $\delta^2 \geq c \varepsilon \sqrt{k}$, and (ii) $\delta \geq \frac{\sqrt{\log k}}{\sqrt{c} k}.$
The second condition is already an assumption in the statement of~\Cref{LkP-Main-Thm}, and the first condition follows from~\eqref{1000} and the fact that $\varepsilon = \frac{10 \sigma_0}{\Delta(K) \sqrt{w_0}}$. Applying~\Cref{upper-bound-2kolp}, we get a set of $k$ points $Q$, such that for each vertex $v'' $ of $\hatK$, there is a point $v' \in Q$ with $|v''-v'| \leq \delta \Delta(K')/10$. Applying~\Cref{lem:close} proves~\Cref{LkP-Main-Thm}.

\section{Open Problems}
\label{sec:open}
We now mention some problems that remain open in our work:
\begin{itemize}
    \item[(i)] In the statement of $\rsh$,
    the success probability of the desired event is $O\left(1/k^{O(1/\delta^2)}\right)$. Can we 
    improve the exponential dependence of the success probability on $1/\delta$?
\item[(ii)] Prove an analog (under suitable assumptions) of~\Cref{thm-sep-to-opt} for other reductions among oracles. Of particular interest is a reduction from an Optimization oracle to a Separation Oracle.
\item[(iii)]\Cref{upper-bound-3} on the $\Haus$ problem returns exponentially many points whose convex hull approximates $K$. Can this be improved, either via a an improvement mentioned in the first open problem above, or alternatively, feeding the exponentially many points to the algorithm of~\cite{BlumHR19}? 
\end{itemize}

{\small
\bibliographystyle{alpha}
\bibliography{ref}
}

\appendix
\section*{Appendix}
\subsection*{{\bf A.} Some Examples}
\label{sec:examples}
The first example shows that the factor $1/\sqrt{m}$ in the statement of $\rsh$ (\Cref{rsh-0}) cannot be improved, even for a simple polytope $K$ consisting of just one line segment. 

\vspace*{0.1 in}
\noindent
{\bf Example 1:} Suppose  $K$  is a polytope in $\Re^d$ with just two vertices $\Delta$ distance apart and $V = \Re^d, \delta=1$. It is easy to verify from standard results on random projection on a line that the additive term in inequality~\eqref{eq:rsh} is tight up to constant factors.

\vspace*{0.1 in}
The next example shows that the success probability of the desired event~\eqref{eq:rsh} in~\Cref{rsh-0} needs to depend on $k$, the number of vertices of the polytope $K$. 

\vspace*{0.1 in}
\noindent
{\bf Example 2:} Consider the Euclidean space $\Re^d$ and the $d-1$ dimensional sphere $K:=\{x \in \Re^d: |x|\leq 1, x_1=0\}$. Observe that the diameter $\dia$ of $K$ is 1. 
Let $a$ denote the point $(1,0,0,\ldots ,0)$. It is at distance 1 from $K$, and so the parameter $\delta$ in the statement of~\Cref{rsh-0} can be set to 1.  But, with high probability,  a random unit length vector $u$ has $u\cdot a\approx \frac{c}{\sqrt d}$ for some constant $c$,  whereas the maximum of  $v\cdot x$ over all points $x$ in the sphere $K$ is about 1. Therefore the probability of the event~\eqref{eq:rsh} is exponentially small in $d$.
\end{document}